\setlist[enumerate]{noitemsep, nolistsep}
\setlist[itemize]{noitemsep, nolistsep}
\theoremstyle{plain}
\newtheorem{theorem}{Theorem}[section]
\newtheorem{lemma}[theorem]{Lemma}
\theoremstyle{definition}
\newtheorem{assumption}[theorem]{Assumption}
\newtheorem{example}[theorem]{Example}
\theoremstyle{remark}
\newtheorem{remark}[theorem]{Remark}
\definecolor{oxgreen}{RGB}{105,145,59}
\definecolor{oxred}{RGB}{190,15,52}
\definecolor{oxlighterblue}{HTML}{1a385a}
\newenvironment{fminipage}[1]%
    {\begin{Sbox}\begin{minipage}{#1}}%
  {\end{minipage}\end{Sbox}\fbox{\TheSbox}}
\Crefname{assumption}{Assumption}{Assumptions}
\newif\ifdraft
\renewcommand{\comm@todo@mpar}[1]{}
\def\divider{%
  \leavevmode\leaders\hrule height 0.6ex depth \dimexpr0.4pt-0.6ex\hfill%
  \kern0pt%
}
\newcommand\defeq{\coloneqq}
\newcommand\Real{\mathbb R}
\newcommand\Realp{\Real_{>0}}
\newcommand\Realnn{\Real_{\geq 0}}
\newcommand\compi\rightsquigarrow
\newcommand\csfc\rightsquigarrow
\newcommand\nat{\mathbb N}
\newcommand\sample[1]{\mathbf{sample}\,#1}
\newcommand\ifc[3]{\mathbf{if }\,#1 <0\,\mathbf{ then }\,#2\,\mathbf{ else }\,#3}
\newcommand\smooth{\sigma_\eta}
\newcommand\observe[2]{\mathbf{observe}\,#1\,\mathbf{ from }\,#2}
\newcommand\logpdf{\mathrm{logpdf}}
\newcommand\dist{\mathcal D}
\newcommand\mdist{\mathbf{\dist}}
\newcommand\nd{\mathcal N}
\newcommand\boundary\partial
\newcommand\interior\mathring
\newcommand\unif{\xrightarrow{\mathrm{unif.}}}
\newcommand\Jac{\mathbf{J}}
\newcommand{\tif}{\text{if }}
\newcommand{\tow}{\text{otherwise}}
\newcommand{\E}{\mathbb{E}}
 \DeclareMathOperator{\ELBO}{ELBO}
\newcommand*\diff{\mathop{}\!\mathrm{d}}
\newcommand\para{{\boldsymbol\theta}}
\newcommand\parasp{{\boldsymbol\Theta}}
\newcommand\lat\tr%{{\mathbf z}}
\newcommand\repx{\mathbf z}%{{\boldsymbol\alpha}}
\newcommand\rep{{\boldsymbol\phi}}
\newcommand\tr{\mathbf s}
\newcommand\from{:}
\newcommand\pto\rightharpoonup
\newcommand\pop{\mathrm{Op}}
\newcommand\sfct{\mathrm{Expr}}
\newcommand\sfcgs{\mathrm{SExpr}}%\sfct^{\mathrm{g.s.}}}
\newcommand\sfcae{\mathrm{SGuard}}%F^{\mathrm{a.e.}}}
\newcommand\relu{\mathrm{ReLU}}
\newcommand\sem[1]{\llbracket #1\rrbracket}
\newcommand\sema[1]{\llbracket #1\rrbracket_{\eta}}
\newcommand\lyy{\hbox{\textsc{Lyy18} }}
\definecolor{oxblue}{RGB}{0,33,71}
\newcommand\suppd{\mathbb S}
\begin{document}

% If your paper is accepted and the title of your paper is very long,
% the style will print as headings an error message. Use the following
% command to supply a shorter title of your paper so that it can be
% used as headings.
%
\runningtitle{DSGD: Fast \& Convergent SGD for Non-Differentiable Models via Reparameterisation and Smoothing}

% If your paper is accepted and the number of authors is large, the
% style will print as headings an error message. Use the following
% command to supply a shorter version of the authors names so that
% they can be used as headings (for example, use only the surnames)
%
%\runningauthor{Surname 1, Surname 2, Surname 3, ...., Surname n}

\twocolumn[
  \aistatstitle{Diagonalisation SGD: Fast \& Convergent SGD\\for Non-Differentiable Models\\via Reparameterisation and Smoothing}
  \aistatsauthor{ Dominik Wagner \And Basim Khajwal \And  C.-H. Luke Ong }
  \aistatsaddress{ NTU Singapore \And  Jane Street \And NTU Singapore } 
]

\begin{abstract}
  It is well-known that the reparameterisation gradient estimator, which exhibits low variance in practice, is biased for non-differentiable models. This may compromise correctness of gradient-based optimisation methods such as stochastic gradient descent (SGD).
  We introduce a simple syntactic framework to define non-differentiable functions piecewisely and present a systematic approach to obtain smoothings for which the reparameterisation gradient estimator is unbiased.
  Our main contribution is a novel variant of SGD, \emph{Diagonalisation Stochastic Gradient Descent}, which progressively enhances the accuracy of the smoothed approximation \emph{during} optimisation, and we prove convergence to stationary points of the \emph{unsmoothed} (original) objective.
  Our empirical evaluation reveals benefits over the state of the art: our approach is simple, fast, stable and attains orders of magnitude reduction in work-normalised variance.
\end{abstract}

% !TEX root = ../main.tex

\section{INTRODUCTION}

In this paper we investigate stochastic optimisation problems of the form
\begin{align}
  \label{eq:info}
  \text{argmin}_\para\,\E_{\repx\sim\mdist_\para}[f_\para(\repx)]
\end{align}
which have a wide array of applications, ranging from variational inference and reinforcement learning to queuing theory and portfolio design
\citep{MRFM20,BKM17,ZBKM19,SB98}.

We are interested in scenarios in which $f_\para$ is directly expressed in a programming language. Owing to the presence of if-statements, which arise naturally when modelling real-world problems (see \cref{sec:eval}), $f_\para$ may not be continuous, let alone differentiable.

In variational inference, Bayesian inference is framed as an optimisation problem and $f_\para$ is the \emph{evidence lower bound} (ELBO) $\log p(\mathbf x,\repx)-\log q_\para(\repx)$, where $p$ is the \emph{model} and $q_\para$ is a \emph{variational approximation}.
Our prime motivation is the advancement of variational inference for probabilistic programming\footnote{see e.g.\ \citep{MPYW18,BKS20} for introductions}---a new programming paradigm to pose and automatically solve Bayesian inference problems.

% Our ultimate goal is to target languages for constructing applications in numerical analysis and computational science such as Julia \cite{Julia-2017} %\footnote{\url{https://julialang.org/}}
% or in Bayesian statistics and probabilistic programming such as Stan~\cite{stan} and Pyro~\cite{pyro}.
% Also note that the distribution $q(\lat)$ (w.r.t.\ which the expectation is taken) is independent of the parameters $\theta$.

% Then $f$ is the so-called \emph{instantaneous ELBO}, $\log p(\mathbf x,\repx)-\log q_\para(\repx)$, and $p$ is called the \emph{model} and $q_\para$ the reparameterised \emph{guide}, a member of the variational family.

\paragraph{Gradient Based Optimisation.}
In practice the standard method to solve optimisation problems of the form \eqref{eq:info} are variants  of \emph{Stochastic Gradient Descent (SGD)}. Since the objective function is in general not convex, we cannot hope to always find global optima and
we seek \emph{stationary points} instead, where the gradient w.r.t.~the parameters $\para$ vanishes \citep{RM51}.

% Unfortunately, since $\E_{\lat\sim q(\lat)}[f(\para,\lat)]$ may not be convex, we cannot hope to always find global optima.
% We seek instead \emph{stationary points}, where the gradient w.r.t.~the parameters $\para$ vanishes.

A crucial ingredient for \emph{fast} convergence to a \emph{correct} stationary point is an estimator of gradients of the objective function $\E_{\repx\sim\mdist_\para}[f_\para(\repx)]$ which is both unbiased and has low variance.

% The \emph{log-derivative trick} can be used to derive an unbiased estimate, which is known as 
The \emph{Score} or \emph{REINFORCE} estimator \citep{RGB14,WW13,DBLP:conf/icml/MnihG14} makes little assumptions about $f_\para$ but it frequently suffers from high variance resulting in suboptimal results or slow/unstable convergence.

An alternative approach is the \emph{reparameterisation} or \emph{pathwise} gradient estimator.
The idea is to reparameterise the latent variables $\repx$ in terms of a known base distribution (entropy source) via a diffeomorphic transformation $\rep_\para$ (such as a location-scale transformation or cumulative distribution function).
E.g.~if $\mdist_\para(z)$ is a Gaussian $\mathcal{N}(z \mid \mu, \sigma^2)$ with $\para = \{\mu,\sigma^2\}$ then the location-scale transformation using the standard normal as the base gives rise to the reparameterisation
\[
z \sim \mathcal{N}(z \mid \mu, \sigma^2)
\iff
z = \mu + \sigma \, {s}, \quad s \sim \nd(0,1)
\]
In general, $\E_{\repx\sim \mdist_\para}[f_\para(\repx)]=\E_{\lat\sim\mdist}[f_\para(\rep_\para(\lat))]$ and its gradient can be estimated by \citep{MRFM20}:
\[
  \nabla_\para\,f_\para(\rep_\para(\lat))\qquad\lat\sim\mdist
\]
It is folklore that the reparameterisation estimator typically exhibits significantly lower variance in practice than the score estimator (e.g.\ \cite{MRFM20,DBLP:conf/icml/RezendeMW14,F06,SHWA15,XQKS19,LYY18}).
The reasons for this phenomenon are still poorly understood and examples do exist where score has lower variance than the reparameterisation estimator \citep{MRFM20}. %As a consequence, theoretical comparisons even between those two classic estimators are poorly understood and the subject of ongoing research. 

%\dw{Is the red for submission, or a workaround for a deficiency of the commenting package?} \lo{The latter. Now removed.}
% \lo{Good changes above - thanks!
% I also had it in mind to change the latent variable to another variable ($\not=$ $\lat$), so that it is clear that we use $z$ for the entropy source (which is to be reparameterised), $z \sim \nd(0,1)$, throughout.
% I think I remarked earlier that it was a bit confusing before.}
% See \cite{DBLP:journals/corr/KingmaW13,DBLP:conf/icml/TitsiasL14,DBLP:conf/icml/RezendeMW14}.
% \lo{In posterior inference, both KL-divergence and ELBO have the form $\E_{\lat\sim q_\para(\lat)}[f(\para,\lat)]$ where $f$ is $\log q_\para(\lat)-\log p(\lat \mid \mathbf x)$ for the former, and $\log p(\lat, \mathbf x) - \log q_\para(\lat)$ for the latter (see \href{https://www.cs.ox.ac.uk/teaching/materials20-21/SPP/Lectures/7-Variational.pdf}{Slide 21/38}).
% The key advantage of ELBO is the use of joint (as opposed to conditional) density.
% Using KL-divergence as our example could be confusing / attract criticisms.}
% \dw{Yes, you are of course right. I've changed it back.}

% \dw{reference}\lo{They would be the same references for the reparameterisation trick / gradient estimator. I think it unnecessary to repeat them here, as this is a well-known fact.}

% Due to the presence of conditionals in programming languages, $f$ is typically not continuous, let alone differentiable.\lo{repetition}
 Unfortunately, the reparameterisation gradient estimator
% \cite{LYY18,KW13,DBLP:conf/icml/TitsiasL14,RMW14}
is biased for non-differentiable models, which can be easily expressed in a programming language by if-statements:
\begin{example}
  \label{ex:biased}
  % \dw{This could be simplified further to just $[z\geq 0]$ (to make it easier for later in the paper and also to make it different from \cite{KOW23}). Slight disadvantage: true gradient is everywhere positive.}
  We recall a simple counterexample \cite[Prop.~2]{LYY18}. %and \cite[Ex.~1]{KOW23}:
  \begin{align*}
    \phi_\theta(s)&\defeq s+\theta
    &
    f(z)&= -0.5\cdot z^2+
    \begin{cases}
      0&\tif z<0\\
      1&\tow
    \end{cases}
  \end{align*}
  Observe that (see \cref{fig:smoothacc}):
    \begin{align*}
      \nabla_\theta\,\E_{s\sim\nd(0,1)}\left[f(\phi_\theta(z))\right]&=-\theta+\nd(-\theta\mid 0,1)\\
      &\neq -\theta =
      \E_{s\sim\nd(0,1)}\left[\nabla_\theta f(\phi_\theta(s))\right]
    \end{align*}
\end{example}
Employing a biased gradient estimator may \emph{compromise the correctness of stochastic optimisation}:
even if we can find a point where the gradient estimator vanishes, it may not be a critical point of the objective function \eqref{eq:info}. 
Consequently, in practice we may obtain noticeably inferior results 
%(see \cite[Fig.~1b]{KOW23} and 
(see our experiments later, \cref{fig:elbo}).

% Vanishing gradient estimator does \emph{not imply stationarity}!

\begin{figure}
  \centering
  \begin{subfigure}[t]{0.55\linewidth}
    \centering
    \begin{tikzpicture}
      \begin{axis}[
        width = 55mm,
        axis lines = middle,
        xlabel = \(\theta\),
        ]
        %Below the red parabola is defined
        \addplot [
        domain=-1:1,
        samples=100,
        color=oxred,
        % dashed,
        thick
        ]
        {-x};
        %    \addlegendentry{\(-\theta\)}
        %Here the blue parabola is defined
        \addplot [
        update limits=false,
        domain=-1:1,
        samples=100,
        color=oxgreen,
        thick
        ]
        {-x+1/sqrt(2*pi)*exp(-0.5*x*x)};
        \addplot[dotted,thick] table {figures/bias1.data};
        \addplot[dashed,thick] table {figures/bias2.data};

      \end{axis}
    \end{tikzpicture}
    \caption{Solid red: biased estimator $\E_{z\sim\nd(0,1)}\left[\nabla_\theta f(\theta,z)\right]$, solid green: true gradient $\nabla_\theta\,\E_{z\sim\nd(0,1)}\left[f(\theta,z)\right]$, black: gradient of smoothed objective (dotted: $\eta = 1$, dashed: $\eta = 1/3$) for \cref{ex:biased}.}
    \label{fig:smoothacc}
  \end{subfigure}
  \hfill
\begin{subfigure}[t]{0.4\linewidth}
  \centering
  \begin{tikzpicture}
    \begin{axis}[
      ymin=-0.1,
      ymax=1.1,
      width=45mm,
      axis lines = middle,
      ]
      \addplot [
      domain=-1:0,
      samples=100,
      color=oxred,
      line width=0.3mm
      ]
      {{0}};
      \addplot [
      domain=0:1,
      samples=100,
      color=oxred,
      line width=0.3mm
      ]
      {{1}};

      \addplot [
      domain=-1:1,
      samples=100,
      line width=0.3mm,
      dotted
      ]
      {{1/(1+exp(-3*x))}};

      \addplot [
      update limits=false,
      domain=-1:1,
      samples=100,
      line width=0.3mm,
      dashed
      ]
      {1/(1+exp(-15*x))};
    \end{axis}
  \end{tikzpicture}

  % \begin{tikzpicture}
  %   \begin{axis}[
  %     ymin=-0.1,
  %     ymax=3.95,
  %     height=4.5cm,
  %     axis lines = middle,
  %     ]

  %     \addplot [
  %     domain=-1:1,
  %     samples=100,
  %     line width=0.3mm,
  %     dotted
  %     ]
  %     {{3/(1+exp(-3*x))*1/(1+exp(3*x))}};

  %     \addplot [
  %     update limits=false,
  %     domain=-1:1,
  %     samples=100,
  %     line width=0.3mm,
  %     dashed
  %     ]
  %     {15/(1+exp(-15*x))*1/(1+exp(15*x))};
  %   \end{axis}
  % \end{tikzpicture}
\caption{Sigmoid function $\smooth$ (black dotted: $\eta=1/3$, black dashed: $\eta=1/15$) and the Heaviside step function (red, solid).}
\label{fig:sig}
\end{subfigure}
\label{fig}
    \caption{}
\end{figure}

\iffalse
\paragraph{Approach}
Our approach is based on an efficiently computable smoothed semantics of programs.
In particular, for \emph{accuracy coefficient} $k\in\nat$, we interpret if-statements $M\equiv\ifc{z+\theta}{ 0}{ 1}$ as
\begin{gather*}
  \sema M(\mathbf x)\defeq\sigma_k(z+\theta)
\end{gather*}
%where $\sigma_\eta(z)\defeq\sigma(\frac z\eta)=\frac 1{1+\exp(-\frac z\eta)}$.
with $\sigma_\eta(z)\defeq\sigma(\frac z\eta)$ where $\sigma(x) \defeq \frac 1{1+\exp(-x)}$ is the sigmoid function.
Thus, the program $M$ is interpreted by a smooth function $\sema M$, for which the reparameterisation gradient can be estimated unbiasedly.
  % , for which convergence results are well-established.
  Therefore, we propose to apply a stochastic gradient descent-like procedure which enhances the accuracy coefficient $k$ in each step.

\begin{compactitem}
  \item \emph{Smoothen} (discontinuous) function using sigmoid with accuracy coefficient
  \item Optimise expectation, enhancing accuracy in each step
\end{compactitem}
\fi

\iffalse
\lo{Hopefully Basim can exhibit empirically that running (DSGD') on the term \Cref{eq:counter eg} converges to the stationary point of the original optimisation problem, which is around $0.3722$, as $\nd(- 0.3722\mid 0,1) \approx 0.3722$.

Question: What happens when we run (SGD) on the LYY18 estimator of this term? Does it converge to the correct stationary point?
The point is that no proof exists that running (SGD) on the LYY18 estimator necessarily converges to a stationary point.
Do you think we might find a counterexample to the convergence of LYY18 estimator under (SGD)?

Reviewers of our paper will naturally compare our main contribution (provable convergence of (DDGS') to stationary points of an unbiased gradient estimator for typable programs) with the unbiased LYY18 estimator. The plus side: guaranteed convergence. Any negatives? Variance (work normalised) is still unknown.}
\fi

\paragraph{Systematic Smoothing.}

\cite{KOW23} present a smoothing approach to avoid the bias. To formalise the approach in a streamlined setting, we introduce a simple language to represent discontinuous functions piecewisely via if-statements, and we show how to systematically obtain a smoothed interpretation of such representations via sigmoid functions, which are parameterised by an \emph{accuracy coefficient}.
\iffalse
In particular, for \emph{accuracy coefficient} $\eta>0$, we interpret e.g.\ the if-statements in \cref{ex:biased} as
\begin{gather*}
  \sema {\ifc{z}{0}{1}}(z)\defeq \smooth(z)
\end{gather*}
%where $\sigma_\eta(z)\defeq\sigma(\frac z\eta)=\frac 1{1+\exp(-\frac z\eta)}$.
where $\smooth(x)\defeq\sigma(\frac x\eta)=\frac 1{1+\exp(-\frac x\eta)}$  (cf.~\cref{fig:sig}).
\fi

% \dw{outline DSGD}

\paragraph{Contributions.}
Our main contribution is the \emph{provable} correctness of a novel variant of SGD, \emph{Diagonalisation Stochastic Gradient Descent} (DSGD), to stationary points. The method takes gradient steps of \emph{smoothed} models whilst simultaneously enhancing the accuracy of the approximation in each iteration.
Crucially, asymptotic correctness is not affected by the choice of (accuracy) hyperparameters.

We identify mild conditions on our language and the distribution for theoretical guarantees.
In particular, for the smoothed problems we obtain unbiased gradient estimators, which converge uniformly to the true gradient as the accuracy is improved. Besides, as important ingredients for the correctness of DSGD we prove bounds on the variance, which solely depend on the syntactical structure of models.

Empirical studies show that DSGD performs comparably to the unbiased correction of the reparameterised gradient estimator by \cite{LYY18}.
However our estimator is simpler, faster, and attains orders of magnitude reduction in work-normalised variance.
Besides, DSGD exhibits more %robust behaviour
stable convergence
than using an optimisation procedure for \emph{fixed} accuracy coefficients \citep{KOW23}, which is heavily affected by the choice of that accuracy coefficient.

%\dw{comment on experiments}

\paragraph{Related Work.}
%\label{sec:relwork}

\cite{LYY18} is the starting point for our work and a natural source for comparison.
They correct the (biased) reparameterisation gradient estimator for non-differentiable models by additional non-trivial \emph{boundary} terms.
%Unfortunately, they only discuss efficient method for \emph{affine} guards.
They present an efficient (but non-trivial) method for \emph{affine} guards only.
Besides, they are not concerned with the \emph{convergence} of gradient-based optimisation procedures.
% ;
% nor do they discuss how the assumptions they make may be manifested in a programming language. \dw{affine restriction}
%\lo{The aim is point out the limitations of LYY18 objectively, without annoying them.}

\changed[dw]{
  \cite{MMT17,JGP17} study the reparameterisation gradient estimator and discontinuities arising from \emph{discrete} random variables, and they propose a continuous relaxation. This can be viewed as a special case of our setting since discrete random variables can be encoded via continuous random variables and if-statements.
  In the context of discrete random variables, \cite{TMMLS17} combine the score estimator with control variates (a common variance reduction technique) based on such continuous approximations.
  }

In practice, non-differentiable functions are often approximated smoothly. Some foundations of a similar smoothing approach are studied in
\cite[Thm.~3.1]{Z81} in a non-stochastic setting.

Abstractly, our diagonalisation approach resembles graduated optimisation \citep{BZ87,HLS16}: a ``hard'' problem is solved by ``simpler'' approximations in such a way that the quality of approximation improves over time.
However, the goals and merits of the approaches are incomparable: graduated optimisation is concerned with overcoming non-convexity of the objective function to find global optima rather than stationary points, whereas our approach is motivated by overcoming the bias of the gradient estimator of the objective function.

% \dw{BERTSEKAS: NONDIFFERENTIABLE OPTIMIZATION VIA
% APPROXIMATION*}

% \lo{Briefly discuss latest trends, and situate present work, in nonconvex and nonsmooth optimisation: state-of-the-art algorithms for finding critical points are not applicable to the present work because they typically assume Lipschitz continuity of the objective function.}

\cite{KOW23} study a (higher-order) probabilistic programming language and employ stochastic gradient descent on a fixed smooth approximation. 
Our DSGD algorithm advances their work in that it converges to stationary points of the \emph{original} (unsmoothed) problem.
Crucially, the accuracy coefficient does not need to be fixed in advance; rather it is progressively enhanced during the optimisation (which has important advantages such as higher robustness).

\iffalse
\begin{enumerate}
  \item We introduce the DSGD algorithm, and show that it converges to stationary points of the \emph{original} (unsmoothed) problem.
  Crucially, the accuracy coefficient does not need to be fixed in advance; rather it is %continuously enhanced (which has important advantages).
  \changed[lo]{progressively enhanced during the optimisation (which has important advantages).}
  \item We demonstrate that the \emph{gradients} of the smoothed objective functions converge uniformly.
  \item We bound the variance and prove (uniform) Lipschitz smoothness.
\end{enumerate}
\fi
% \lo{Repetition of points 2 and 3.}

\iffalse
Besides, our formalisation is significantly simpler and avoids jargon of programming language research, thus making systematic smoothing approaches for programming languages more accessible to a wider ML audience.
\fi

\iffalse
\paragraph{Outline}
%We start by reviewing related work (\cref{sec:relwork}). 
\cref{sec:setup} introduces our foundational framework, including a basic programming language for which we present a smoothed interpretation in \cref{sec:smooth}.
In \cref{sec:dsgd} we present our Diagonalisation Stochastic Gradient Descent procedure and its convergence, for which crucial pre-conditions are established in \cref{sec:precon}.
% \cref{sec:prob} clarifies connections to probabilistic programming.
We conduct an experimental evaluation in \cref{sec:eval} and conclude in \cref{sec:conc}.
\fi

% !TEX root = ../main.tex

\section{PROBLEM SETUP}
\label{sec:setup}
We start by introducing a simple function calculus to represent (discontinuous) functions in a piecewise manner.
% \lo{Trivial point, but I prefer ``in a piecewise manner''. ``Piecewisely'' is probably not an English word.} \dw{ok}

Let $\pop$ be a set of primitive functions/operations (which we restrict below) and $z_1,\ldots,z_n$ variables (for a fixed arity $n$).
% We define a class $\sfct$ of (syntactic) representation of piecewisely defined functions $\Real^n\to\Real$ inductively:
% \begin{align*}
%   \sfct\ni F::= z_j\mid f(F,\ldots,F)\mid \ifc FFF
% \end{align*}
% where $f\in\pop$. \dw{arity}
 We define a class $\sfct$ of (syntactic) representations of piecewisely defined functions $\Real^n\to\Real$ inductively:
\begin{align*}
  \sfct\ni F::=z_j\mid f(F,\ldots,F)\mid\ifc FFF
\end{align*}
where $f\in\pop$. That is: expressions are nestings of the following ingredients: variables, function applications and if-conditionals. To enhance readability we use post- and infix notation for standard operations such as $+,\cdot,^{2}$.
% \dw{
% Is this clear to ML people or is it better to write:
% \begin{enumerate}
%   \item variables $z_j$ are in $\sfct$
%   \item if $f:\Real^k\to\Real\in\pop$ and $F_1,\ldots,F_k\in\sfct$ then $f(F_1,\ldots,F_k)\in\sfct$
%   \item if $F,G,H\in\sfct$ then $\ifc FGH\in\sfct$
% \end{enumerate}
% } \lo{I think it is safe to use BNF.} \dw{ok}
\begin{example}
  \label{ex:terms}
  \cref{ex:biased} can be expressed as $F_1$:
  \begin{align*}
    F_1'&\equiv\ifc z0 1\\
    F_1&\equiv -0.5\cdot z^2+\ifc z0 1\\
    F_2&\equiv\mathbf{if}\, (a\cdot(\ifc{b\cdot z_1+c}01)\\
    &\qquad+d\cdot(\ifc{e\cdot z_2+f}01)+g)<0\\
    &\quad\;\mathbf{then}\, 0\,\mathbf{else}\,1
  \end{align*}
  $F_2$ illustrates that nested (if-)branching can occur not only in branches but also in the guard/condition.
  Such nestings arise in practice (see \texttt{xornet} in \cref{sec:eval}) and facilitates writing concise models.
\end{example}
$F\in\sfct$ naturally defines (see \cref{app:setup}) a function $\Real^n\to\Real$, which we denote by $\sem F$. 
In particular, for $f$ and $F_1$ defined in \cref{ex:biased,ex:terms}, respectively, $\sem {F_1}=f$.

% Let $\dist_\para$ be a distribution which is given by the reparameterisation:
% \begin{align*}
%   \repx\sim\mdist_\para\iff\repx=\rep_\para(\lat)\quad\lat\sim\mdist
% \end{align*}
% and let $q_\para$ and $q$ be the respective densities. Note that $q_\theta(\repx)=q(\rep_\para^{-1}(\repx))\cdot|\det\Jac\rep_\para(\repx)|$.

% Let $\dist$ be a distribution and $\rep_\para$ be a diffeomorphism for each $\para$. \dw{$\in\parasp$}

\paragraph{Expressivity.}
Our ultimate goal is to improve the inference engines of probabilistic programming languages such as Pyro \citep{pyro}, which is built on Python and PyTorch and primarily uses variational inference. 
A crucial ingredient of such inference engines are low-variance, unbiased gradient estimators. We identify if-statements (which break continuity and differentiability) as the key challenge.
%  Therefore, our language is designed for expressing piecewisely defined functions via if-statements. 
We deliberately omit most features of mainstream languages because they are not relevant for the essence of the challenge and would only make the presentation a lot more complicated. In practice more language features will be desirable and it is worthwhile future work to extend the supported language.

\paragraph{Problem Statement.}
We are ready to formally state the problem we are solving in the present paper:
\begin{center}
\noindent\begin{fminipage}{.95\columnwidth}
\[
  \text{argmin}_{\para\in\parasp} \E_{\lat\sim\mdist}[\sem F(\rep_\para(\lat))] \tag{\text{\textbf{P}}}\label{eq:optp}
\]
where $F\in\sfct$, $\dist$ is a continuous probability distribution with support $\suppd\subseteq\Real^n$, $\parasp\subseteq\Real^m$ is the \emph{parameter space} and each $\rep_\para:\suppd\to\suppd$ is a diffeomorphism\footnote{i.e.\ a bijective differentiable function with differentiable inverse}.
\end{fminipage}
\end{center}
\dw{foonote above good?}
Note that for
%\changed[dw]{$\mdist_\para$ with density $q_\para(\repx)\defeq q(\rep_\para^{-1}(\repx))\cdot|\det\Jac\rep_\para(\repx)|$}
$\mdist_\para(\repx)\defeq\mdist(\rep_\para^{-1}(\repx))\cdot|\det\Jac\rep_\para(\repx)|$,
\begin{align*}
  \E_{\repx\sim\mdist_\para}[\sem F(\repx)]=\E_{\lat\sim\mdist}[\sem F(\rep_\para(\lat))]
\end{align*}

Without further restrictions it is not \emph{a priori} clear that the optimisation problem is well-defined due to a potential failure of integrability.
Issues may be caused by both the distribution $\mdist$ and the expression $F$. E.g.\ the Cauchy distribution does not even have expectations, and despite $\nd(\theta,1)$---the normal distribution with mean $\theta$ and variance $1$---being very well behaved,
$\E_{z\sim\nd(\theta,1)}[\exp(z^2)]=\infty$
regardless of $\theta$.

\paragraph{Schwartz Functions.}
\label{sec:schw}
We slightly generalise the well-behaved class of Schwartz functions (see also for more details e.g.\ \citep{H15,R03}) to accommodate probability density functions the support of which is a subset of $\Real^n$:

% review some fundamental properties of the well-behaved class of Schwartz functions (see also for more details e.g.\ \citep{H15,R03}).

\changed[dw]{A function $f:\suppd\to\Real$, where $\suppd\subseteq\Real^n$ is measurable and has measure-0 boundary, is a (generalised) \emph{Schwartz function} if $f$ is smooth in the interior of $\suppd$ and for all $\alpha$ and $\beta$ (using standard multi-index notation for higher-order partial derivatives),
\begin{align*}
  \sup_{\mathbf x\in\interior\suppd}|\mathbf x^\beta\cdot\partial^\alpha f(\mathbf x)|<\infty
\end{align*}}
% We use $\Schwsp(\Real^n)$ for the space of such Schwartz functions.
Intuitively, a Schwartz function decreases rapidly.

% In this work we are particularly interested in Schwartz functions which are also the probability density function (pdf) of a continuous probability distribution.
% We call such functions \emph{Schwartz pdfs} and the corresponding distributions \emph{Schwartz distributions}. \dw{perhaps confusing with distribution theory}

\begin{example}
  % \cref{tab:dens} lists distributions with Schwartz pdfs. \dw{add gamma for fixed parameters}
  Distributions with pdfs which are also Schwartz functions include (for a fixed parameter)
  the (half) normal, exponential and logistic distributions. 
  %(see \cref{tab:dens}).
  % \dw{could add reference to appendix? (but space)}

  Non-examples include the Cauchy distribution and the Gamma distributions.
  % The Cauchy distribution famously does not even have an expectation (which would otherwise contradict \cref{lem:Schwprop} below). Even the pdf itself of the Gamma distribution for shape parameter $k<1$ is unbounded. 
  (The Gamma distribution cannot be reparameterised \citep{RTB16b} and therefore it is only of marginal interest for our work regarding the reparameterisation gradient.)
  % When the shape \dw{!!!} parameter $k,\alpha<1$ even the pdf itself is unbounded but
\end{example}
% We also focus on products of univariate Schwartz functions: if $f_1,\ldots,f_n:\Real\to\Real$ are (univariate) Schwartz functions then $f(x_1,\ldots,x_n)\defeq f_1(x_1)\cdot\cdots\cdot f_n(x_n)$ is a Schwartz function.

The following pleasing properties of Schwartz functions \citep{H15,R03} carry immediately over:
\begin{lemma}
  \label{lem:Schwprop}
  Let $f:\suppd\to\Real$ be a Schwartz function.
  \begin{enumerate}
    \item All partial derivatives of $f$ are Schwartz functions.
    \item\label{it:int} $f\in L^p(\suppd)$; in particular $f$ is integrable: $\int_\suppd |f(\mathbf x)|\diff\mathbf x<\infty$.
    \item\label{it:prod} The product $(f\cdot p):\suppd\to\Real$ is also a Schwartz function if $p:\Real^n\to\Real$ is a polynomial.
  \end{enumerate}
\end{lemma}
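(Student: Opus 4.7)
All three clauses are standard properties of (classical) Schwartz functions, and the plan is simply to check that the textbook arguments go through for the generalised setting where the domain is a measurable subset $\suppd\subseteq\Real^n$ with measure-zero boundary rather than all of $\Real^n$. Throughout I will abbreviate the defining seminorms as $\|f\|_{\alpha,\beta}\defeq\sup_{\mathbf x\in\interior\suppd}|\mathbf x^\beta\cdot\partial^\alpha f(\mathbf x)|$, each of which is finite by hypothesis.

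For (1), let $\alpha_0$ be a multi-index and set $g\defeq\partial^{\alpha_0}f$. Smoothness of $g$ on $\interior\suppd$ is immediate from smoothness of $f$. For any $\alpha,\beta$, $\partial^\alpha g=\partial^{\alpha+\alpha_0}f$, so $\|g\|_{\alpha,\beta}=\|f\|_{\alpha+\alpha_0,\beta}<\infty$.

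For (2), I first turn the seminorm bounds into a pointwise decay estimate: the finiteness of $\|f\|_{0,\beta}$ as $\beta$ ranges over all multi-indices supported on a single coordinate implies that for every $N\in\nat$ there is a constant $C_N$ with $|f(\mathbf x)|\leq C_N\cdot(1+|\mathbf x|)^{-N}$ on $\interior\suppd$. Picking $N>n/p$ (in particular $N>n$ for $p=1$) renders $|f|^p$ dominated by an integrable function on $\Real^n$, hence $\int_{\interior\suppd}|f|^p<\infty$; since $\boundary\suppd$ has Lebesgue measure zero the same holds on $\suppd$.

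For (3), the Leibniz rule gives
\begin{align*}
  \partial^\alpha(f\cdot p)=\sum_{\gamma\leq\alpha}\binom{\alpha}{\gamma}\,(\partial^\gamma f)\cdot(\partial^{\alpha-\gamma}p).
\end{align*}
Each $\partial^{\alpha-\gamma}p$ is itself a polynomial, so $\mathbf x^\beta\cdot\partial^\alpha(f\cdot p)$ is a finite $\Real$-linear combination of expressions of the form $\mathbf x^{\beta'}\cdot\partial^\gamma f(\mathbf x)$ (after expanding each monomial of $\mathbf x^\beta\cdot\partial^{\alpha-\gamma}p(\mathbf x)$). Taking the supremum over $\interior\suppd$ and applying the triangle inequality bounds it by a finite sum of seminorms $\|f\|_{\gamma,\beta'}$, each finite by hypothesis; smoothness of $f\cdot p$ on $\interior\suppd$ is clear.

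I expect no real obstacle here: the only point deserving care is that our seminorms use the interior of $\suppd$ (so that smoothness makes sense), whereas the $L^p$ conclusion in (2) is stated on $\suppd$ itself; this is bridged by the measure-zero boundary assumption, which is exactly why it was included in the definition.
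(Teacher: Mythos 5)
Your proof is correct. The paper itself gives no proof of this lemma---it merely asserts that the classical properties of Schwartz functions ``carry immediately over'' to the generalised setting, citing standard references---and your argument is precisely the standard textbook verification (seminorm shifting for derivatives, the polynomial-decay estimate $|f(\mathbf x)|\leq C_N(1+|\mathbf x|)^{-N}$ for $L^p$ membership, and the Leibniz rule for products), with the only genuinely new point, the passage from $\interior\suppd$ to $\suppd$ via the measure-zero boundary, handled correctly.
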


% \paragraph{Integrability.}
To mitigate the above (well-definedness) problem, 
%\lo{Which problem? It's unclear.} \dw{does this clarify sufficiently?} 
we henceforth assume:
\begin{assumption}
  \label{ass:basic}
  % Henceforth we assume that
  \begin{enumerate}
    \item The density $\mdist$ is a (generalised) Schwartz function on its support $\suppd\subseteq\Real^n$.
    \item $\pop$ is the set of {smooth} functions all partial derivatives of which are bounded by polynomials.
    \item \changed[dw]{$\rep_{(-)}$ and its partial derivatives are bounded by polynomials and each $\rep_\para:\suppd\to\suppd$ is a diffeomorphism.}
  \end{enumerate}
\end{assumption}

This set-up covers in particular typical variational inference problems (see \cref{sec:eval}) with normal distributions because log-densities $f(x)\defeq\log(\nd(x\mid\mu,\sigma^2))$ can be admitted as primitive operations.

Popular non-smooth functions such as $\relu,\max$ or the absolute value function are piecewise smooth. Hence, they can be expressed in our language using if-statements and smooth primitives. For instance, wherever a user may wish to use $\relu(x)$, this can be replaced with the expression $\ifc x 0 x$.

Employing \cref{lem:Schwprop} we conclude that the objective function in \eqref{eq:optp} is well defined: for all $\para\in\parasp$, $\E_{\lat\sim\mdist}[\sem F(\rep_\para(\lat))]<\infty$.

Whilst for this result it would have been sufficient to assume that just $\rep_{(-)}$ and $f\in\pop$ (and not necessarily their derivatives) are polynomially bounded, this will become useful later (\cref{sec:precon}).

\section{SMOOTHING}
\label{sec:smooth}
The bias of the reparametrisation gradient (cf.\ \cref{ex:biased}) is caused by discontinuities, which arise when interpreting if-statements in a standard way.
\cite{KOW23} instead avoid this problem by replacing the Heaviside step functions used in standard interpretations of if-statements with smooth approximations.

Formally, for $F\in\sfct$ and \emph{accuracy coefficient} $\eta>0$ we define the $\eta$-\emph{smoothing} $\sema F:\Real^n\to\Real$:
\begin{align*}
  \sema{z_j}(\repx)&\defeq z_j\\
  \sema{f(F_1,\ldots,F_k)}(\repx)&\defeq f(\sema{F_1}(\repx),\ldots,\sema{F_k}(\repx))\\
  \sema{\ifc FGH}(\repx)&\defeq\\
  \smooth(-\sema F(\repx))\cdot\sema G(\repx)
  &+\smooth(\sema F(\repx))\cdot\sema H(\repx)
\end{align*}
% Intuitively, we simply replace the Heaviside step-function arising in the standard interpretation of conditionals by
where $\smooth(x)\defeq\sigma(\frac x\eta)=\frac 1{1+\exp(-\frac x\eta)}$ is the logistic sigmoid function (see \cref{fig:sig}).

Note that the smoothing depends on the representation. In particular, $\sem F=\sem G$ does not necessarily imply $\sema F=\sema G$, e.g.\ $\sem{\ifc{z^2}{0}{z}}=\sem z$ but $\sema{\ifc{z^2}{0}{z}}\neq\sema z$.

% We note that the smoothed value function can be computed efficiently by (backward-mode) automatic differentiation
% \cite{Baydin2018}.

\paragraph{Unbiasedness and SGD for Fixed Accuracy Coefficient.}
\label{sec:unb}

Each $\sema F$ is clearly differentiable. Therefore, the following is a consequence of a well-known result about exchanging differentiation and integration, which relies on the dominated convergence theorem \citep[Theorem 6.28]{KOW23,K13}:
%
%
% This is relatively straightforward. We need to use the following:
% If $\parasp'\subseteq\parasp$ is a bounded set then the absolute value of polynomials $p:\parasp\times\Real^n$ are uniformly bounded by a polynomial $p':\Real^n$. We conclude:
\begin{restatable}[Unbiasedness]{proposition}{unb}
  \label{prop:unb}
  For every $\eta>0$ and $\para\in\parasp$,
  \begin{align*}
    \nabla_\para\,\E_{\lat\sim\mdist}[\sema F(\rep_\para(\lat))]=\E_{\lat\sim\mdist}[\nabla_\para\,\sema F(\rep_\para(\lat))]
  \end{align*}
\end{restatable}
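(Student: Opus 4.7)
The plan is to invoke the standard Leibniz rule for differentiation under the integral sign (a consequence of the dominated convergence theorem), which requires two conditions: (i) for almost every $\lat\in\suppd$, the map $\para\mapsto\sema F(\rep_\para(\lat))$ is differentiable; and (ii) there exists an integrable function $g:\suppd\to\Realnn$ such that $\|\nabla_{\para'}\sema F(\rep_{\para'}(\lat))\|\cdot\mdist(\lat)\le g(\lat)$ for all $\para'$ in some neighbourhood of $\para$ and almost every $\lat$. Once both are established, exchanging $\nabla_\para$ and $\E_{\lat\sim\mdist}$ is immediate.

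Condition (i) follows from the chain rule: $\sema F$ is smooth, being built inductively from the smooth primitives of $\pop$ and the smooth sigmoid $\smooth$, and by \cref{ass:basic} the map $\para\mapsto\rep_\para(\lat)$ is differentiable.

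The substantive work is condition (ii). I would first prove, by induction on $F\in\sfct$, that every partial derivative of $\sema F:\Real^n\to\Real$ is bounded by a polynomial in its argument. The base case $z_j$ is trivial. For $f(F_1,\ldots,F_k)$ with $f\in\pop$, the chain rule expresses $\partial^\alpha\sema{f(F_1,\ldots,F_k)}$ as a sum of products of partial derivatives of $f$ evaluated at $(\sema{F_1},\ldots,\sema{F_k})$ with partial derivatives of the $\sema{F_i}$; by \cref{ass:basic} the former are polynomially bounded in their arguments and by the induction hypothesis so are the latter, and the class of polynomially bounded smooth functions is closed under sums, products, and such compositions. For the conditional $\ifc{F}{G}{H}$, the same argument applies, using additionally that $\smooth$ and all its derivatives are uniformly bounded on $\Real$. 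Combining this with the polynomial bounds on $\rep_{(-)}$ and its partial derivatives provided by \cref{ass:basic}---uniform over any bounded neighbourhood of $\para$---the chain rule yields a polynomial $p$ (depending on the neighbourhood) with $\|\nabla_{\para'}\sema F(\rep_{\para'}(\lat))\|\le p(\lat)$ for all $\para'$ in the neighbourhood and all $\lat\in\suppd$. Finally, since $\mdist$ is a Schwartz function on $\suppd$, the product-closure part of \cref{lem:Schwprop} makes $p\cdot\mdist$ Schwartz, hence integrable by the $L^p$ part of \cref{lem:Schwprop}, giving the required dominating function $g\defeq p\cdot\mdist$.

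The main obstacle is the \emph{uniformity} of the polynomial bound in condition (ii) over a neighbourhood of $\para$; this is exactly why \cref{ass:basic} imposes polynomial bounds on the \emph{derivatives} of the primitives of $\pop$ and of $\rep_{(-)}$, and not merely on the functions themselves. Given these assumptions, the inductive bound on derivatives of $\sema F$ is routine chain-rule bookkeeping, and the Schwartz closure properties in \cref{lem:Schwprop} conclude the argument.
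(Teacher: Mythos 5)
Your proposal is correct and follows essentially the same route as the paper: differentiation under the integral sign via dominated convergence, with the dominating function obtained from a polynomial bound on $\partial(\sema F\circ\rep_{(-)})/\partial\theta_i$ that is uniform over a compact neighbourhood of $\para$ (the paper delegates this uniformisation to \cref{lem:polyp}), combined with the Schwartz closure properties of \cref{lem:Schwprop}. The inductive polynomial-bound argument you spell out is exactly what the paper asserts in one line via \cref{ass:basic}, and your observation about why the assumption bounds the \emph{derivatives} of the primitives matches the paper's own remark.
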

Consequently, SGD can be employed on an $\eta$-smoothing for a fixed accuracy coefficient $\eta$.

% \subsection{SGD for Fixed Accuracy Coefficient}

% We apply SGD for fixed accuracy coefficient. To obtain theoretical convergence guarantees some conditions need to be checked. This can be done in the appendix as it's not very interesting and similar but more involved conditions need to be checked for DSGD.

\paragraph{Choice of Accuracy Coefficients.}

A natural question to ask is: \textit{how do we choose an accuracy coefficient such that SGD solves the original, unsmoothed problem \eqref{eq:optp} ``well''?}
For our running example we can observe that this really matters: stationary points for low accuracy (i.e.\ high $\eta$) may not yield significantly better results than the biased (standard) reparameterisation gradient estimator (see \cref{fig:smoothacc}).
% \lo{Check the slightly revised caption of \cref{fig:smoothacc}} \dw{add more data points}

On the other hand, there is unfortunately no bound (as $\eta\searrow 0$) to the derivative of $\smooth$ at $0$ (see \cref{fig:sig}). Therefore, the variance of the smoothed estimator also increases as the accuracy is enhanced.

% Consequently, we need to find a suitable tradeoff of the approximation error and the variance.
In the following section we offer a principled solution to this problem with strong theoretical guarantees.

% \cite{KOW23} provides a rudimentary analysis of the relationship between the $\eta$-smoothed and the original problem \eqref{eq:optp}: they show that under mild syntactic restrictions $\E_{\repx\sim\dist_\para}[\sema F(\repx)]$ converges uniformly to $\E_{\repx\sim\dist_\para}[\sem F(\repx)]$ as $\eta\searrow 0$.
% As a consequence, for any given error tolerance $\epsilon>0$, there \emph{exists} an accuracy coefficient $\eta$ such that the $\eta$-smoothed and true objective functions only differ by $\epsilon$.

% However, the practical relevance of this result is limited: it is generally not clear how to extract the accuracy coefficient from the convergence proof.
% Furthermore, we may wish to improve the accuracy of the approximation, e.g.\ once we have found a stationary point to an approximation that proves to be a relatively poor approximation to the true objective function.

% We offer a solution with theoretical guarantees for these issues in the following section.

% !TEX root = ../main.tex

\section{DIAGONALISATION SGD}
\label{sec:dsgd}

% So far, we have investigated the unbiasedness of smoothed gradient estimators and their use in gradient-based optimisation methods (conditional on their convergence to stationary points) for a \emph{fixed} accuracy coefficient.

%
% The objective of this section is twofold:
% Firstly, to obtain an (unconditional) convergence result.

We propose a novel variant of SGD in which we  enhance the accuracy coefficient \emph{during} optimisation (rather than \emph{fixing} it in advance). For an expression $F\in\sfct$ and a sequence $(\eta_k)_{k\in\nat}$ of accuracy coefficients we modify the standard SGD iteration to
\begin{align*}
  \para_{k+1}&\defeq\para_k-\gamma_k\nabla_\para \sem F_{\eta_k}(\para_k,\lat_k)&\lat_k\sim\mathcal D
\end{align*}
where $\gamma_k$ is the step size.
The qualifier ``diagonal'' highlights that, in contrast to standard SGD, we are not using the gradient of the same function $\sem F_{\eta}$ for each step but rather we are using the gradient of $\sem F_{\eta_k}$.
Intuitively, this scheme facilitates getting close to the optimum whilst the variance is low (but the approximation may be coarse) and make small adjustments once the accuracy has been enhanced and approximation errors become visible.

Whilst the modification to the algorithm is moderate, we will be able to \emph{provably guarantee} that asymptotically, the gradient of the original \emph{unsmoothed} objective function vanishes.

% \begin{enumerate*}
%   \item obtain (unconditional) convergence result
%   \item not fix accuracy coefficient but rather enhance it \emph{during} the optimisation.
% \end{enumerate*}

To formalise the correctness result, we generalise the setting: suppose for each $k\in\nat$, $f_k\from\parasp\times\Real^n\to\Real$ is differentiable.
We define a \emph{Diagonalisation Stochastic Gradient Descent} (DSGD) sequence:
\begin{align*}
  \para_{k+1}&\defeq\para_k-\gamma_k\nabla_\para f_k(\para_k,\lat_k)&\lat_k\sim\mathcal D
\end{align*}
Due to the aforementioned fact that also the variance increases as the accuracy is enhanced, the scheme of accuracy coefficients $(\eta_k)_{k\in\nat}$ needs to be adjusted carefully to tame the growth of the variances $V_k$ of the gradient of $f_k$, as stipulated by following equation:
\begin{align}
  \label{eq:stepvar}
  \sum_{k\in\nat}\gamma_k&=\infty & \sum_{k\in\nat}\gamma_k^2\cdot V_k<\infty
\end{align}
In the regime $f_k=f$ and $V_k=V$ of standard SGD, this condition subsumes the classic condition by \cite{RM51}, and $\gamma_k\in\Theta(1/k)$ is admissible.

The following exploits Taylor's theorem and can be obtained by modifying convergence proofs of standard SGD (see e.g.\ \citep{BT00} or \citep[Chapter~2]{B15}):
\begin{restatable}[Correctness]{proposition}{cdsgd}
  \label{prop:cdsgd}
  Suppose $(\gamma_k)_{k\in\nat}$ and $(V_k)_{k\in\nat}$ satisfy \cref{eq:stepvar},
  % \begin{align*}
  %   \sum_{k\in\nat}\gamma_k&=\infty&
  %   \sum_{k\in\nat}\gamma_k^2\cdot V_k&<\infty
  % \end{align*}
  % \begin{align*}
  %   g_k(\para)&\defeq\E_{\lat}[f_k(\para,\lat)]&g(\para)\defeq\E_{\lat}[f(\para,\lat)]
  % \end{align*}
  $g_k(\para)\defeq\E_{\lat}[f_k(\para,\lat)]$
   and $g(\para)\defeq\E_{\lat}[f(\para,\lat)]$
  are well-defined and differentiable,
   % and $g_k$ is differentiable.

  % Suppose there exist sequence $\{\para_k\mid k\in\nat\}\subseteq\parasp\subseteq\Real^m$,

  % Suppose for all $k\in\nat$ and $\para\in\parasp$,
  \begin{enumerate}[label=(D\arabic*)]
    \setlength{\itemindent}{3mm}
  % %  \setlength{\itemindent}{1.5em}
    \item\label{it:unb} $\nabla_\para g_k(\para)=\E_\lat[\nabla_\para f_k(\para,\lat)]$ for all $k\in\nat$, $\para\in\parasp$
    \item\label{it:lsmooth}\label{it:unibg}\label{it:bounded} $g$ is bounded, Lipschitz continuous and Lipschitz smooth\footnote{Recall that a differentiable function $g:\parasp\to\Real$ is $L$-Lipschitz smooth if for all $\para,\para'\in\parasp$, $\|\nabla g(\para)-\nabla g(\para')\|\leq L \|\para-\para'\|$.} on $\parasp$
    \item\label{it:bvar} $\E_{\lat}[\|\nabla_\para f_k(\para,\lat)\|^2]< V_k$ for all $k\in\nat$, $\para\in\parasp$
    \item\label{it:univcd} $\nabla g_k$ converges uniformly to $\nabla g$ on $\parasp$
    % \item\label{it:unibg} $\|\nabla_\para\E_{\lat}[f(\para,\lat)]\|< L$
  \end{enumerate}
  Then
  % \emph{$\inf_{i\in\nat}\E[\|\nabla g(\para_i)\|^2]=0$} or $\para_i\not\in\parasp$ for some $i\in\nat$.
  almost surely\footnote{w.r.t.\ the random choices $\lat_1,\lat_2,\ldots$ of DSGD} $\liminf_{i\to\infty}\|\nabla g(\para_i)\|=0$ or $\para_i\not\in\parasp$ for some $i\in\nat$.
\end{restatable}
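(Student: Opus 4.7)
My plan is to follow the classical Bertsekas--Tsitsiklis template for SGD convergence proofs, adapted to the diagonal setting where the gradient at step $k$ is drawn from a \emph{different} function $f_k$ whose expectation $g_k$ only approximates the true objective $g$. The strategy has two moving parts: a one-step descent inequality obtained from Lipschitz smoothness (D2) together with the unbiasedness (D1) and variance bound (D3); and a decomposition of the bias $\nabla g_k - \nabla g$ that is absorbed using the uniform convergence (D4) and boundedness of $\nabla g$. The conclusion is then obtained by a contradiction / stopping-time argument using $\sum\gamma_k=\infty$ and $\sum\gamma_k^2 V_k<\infty$ against boundedness of $g$.

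\textbf{Descent inequality.} First I would apply Taylor's theorem to $g$ around $\para_k$, using $L$-Lipschitz smoothness (from D2), to obtain
\[
g(\para_{k+1})\leq g(\para_k)-\gamma_k\,\nabla g(\para_k)^\top\nabla f_k(\para_k,\lat_k)+\tfrac{L\gamma_k^2}{2}\|\nabla f_k(\para_k,\lat_k)\|^2 .
\]
Taking conditional expectation given $\mathcal F_k$ and invoking (D1) and (D3) yields, provided $\para_k\in\parasp$,
\[
\E[g(\para_{k+1})\mid\mathcal F_k]\leq g(\para_k)-\gamma_k\,\nabla g(\para_k)^\top\nabla g_k(\para_k)+\tfrac{L\gamma_k^2 V_k}{2}.
\]

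\textbf{Handling the bias.} Let $M\defeq\sup_{\para\in\parasp}\|\nabla g(\para)\|<\infty$ (finite by Lipschitz continuity in D2) and $\epsilon_k\defeq\sup_{\para\in\parasp}\|\nabla g_k(\para)-\nabla g(\para)\|$, which tends to $0$ by (D4). Writing $\nabla g(\para_k)^\top\nabla g_k(\para_k)=\|\nabla g(\para_k)\|^2+\nabla g(\para_k)^\top(\nabla g_k(\para_k)-\nabla g(\para_k))$ and applying Cauchy--Schwarz, we get
\[
\E[g(\para_{k+1})\mid\mathcal F_k]\leq g(\para_k)-\gamma_k\|\nabla g(\para_k)\|^2+\gamma_k M\epsilon_k+\tfrac{L\gamma_k^2V_k}{2}.
\]

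\textbf{Contradiction via stopping time.} Assume for contradiction that, on an event $A$ of positive probability, $\para_i\in\parasp$ for every $i$ and $\liminf_i\|\nabla g(\para_i)\|>0$. Then by partitioning $A$ according to the values of a witnessing $\delta>0$ and threshold $N$, one may restrict to an event $A_{\delta,N}\subseteq A$ of positive probability on which $\|\nabla g(\para_i)\|\geq\delta$ for all $i\geq N$. Since $\epsilon_k\to 0$, pick $K\geq N$ with $M\epsilon_k\leq\delta^2/2$ for $k\geq K$. Define the stopping time $\tau\defeq\inf\{k\geq K:\|\nabla g(\para_k)\|<\delta\text{ or }\para_k\notin\parasp\}$; on $A_{\delta,N}$, $\tau=\infty$. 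On $\{\tau>k\}$ (which is $\mathcal F_k$-measurable) the bound above gives
\[
\E[\mathbf 1_{\tau>k}\,g(\para_{k+1})\mid\mathcal F_k]\leq \mathbf 1_{\tau>k}\bigl(g(\para_k)-\tfrac{\gamma_k\delta^2}{2}+\tfrac{L\gamma_k^2V_k}{2}\bigr).
\]
WLOG $g\geq0$ (shift by a constant using boundedness in D2), so $\mathbf 1_{\tau>k+1}g(\para_{k+1})\leq\mathbf 1_{\tau>k}g(\para_{k+1})$. Taking total expectations and summing from $K$ to $n$ telescopes the $g$-terms; rearranging and sending $n\to\infty$,
\[
\tfrac{\delta^2}{2}\sum_{k\geq K}\gamma_k\,\Pr(\tau>k)\leq \E[g(\para_K)\mathbf 1_{\tau>K}]+\tfrac{L}{2}\sum_{k\geq K}\gamma_k^2V_k<\infty.
\]
Since $\Pr(\tau>k)\geq\Pr(A_{\delta,N})>0$ for all $k$ and $\sum\gamma_k=\infty$, the left-hand side diverges, contradicting the finiteness of the right-hand side. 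Hence almost surely either the iterates leave $\parasp$ at some step or $\liminf_i\|\nabla g(\para_i)\|=0$.

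\textbf{Main obstacle.} The delicate part is the transition from the random witness $(\delta,N)$ supplied by $\liminf>0$ to a fixed deterministic pair on a positive-probability sub-event, and the careful handling of $\mathbf 1_{\tau>k}$ inside the expectation so that the telescoping sum is valid. Everything else (Taylor, Cauchy--Schwarz, uniform smallness of $\epsilon_k$) is routine; the delicate bookkeeping of the stopped process is where all the probabilistic content sits.
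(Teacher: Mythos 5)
Your proof is correct, and it shares the paper's core ingredients — the Taylor/Lipschitz-smoothness descent inequality combined with (D1) and (D3), and the decomposition $\langle\nabla g,\nabla g_k\rangle=\|\nabla g\|^2+\langle\nabla g,\nabla g_k-\nabla g\rangle$ controlled via Cauchy--Schwarz, boundedness of $\|\nabla g\|$ and (D4) — but it concludes by a genuinely different route. The paper never sets up a stopping time or argues by contradiction: it simply sums the descent inequalities, divides by $\sum_i\gamma_i=\infty$ to obtain $\liminf_i\E[\langle\nabla g(\para_i),\nabla g_i(\para_i)\rangle]\le 0$, then uses the same bias decomposition in the other direction to deduce $\liminf_i\E[\|\nabla g(\para_i)\|^2]=0$, and finally extracts an a.s.\ subsequence ($L^1$ convergence $\Rightarrow$ convergence in probability $\Rightarrow$ a.s.\ along a sub-subsequence) to get the almost-sure $\liminf$. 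That argument is shorter and stays entirely at the level of unconditional expectations; your Bertsekas--Tsitsiklis-style stopped-process argument is heavier on bookkeeping (the countable partition into $A_{\delta,N}$, the $\mathcal F_k$-measurability of $\{\tau>k\}$, the monotonicity trick $\mathbf 1_{\tau>k+1}g(\para_{k+1})\le\mathbf 1_{\tau>k}g(\para_{k+1})$ after shifting $g\ge 0$) but in exchange it treats the ``$\para_i\notin\parasp$ for some $i$'' disjunct more explicitly, via the stopping criterion, than the paper does — the paper's proof simply assumes $\para_k,\para_{k+1}\in\parasp$ throughout. Note that both proofs share the same minor wrinkle that applying Taylor's theorem between $\para_k$ and $\para_{k+1}$ already presupposes $\para_{k+1}\in\parasp$ (or an extension of $g$ to a convex neighbourhood), which neither argument fully discharges; this is not a defect specific to your write-up.
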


\iffalse
\begin{remark}%{remark}{str}
  \label{rem:str}
  If the step sizes and gradients converge sufficiently quickly, formally $\sum_{k\in\nat}\gamma_k^2<\infty$ and $\sum_{k\in\nat} \gamma_k\|\nabla g_k-\nabla g\|<\infty$, the former disjunct can be strengthened to $\lim_{i\to\infty}\|\nabla g(\para_i)\|=0$, which implies that accummulation points are stationary.
\end{remark}
\fi

%\lo{Clarify the distribution w.r.t.~the expectation $\E[\|\nabla g(\para_i)\|^2]$ is taken.} \dw{is this sufficient?} \lo{Yes. Thanks.}
% \dw{differentiability of $g$?}
% \dw{intuition of assumptions}

% \dw{boundedness!!!}

% \section{Application}
% \label{sec:appl}

Having already discussed unbiasedness \ref{it:unb}, \cref{prop:unb}, we address the remaining premises in the next section to show that DSGD is correct for expressions.
 % we establish the remaining pre-conditions for the instantiation $f_k\defeq \llbracket F\rrbracket_{\eta_k}\circ\rep_{(-)}$. Recall that .

% Having established \ref{it:unb} in \cref{prop:unb} and \ref{it:univcd} in \cref{thm:unicon}, the main remaining properties we need to establish are (uniform) Lipschitz smoothness for \ref{it:lsmooth} and a bound on the variance \ref{it:bvar}.

\section{ESTABLISHING PRE-CONDITIONS}
\label{sec:precon}

% \subsection{Schwartz Functions on Open Sets}

Note that the pre-conditions of \cref{prop:cdsgd} may fail for non-compact $\parasp$: e.g.\ the objective function $\theta\mapsto\E_{s\sim\nd(0,1)}[s+\theta]$ is unbounded.
Therefore, we assume the following henceforth:
% \footnote{Intuitively, otherwise arbitrary much probability mass  may be placed close to the discontinuities, compromising e.g.\ uniform convergence (see~\cref{ex:comp} in \cref{app:precon}).}:
\begin{assumption}
  \label{ass:comp}
 $\parasp\subseteq\Real^m$ is compact.
\end{assumption}

% \subsection{Motivation: Lipschitz Smoothness}
% \label{sec:lipsch}

% Recall our our running example $\ifc z 0 1$ and $\dist=\nd(0,1)$, $\phi_\theta(s)=s+\theta$ and $\dist_\theta=\nd(\theta,1)$. Note that
% \begin{align*}
%   &\left|\frac{\partial^2}{\partial\theta^2}\E_{\lat\sim\dist}[\sem{\ifc z 0 1}(\phi_\theta(s))]\right|\\
%   &=\left|\frac{\partial^2}{\partial\theta^2}\E_{\repx\sim\dist_\theta}[\sem{\ifc z 0 1}(\repx)]\right|\\
%   &=\left|\int_0^\infty \frac {\partial^2 q_{(-)}}{\partial\theta^2}(\theta,z)\diff z\right|\\
%   &\leq\int\left| \frac {\partial^2 q_{(-)}}{\partial\theta^2}(\theta,z)\right|\diff z\\
%   &\leq\int\left| q_\theta(z)+\theta^2\cdot q_\theta(z)\right|\diff z\leq 1+\theta^2
% \end{align*}
% \dw{check}

For Lipschitz continuity it suffices to bound the partial derivatives of the objective function. Thus, we exchange differentiation and integration\footnote{This is valid because $\mdist_\para$ is differentiable and $\sem F$ is independent of $\theta_i$.}:
% \begin{align*}
  \[
    \left|\frac{\partial}{\partial\theta_i}\E_{\repx\sim\mdist_\para}[\sem F(\repx)]\right|
  \leq
  \int \left|\sup_{\para\in\parasp}\frac{\partial}{\partial\theta_i}\mdist_\para(\repx)\right|\cdot\left|\sem F(\repx)\right|\diff\repx
  \]
% \end{align*}
Extending the integrability result for Schwartz functions and polynomials (\cref{lem:Schwprop}) in a non-trivial way (see \cref{app:precon}), we can demonstrate that the integral on the right side is finite.
\changed[dw]{
Our proof (cf.~\cref{app:unicon}) relies on the following:
\begin{assumption}
  \label{eq:strdiff}
  $\rep_{(-)}:\parasp\times\suppd\to\suppd$ satisfies
  %  is a \emph{strong} diffeomorphism, by which we mean
    \begin{align*}
      \inf_{(\para,\lat)\in\parasp\times\suppd}|\det\Jac\rep_\para(\lat)|>0
    \end{align*}
\end{assumption}  
This requirement is a bit stronger than $|\det\Jac\rep_\para(\lat)|>0$ for each $(\para,\lat)\in\parasp\times\suppd$, which automatically holds if each $\rep_\para:\suppd\to\suppd$ is a diffeomorphism. 

% \footnote{A (contrived) counterexample is $\phi(z_1,z_2)\defeq z_1^2z_2+\frac 1 3 z_1^2 z_2^3-z_1z_2^2+z_2$ because $\frac{\partial\phi}{\partial z_2}(z_1,z_2)=z_1^2+(z_1z_2-1)^2$ has no positive lower bound despite being everywhere positive.}

Our prime examples, location-scale transformations, satisfiy this stronger property:
\begin{example}
  Suppose $\parasp\subseteq\Real^m$ is compact, $f:\parasp\to\Real_{>0}$ is continuous and $g:\parasp\to\Real$. Then the location-scale transformation
  $\phi_\para(s)\defeq f(\para)\cdot s+g(\para)$
  satisfies \cref{eq:strdiff} because $\inf_{\para\in\parasp} f(\para)>0$ ($\parasp$ is compact and $f$ is continuous).
\end{example}
}

Similarly, we can prove the other obligations of \ref{it:bounded}.

\subsection{Uniform Convergence of Gradients}
\label{sec:unicon}

\cite{KOW23} show that under mild conditions, the smoothed objective function converges uniformly to the original, unsmoothed objective function. For \ref{it:univcd} we need to extend the result to gradients.

%Streamlining the complicated type system of \cite{KOW23},
% \subsection{Syntactic Restrictions for Convergence}

% \dw{simple example which doesn't converge}
% As a first step, we aim to ensure that $\sema F$ converges to $\sem F$ almost everywhere.
Recall that $\smooth$ converges (pointwisely) to the Heaviside function on $\Real\setminus\{0\}$ (cf.\ \cref{fig:sig}).
On the other hand, $\E_{z\sim\nd(\theta,1)}[\sema{\ifc 001}(z)]=\frac 1 2$ converges \emph{nowhere} to $\E_{z\sim\nd(\theta,1)}[\sem{\ifc 001}(z)]=1$.

To rule out such contrived examples, we require that conditions in if-statements only use functions which are a.e.\ not $0$.
 % \dw{does this make sense?} \lo{yes}
%\lo{How about: guards are not the constant 0}
% \dw{That's not quite enough. Something like $\ifc{(\ifc z 0 1)}01$ also needs to be ruled out as $\ifc z 0 1$ is $0$ on a non-null set.}
Formally, we define \emph{safe} guards and expressions inductively by:
\begin{gather*}
    \sfcae\ni G ::= f(z_{i_1},\ldots, z_{i_k})\mid\ifc GGG\\
    \sfcgs\ni S ::= z_j \mid f(S,\ldots,S)\mid\ifc G SS
\end{gather*}
where in the first rule we assume that $f\neq 0$ a.e.\footnote{By \cite{M15} this can be guaranteed if $f$ is analytic, not constantly $0$. (Recall that a function
$f: R^m\to R^n$ is analytic if it is infinitely differentiable and its multivariate Taylor expansion at every point $x_0\in\Real^m$ converges pointwise to $f$ in a
neighbourhood of $x_0$.)} and the $i_j$ are pairwise distinct.

Note that $\sfcae\subset\sfcgs\subset\sfct$ and for $G\in\sfcae$, $\sem G\neq 0$ a.e.
As a consequence, by structural induction, we can show that for $S\in\sfcgs$, $\sema S$ converges to $\sem S$ almost everywhere.

% In this section we expand on this result by showing that also the \emph{gradients} converge uniformly.

Exploiting a.e.\ convergence we conclude not only the uniform convergence of the smoothed objective function but also their \emph{gradients}:
% the fact that both $q_\para$ and $\frac\partial{\partial\theta_i}q_\para$ are generalised Schwartz functions we can show:
\begin{restatable}[Uniform Convergence]{proposition}{unicon}
  \label{thm:unicon}
  If $F\in\sfcgs$ then
   \begin{align*}
     \E_{\repx\sim\mdist_\para}[\sema F(\repx)]&\unif\E_{\repx\sim\mdist_\para}[\sem F(\repx)]\\
     \nabla_\para\,\E_{\repx\sim\mdist_\para}[\sema F(\repx)]&\unif\nabla_\para\,\E_{\repx\sim\mdist_\para}[\sem F(\repx)]
   \end{align*}
   as $\eta\searrow 0$ for $\para\in\parasp$.
\end{restatable}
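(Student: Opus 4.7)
The plan is to prove both convergences by combining (i) structural bounds showing $\sema F$ and its derivatives are integrable against the reparameterised density uniformly in $\eta$ and $\para$, (ii) the pointwise a.e.\ convergence $\sema F \to \sem F$ already sketched in the text for $F\in\sfcgs$, and (iii) a dominated-convergence argument whose dominating function is independent of $\para\in\parasp$, so that pointwise convergence in $\para$ lifts automatically to uniform convergence on the compact parameter space.

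First, by induction on $F\in\sfct$ I would produce a polynomial $p_F:\Real^n\to\Real$ (independent of $\eta$) with $|\sema F(\mathbf x)|,|\sem F(\mathbf x)|\leq p_F(\mathbf x)$, using $\smooth\in[0,1]$, that variables are linear, and that $\pop$ is polynomially bounded (\cref{ass:basic}). Using polynomial boundedness of $\rep_{(-)}$ uniformly in $\para\in\parasp$ (\cref{ass:basic} and \cref{ass:comp}), the composition satisfies $|(\sema F-\sem F)(\rep_\para(\lat))|\leq 2q_F(\lat)$ for a polynomial $q_F$ independent of $\eta,\para$. Since $q_F\cdot\mdist$ is a polynomial times a Schwartz function, it is integrable (\cref{lem:Schwprop}(\ref{it:prod},\ref{it:int})). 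To handle uniformity in $\para$, I change variables to $\repx$-space and dominate $\{\mdist_\para\}_{\para\in\parasp}$ by $\overline\mdist(\repx)\defeq\sup_{\para\in\parasp}\mdist_\para(\repx)$; using the polynomial bounds on $\rep_\para^{-1}$ and $|\det\Jac\rep_\para^{-1}|$ together with \cref{ass:comp} and \cref{eq:strdiff}, $\overline\mdist$ inherits enough rapid decay so that $p_F\cdot\overline\mdist$ is integrable. Then
\[
  \sup_{\para\in\parasp}\bigl|\E_{\repx\sim\mdist_\para}[(\sema F-\sem F)(\repx)]\bigr|\leq\int|\sema F(\repx)-\sem F(\repx)|\,\overline\mdist(\repx)\diff\repx,
\]
and the right-hand side tends to $0$ as $\eta\searrow 0$ by dominated convergence with dominator $2p_F\overline\mdist$, giving uniform convergence of the objective on $\parasp$.

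For the gradient, \cref{prop:unb} and the chain rule yield $\nabla_\para\E_\lat[\sema F(\rep_\para(\lat))]=\E_\lat[\Jac_\mathbf{x}\sema F(\rep_\para(\lat))\cdot\nabla_\para\rep_\para(\lat)]$, with an analogous expression for $\sem F$ (valid a.e., where $\sem F$ is differentiable). Repeating the previous template, I reduce to showing $\sup_{\para}\int\bigl|\nabla_\para\sema F(\rep_\para(\lat))-\nabla_\para\sem F(\rep_\para(\lat))\bigr|\,\mdist(\lat)\diff\lat\to 0$. Structural induction gives pointwise a.e.\ convergence of the integrand, and the same $\overline\mdist$-trick lifts pointwise to uniform convergence, provided I can build an $\eta$-uniform, polynomially bounded, integrable dominator for $|\nabla_\para\sema F(\rep_\para(\lat))|$.

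Producing that dominator is the main obstacle. Partial derivatives of $\sema F$ contain factors $\smooth'(\sema G(\repx))=\tfrac{1}{\eta}\sigma'\bigl(\tfrac{\sema G(\repx)}{\eta}\bigr)$ that spike like $1/\eta$ near the zero set of each guard $G$. Crucially, safety ($G\in\sfcae$ implies $\sem G\neq 0$ a.e.) forces the spike to sit on a set of Lebesgue measure $O(\eta)$, so $\smooth'(\sema G(\cdot))$ has $L^1$ mass $O(1)$ against Schwartz weights and converges weakly to a surface measure on $\{\sem G=0\}$. Iterating this bookkeeping through the nested if-structure of $F\in\sfcgs$---at each level paying one factor of $1/\eta$ but buying one factor of $\eta$ from the measure of the spike region---yields an $\eta$-uniform polynomial dominating function, completing the dominated-convergence argument. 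Without safety this trade-off collapses (if $\sem G\equiv 0$ on a positive-measure set, the $1/\eta$ spike carries unbounded mass), which is precisely why the assumption $F\in\sfcgs$ is needed.
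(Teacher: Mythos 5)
Your argument for the first convergence is sound and in substance matches the paper's: bound $|\sema F-\sem F|$ by an $\eta$- and $\para$-independent integrable dominator and invoke a.e.\ convergence (the paper implements the ``uniformly in $\para$'' step by an explicit splitting into sets $U_k$ and continuity from above of a finite measure rather than by your $\overline{\mdist}=\sup_\para\mdist_\para$ reduction, but the two are interchangeable here).

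The gradient part, however, has a fatal gap. By pushing $\nabla_\para$ onto the integrand you reduce the claim to $\int|\nabla_\para\sema F(\rep_\para(\lat))-\nabla_\para\sem F(\rep_\para(\lat))|\,\mdist(\lat)\diff\lat\to 0$, but (i) this quantity does not tend to $0$, and (ii) even if it did, it would establish convergence to the wrong limit. The expression $\E_\lat[\nabla_\para\sem F(\rep_\para(\lat))]$ is precisely the \emph{biased} reparameterisation estimator, which by \cref{ex:biased} differs from $\nabla_\para\E_\lat[\sem F(\rep_\para(\lat))]$ by a nonzero boundary term; since \cref{prop:unb} identifies $\nabla_\para\E[\sema F\circ\rep_\para]$ with $\E[\nabla_\para(\sema F\circ\rep_\para)]$, your reduction would force the smoothed gradients to converge to the biased estimator rather than to the true gradient, contradicting the very statement being proved whenever the boundary term is nonzero. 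The failure mechanism is visible in your own remark that $\smooth'(\sema G(\cdot))$ converges weakly-$*$ to a surface measure on $\{\sem G=0\}$: the $L^1$ mass of the difference does not vanish, it concentrates on a null set, so a.e.\ convergence of the integrand plus an $O(1)$ bound on its $L^1$ norm cannot close the argument (and an $L^1$ bound uniform in $\eta$ is in any case not a pointwise dominating function, which is what dominated convergence requires). The paper sidesteps all of this by differentiating the pushforward density instead of the integrand, writing $\nabla_\para\E_{\repx\sim\mdist_\para}[\sema F(\repx)]=\int\nabla_\para\mdist_\para(\repx)\cdot\sema F(\repx)\diff\repx$, so that only the zeroth-order a.e.\ convergence $\sema F\to\sem F$ is needed, combined with the integrability of $\sup_{\para\in\parasp}|\partial\mdist_{(-)}/\partial\theta_i|\cdot p$ established in \cref{lem:uniintder}.
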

% We outline the main technical ingredients of the proof in the next subsection.

\subsection{Bounding the Variance}
\label{sec:var}

Next, we analyse the variance for \ref{it:bvar}.
Recall that nesting if-statements in guards (e.g.\ $F_2$ in \cref{ex:terms}) results in nestings of $\smooth$ in the smoothed interpretation, which in view of the chain rule may cause high variance as $\smooth'(0)=\frac 1{4\eta}$.
% \begin{align*}
%   \frac\partial{\partial\theta_i}\smooth(\smooth(\rep_\para(\lat))=\smooth'(\smooth(\rep_\para(\lat)))\cdot\smooth'(\rep_\para(\lat))\cdot\frac{\partial}{\partial\theta_i}\rep_\para(\lat)
% \end{align*}
Therefore, to give good bounds, we classify expressions by their maximal nesting depth $\ell$ of the conditions of if-statements.
Formally, we define $\sfct_{\ell}$ inductively
\begin{enumerate}
  \item $z_j\in\sfct_0$
  \item If $f:\Real^k\to\Real\in\pop$ and $F_1,\ldots,F_k\in\sfct_\ell$ then $f(F_1,\ldots,F_k)\in\sfct_\ell$
  \item If $F\in\sfct_\ell$ and $G,H\in\sfct_{\ell+1}$ then\\$(\ifc FGH)\in\sfct_{\ell+1}$
  \item If $F\in\sfct_\ell$ then $F\in\sfct_{\ell+1}$
\end{enumerate}
Note that $\sfct=\bigcup_{\ell\in\nat}\sfct_\ell$. For the expressions in \cref{ex:terms}, $F_1,F'_1\in\sfct_1$ and $F_2\in\sfct_2$.

Now, exploiting the chain rule and the fact that $|\smooth'|\leq\eta^{-1}$,
it is relatively straightforward to show inductively that for $F\in\sfct_\ell$ there exists $L>0$ such that $\E_{\lat\sim\mdist}[\|\nabla_\para\sema F(\rep_\para(\lat))\|^2]\leq L\cdot\eta^{-2\ell}$ for all $\eta>0$ and $\para\in\parasp$.
However, we can give a sharper bound, which will allows us to enhance the accuracy more rapidly (in view of \cref{eq:stepvar}):
\begin{restatable}{proposition}{var}
  \label{prop:var}
  If $F\in\sfct_\ell$ then there exists $L>0$ such that for all $\eta>0$ and $\para\in\parasp$, $\E_{\lat\sim\mdist}[\|\nabla_\para\sema F(\rep_\para(\lat))\|^2]\leq L\cdot\eta^{-\ell}$.
\end{restatable}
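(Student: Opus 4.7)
The plan is a structural induction on $F$, organised by the level $\ell$ such that $F\in\sfct_\ell$. The key ingredient is the pointwise inequality
\[
(\smooth'(x))^2 \;\leq\; \frac{1}{4\eta}\,\smooth'(x),
\]
which follows from $\smooth'(x)=\eta^{-1}\sigma'(x/\eta)$ together with $\sigma'=\sigma(1-\sigma)\leq 1/4$. The naive argument uses $|\smooth'|\leq(4\eta)^{-1}$ pointwise, paying $\eta^{-2}$ per conditional after squaring and hence yielding the loose rate $\eta^{-2\ell}$. The sharpening comes from keeping one of the two $\smooth'$ factors inside the expectation: being an $L^1$-normalised approximate Dirac on $\{\sema F=0\}$, it contributes only $O(1)$ rather than $O(\eta^{-1})$, saving one factor of $\eta$ per conditional.

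First I would differentiate a conditional. Writing $\tau=1-\smooth$ so that $\smooth(-\sema F)=\tau(\sema F)$,
\[
\nabla_\para\sema{\ifc F G H} = \smooth'(\sema F)\,\nabla\sema F\,(\sema H-\sema G) + \tau(\sema F)\,\nabla\sema G + \smooth(\sema F)\,\nabla\sema H.
\]
Using $\|a+b+c\|^2\leq 3(\|a\|^2+\|b\|^2+\|c\|^2)$ and $\tau,\smooth\in[0,1]$, the last two summands are dispatched immediately by the inductive hypothesis on $G,H\in\sfct_{\ell+1}$. To make the induction go through I would strengthen the statement: for every $F\in\sfct_\ell$ and every polynomial $Q:\Real^n\to\Realnn$, there is $L_Q$ (uniform in $\eta>0$ and $\para\in\parasp$) with $\E_\lat\bigl[\|\nabla_\para\sema F(\rep_\para(\lat))\|^2\,Q(\lat)\bigr]\leq L_Q\,\eta^{-\ell}$, and $|\sema F|$ is dominated by a polynomial uniformly in $\eta,\para$. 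The base case $z_j$ and the smooth-application case $f(F_1,\ldots,F_k)$ are then routine from the chain rule, the polynomial bounds in \cref{ass:basic}, and the absorption property of Schwartz densities in \cref{lem:Schwprop}\ref{it:prod}.

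The main obstacle is the remaining conditional term. After applying the key inequality, the task reduces to showing
\[
\E_\lat\bigl[\smooth'(\sema F(\rep_\para(\lat)))\,\|\nabla\sema F(\rep_\para(\lat))\|^2\,(\sema H-\sema G)^2\bigr] = O(\eta^{-\ell}),
\]
so that the $(4\eta)^{-1}$ prefactor yields the target $O(\eta^{-(\ell+1)})$. I would prove this via a delta-localisation lemma: by the coarea formula,
\[
\int\smooth'(\sema F(\repx))\,\Psi(\repx)\,\diff\repx \;=\; \int_\Real\smooth'(t)\biggl(\int_{\{\sema F=t\}}\frac{\Psi}{\|\nabla\sema F\|}\,\diff H^{n-1}\biggr)\diff t,
\]
and since $\int_\Real\smooth'(t)\,\diff t=1$, the left-hand side is uniformly bounded in $\eta>0$ whenever the inner surface integral is uniformly bounded in $t$. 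The latter follows, after pushing forward by $\rep_\para$, from the smoothness and polynomial bounds on $\pop$ (\cref{ass:basic}, item (2)), the polynomial bounds and uniform non-degeneracy of $\Jac\rep_\para$ (\cref{ass:basic}, item (3), together with \cref{eq:strdiff}), and the rapid decay of $\mdist$ (\cref{lem:Schwprop}), once the Schwartz density is absorbed into $\Psi$; the factor $\|\nabla\sema F\|^2$ is then accommodated by the strengthened IH on $F\in\sfct_\ell$, giving the inner expectation the required $O(\eta^{-\ell})$ rate and closing the induction.
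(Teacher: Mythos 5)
Your high-level intuition is exactly the one the paper exploits --- a single factor of $\smooth'$ should cost only $O(1)$ in expectation because it is an $L^1$-normalised bump, and that is what upgrades the naive $\eta^{-2\ell}$ to $\eta^{-\ell}$ --- but the step where you cash this in is where the argument breaks. Your delta-localisation lemma via the coarea formula needs the inner surface integrals $\int_{\{\sema F=t\}}\|\nabla\sema F\|\,\Psi\,\diff H^{n-1}$ to be bounded uniformly in $t$ with the right $\eta$-dependence, and nothing in your outline, in the inductive hypothesis, or in \cref{ass:basic} delivers this. The strengthened IH controls \emph{volume} integrals of $\|\nabla\sema F\|^2$ against the density; it says nothing about how that mass distributes across individual level sets, so it cannot simply be ``accommodated'' into a per-level-set bound. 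Moreover, the geometry of the level sets of $\sema F$ is itself $\eta$-dependent once guards are nested (e.g.\ $\sema F=\smooth(z)$ when the guard is itself a conditional, where the level-set quantity is $\Theta(\eta^{-1})$, not $O(1)$), the weighted $(n{-}1)$-dimensional measure of $\{\sema F=t\}$ is not controlled by polynomial bounds on the derivatives of the primitives, and the decomposition degenerates outright at critical points of $\sema F$ (e.g.\ a constant guard, where $\nabla\sema F\equiv 0$). So the hypothesis of your lemma fails in the generality in which you need to invoke it, and the key estimate is left unestablished.

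The paper obtains the same ``one $\smooth'$ costs $O(1)$'' effect by a mechanism that avoids level-set geometry entirely: integration by parts. One writes $\smooth'(\sema F)\cdot\frac{\partial\sema F}{\partial z_i}=\frac{\partial(\smooth\circ\sema F)}{\partial z_i}$ and integrates by parts in $z_i$, so that only the antiderivative $\smooth(\sema F)\le 1$ remains while the derivative falls on the Schwartz density, the polynomial weights and $\sema F$ itself; the resulting second-derivative terms are handled by running the induction simultaneously on \emph{two} quantities, $\int f_\para\, p\,\bigl|\partial^2\sema F/\partial z_i^2\bigr|$ and $\int f_\para\, p\,\bigl|\partial\sema F/\partial z_i\bigr|^2$ (\cref{lem:varaux}), the second being derived from the first by exactly this integration by parts. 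The companion inequality $|\smooth''|\le\eta^{-1}\smooth'$ plays the role of your $(\smooth')^2\le\tfrac1{4\eta}\smooth'$. If you want to salvage your outline, replace the coarea step by this integration-by-parts step and strengthen your inductive hypothesis to also track the first moment of the second derivatives; as it stands the proof does not go through.
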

To get an intuition of the proof presented in \cref{app:var}, we consider our running \cref{ex:terms}, $F'_1\equiv(\ifc z 0 1)\in\sfct_1$ and $\phi_\theta(s)=s+\theta$:
{\allowdisplaybreaks
\begin{align*}
  &\E_{s\sim\nd}\left[\left|\frac{\partial(\sema {F'_1}\circ\phi_{(-)})}{\partial\theta}(\theta,s)\right|^2\right]=\E\left[\left(\smooth'(\phi_\theta(s))\right)^2\right]\\
  &\leq\eta^{-1}\cdot\int\nd(s)\cdot\frac{\partial(\smooth\circ\phi_{(-)})}{\partial s}(\theta,s)\diff s\\
  &=\eta^{-1}\cdot\left(\left[\nd(s)\cdot\smooth(\phi_\theta(s))\right]_{-\infty}^\infty-\int \nd'(s)\cdot\smooth(\phi_\theta(s))\diff s\right)\\
  &\leq\eta^{-1}\cdot\int |\underbrace{\nd'(s)}_{\text{Schwartz}}|\diff s\leq\eta^{-1}\cdot L
\end{align*}
}
where we used integration by parts in the third step.

\subsection{Concluding Correctness}
\label{sec:cc}
Having bounded the variance, we present a scheme of accuracy coefficients compatible with the scheme of step sizes $\gamma_k=1/k$, which is the classic choice for SGD.
Note that for any $\epsilon>0$, $\sum_{k\in\nat}\frac 1 {k^2}\cdot (k^{\frac 1\ell-\epsilon})^\ell<\infty$.
% \begin{align*}
%   \sum_{k\in\nat}\frac 1 k&=\infty&
%   \sum_{k\in\nat}\frac 1 {k^2}\cdot \left(k^{\frac 1\ell-\epsilon}\right)^\ell<\infty
% \end{align*}
Therefore, by \cref{prop:var} we can choose accuracy coefficients $\eta_k\in\Theta(k^{-\frac 1\ell+\epsilon})$. 
%(NB The sharper bound allows us to enhance the accuracy more rapidly.)
Finally, with \cref{prop:cdsgd,thm:unicon} we conclude the correctness of DSGD for smoothings:
\begin{theorem}[Correctness of DSGD]
  \label{thm:corr}
  Let $F\in\sfct_\ell\cap\sfcgs$ and $\epsilon>0$. 
  Then DSGD is correct for $\sem F_{\eta_k}$,
  %  (in the sense of the conclusion of \cref{prop:cdsgd})
    %for $f_k\defeq\llbracket F\rrbracket_{\eta_k}\circ\rep_{(-)}$, 
    $\gamma_k\in\Theta(1/k)$ and $\eta_k\in\Theta(k^{-\frac 1\ell+\epsilon})$:\\
  almost surely $\liminf_{i\to\infty}\|\nabla_\para\,\E_{\lat\sim\mdist}[\sem F(\rep_\para(\lat))]\|=0$ or $\para_i\not\in\parasp$ for some $i\in\nat$.
  % \lo{Correctness here means that the preconditions of \cref{prop:cdsgd} hold with $f_k = \llbracket F\rrbracket_{\eta_k}\circ\rep_{(-)}$, and hence also the conclusion.}
\end{theorem}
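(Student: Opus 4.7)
The plan is to reduce the theorem to Proposition~\ref{prop:cdsgd} by instantiating $f_k \defeq \sema[\eta_k]{F}\circ\rep_{(-)}$ and $f \defeq \sem F\circ\rep_{(-)}$, so that $g_k(\para) = \E_{\lat\sim\mdist}[\sema[\eta_k]F(\rep_\para(\lat))]$ and $g(\para) = \E_{\lat\sim\mdist}[\sem F(\rep_\para(\lat))]$. The bulk of the work is then to verify (i)~the step-size/variance condition~\eqref{eq:stepvar} and (ii)~the four premises \ref{it:unb}--\ref{it:univcd}, all of which have already been prepared by the preceding propositions.

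For \eqref{eq:stepvar}: with $\gamma_k \in \Theta(1/k)$ we have $\sum_k \gamma_k = \infty$. By Proposition~\ref{prop:var} we may take $V_k = L\cdot\eta_k^{-\ell}$, so with $\eta_k \in \Theta(k^{-1/\ell+\epsilon})$
\[
\gamma_k^2\cdot V_k \;\in\; \Theta\!\bigl(k^{-2}\cdot k^{(1/\ell-\epsilon)\ell}\bigr) \;=\; \Theta\!\bigl(k^{-1-\epsilon\ell}\bigr),
\]
which is summable precisely because $\epsilon>0$ (this is exactly why the factor~$\epsilon$ is introduced in the exponent scheme). Condition~\ref{it:unb} is Proposition~\ref{prop:unb} applied at each fixed $\eta_k$, condition~\ref{it:bvar} is Proposition~\ref{prop:var}, and condition~\ref{it:univcd} is the gradient half of Proposition~\ref{thm:unicon}; each applies because $F\in\sfct_\ell\cap\sfcgs$, matching the hypotheses of these results.

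The only non-routine obligation is~\ref{it:bounded}: boundedness, Lipschitz continuity, and Lipschitz smoothness of the \emph{unsmoothed} objective $g$ on the compact $\parasp$ (Assumption~\ref{ass:comp}). The strategy is the one indicated at the start of Section~\ref{sec:precon}: push derivatives through the integral by writing $g(\para) = \int \mdist(\lat)\,\sem F(\rep_\para(\lat))\diff\lat$ or, after the change of variables $\repx = \rep_\para(\lat)$, as $\int \mdist_\para(\repx)\,\sem F(\repx)\diff\repx$, and dominate $\partial^\alpha \mdist_\para$ (and its suprema over the compact $\parasp$) by Schwartz bounds in $\repx$, using Assumption~\ref{ass:basic} on $\pop$ and $\rep_{(-)}$ together with Assumption~\ref{eq:strdiff} to control $|\det \Jac\rep_\para|^{-1}$. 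Lemma~\ref{lem:Schwprop}\ref{it:prod} then turns the polynomial growth of $\sem F$ (and of its first- and second-order partials where they exist, handled a.e.\ since the discontinuity set has measure zero) into an integrable envelope independent of $\para$. Boundedness of $g$ follows directly; Lipschitz continuity follows from the resulting uniform bound on $\|\nabla g\|$; and Lipschitz smoothness follows from the analogous uniform bound on the Hessian, obtained by differentiating once more under the integral with the same Schwartz envelope.

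The main obstacle is precisely the Lipschitz-smoothness part of \ref{it:bounded}: since $\sem F$ itself is only piecewise smooth, one cannot naively differentiate twice inside the integral. The argument has to move all derivatives onto $\mdist_\para$ (or onto the composition $\mdist\circ\rep_\para^{-1}\cdot|\det\Jac\rep_\para^{-1}|$), which is where Assumption~\ref{eq:strdiff} is essential to keep the Jacobian determinant bounded away from zero uniformly in $\para$, and where polynomial-boundedness of all partial derivatives of $\rep_{(-)}$ (not just $\rep_{(-)}$ itself) matters. Once~\ref{it:bounded} is in place, Proposition~\ref{prop:cdsgd} fires immediately and yields the stated dichotomy for $g(\para) = \E_{\lat\sim\mdist}[\sem F(\rep_\para(\lat))]$.
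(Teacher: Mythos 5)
Your proposal is correct and follows essentially the same route as the paper: reduce to Proposition~\ref{prop:cdsgd}, verify \eqref{eq:stepvar} via the computation $\gamma_k^2 V_k\in\Theta(k^{-1-\epsilon\ell})$, discharge \ref{it:unb}, \ref{it:bvar}, \ref{it:univcd} by Propositions~\ref{prop:unb}, \ref{prop:var}, \ref{thm:unicon}, and establish \ref{it:bounded} by moving all $\para$-derivatives onto $\mdist_\para$ and invoking the Schwartz/polynomial integrability machinery of \cref{app:precon} (Lemma~\ref{lem:uniintder}, Corollary~\ref{prop:lipsch}) under Assumption~\ref{eq:strdiff}. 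The only small wobble is the parenthetical suggesting one needs a.e.\ control of first- and second-order partials of $\sem F$ --- as your final paragraph correctly notes, the paper's bound only ever uses the polynomial growth of $\sem F$ itself, since every derivative lands on the density.
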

For instance for $F\in\sfct_1$ we can choose $\eta_k\in\Theta(1/{\sqrt k})$.
Crucially, for the choice of accuracy coefficients only the syntactic structure (i.e.\ nesting depth) of terms is essential. In particular, there is no need to calculate bounds on the Hessian, the constant in \cref{prop:var}, etc.

% \dw{Great theoretical advantage: regardless of constants, convergence asymptotically}

% \paragraph{Theoretical Comparison of Variances.}

\changed[dw]{
\subsubsection{Discussion}

\paragraph{Accuracy coefficient schedule for other learning rate schemes.}

% \textit{Section 5.3: Is there a recommended way to derive scheduling for other learning rate schemes? For example, cosine annealing is often used in large-scale deep learning. Can we derive a schedule for that?}

The requirement on the step sizes and accuracy coefficients stipulated by \cref{eq:stepvar} can be relaxed to
\begin{equation}
    \label{eq:var}
    (\sum_{k\in\mathbb N}\gamma_k)^{-1}\cdot\sum_{k\in\mathbb N}\gamma_k^2\cdot V_k=0
\end{equation}
 which is useful for deriving other admissible accuracy coefficient schemes.
For instance, step size $\gamma_k=1/\sqrt k$ violates \cref{eq:stepvar}. However, for terms with nesting depth $\ell=1$, the accuracy coefficients $\eta_k=k^{-0.5+\epsilon}$ (where $\epsilon>0$ is arbitrary) satisfy this relaxed requirement, and we obtain the same correctness guarantees as Theorem 5.4.

  \paragraph{Choice of $\epsilon$.}
  % \begin{remark}[Choice of $\epsilon$]
    Whilst asymptotically \emph{any} choice of $\epsilon>0$ enjoys the theoretical guarantees of \cref{thm:corr}, the \emph{speed} of convergence of the gradient norm of the unsmoothed objective is governed by two summands:
    (1) the speed of convergence of the gradient of the smoothed objective function to the gradient of the unsmoothed objective, and
    (2) the accumulated contribution of the variances (more formally: the speed of convergence of the finite sums of the relaxation \cref{eq:var} above of \cref{eq:stepvar} to $0$ and thus the magnitude of $\sum_{i\in\mathbb N}1/k^2\cdot (k^{1/\ell-\epsilon})^\ell$).
    
    % the accumulated contribution of the variances (more formally: the magnitude of $\sum_{i\in\mathbb N}\frac 1{k^2}\cdot (k^{\frac 1\ell-\epsilon})^\ell$, which determines the speed of convergence of the finite sums in \cref{eq:dsgdproof} in \cref{app:dsgd} to $0$).
Whilst for the former, small $\epsilon>0$ is beneficial (enhancing the accuracy rapidly), for the latter large $\epsilon$ is beneficial. Consequently, for best performance in practice a trade-off between the two effects needs to be made, and we found the choices reported below to work well.
}
  % Whilst asymptotically the selection of $\epsilon>0$ does not matter in theory, in practice the choice has an effect on performance in a (necessarily finite) optimisation trajectory. Empirically, we found the values reported in \cref{sec:expsetup} to work well.

  % The choice of $\epsilon$ needs to make a trade-off between two phenomena: smaller $\epsilon$ means faster convergence of approximation quality, whilst larger $\epsilon$ causes the (finite) variance term in \cref{eq:morrmon2} to be smaller, which in view of the proof of \cref{prop:cdsgd} is beneficial for small norms of gradients.
% \end{remark}

% \begin{restatable}[Average Variance of Run]{remark}{avvar}
%   \label{rem:avvar}
\paragraph{Average Variance of Run}
  By \cref{prop:var} the average variance of a finite DSGD run with length $N$ is bounded by (using H\"older's inequality)
  \[
  \frac 1N\cdot\sum_{k=1}^N\eta_k^{-\ell} \leq \eta_{\frac{N+1}2}^{-\ell}
  \]
  Consequently, the average variance of a DSGD run is lower than for standard SGD with a fixed accuracy coefficient $\eta<\eta_{\frac{N+1}2}$, the accuracy coefficient of DSGD after half the iterations.

\section{EMPIRICAL EVALUATION}
% !TEX root = ../main.tex

\label{sec:eval}

We evaluate our \textsc{DSGD} procedure against SGD with the following gradient estimators:
the biased reparameterisation estimator (\textsc{Reparam}), the unbiased correction thereof (\textsc{LYY18}, \citep{LYY18}), the smoothed reparametrisation estimator of \cite{KOW23} for a \emph{fixed} accuracy coefficient (\textsc{Fixed}), and the unbiased (\textsc{Score}) estimator.

\paragraph{Models.} We include the models from \cite{LYY18,KOW23} and add a \texttt{random-walk} model. We summarise some details (the additional models are covered in \cref{app:eval}):
% \begin{itemize}
%   \item 

\texttt{temperature} \cite{DBLP:conf/qest/SoudjaniMN17} model a controller keeping the temperature of a room within set
bounds. The discontinuity arises from the discrete state of the controller, being either on or off. The model has a 41-dimensional latent variable and 80 if-statements.
  % \item 

  \texttt{random-walk} models a random walk (similar to \citep{MOPW}) of bounded length. The goal is to infer the starting position based on the distance walked. The walk stops as soon as the destination is reached. This is checked using if-statements and causes discontinuities. In each step a normal-distributed step is sampled and its absolute value is added to the distance walked so far, which accounts for more non-differentiabilities. Overall, the model has a 16-dimensional latent variable and 31 if-statements.
  
  \texttt{xornet} is a multi-layer neural network trained to compute the XOR function with all activation functions being the Heaviside step function.
  %  which is traditionally infeasible for gradient-based optimisation because of the discontinuity at $0$ and a zero gradient everywhere else.
  The model has a 25-dimensional latent space (for all the weights and biases) and 28 if-statements.
  The \lyy estimator is \emph{not} applicable to this model since the branch conditions are not all affine in the latent space.
  
  $F_2$ from \cref{ex:terms} can be viewed as a (stark) simplification of \texttt{xornet}. It is also the only model in which the guards of if-statements contain variables which in turn depend on branching. As such, \texttt{xornet} corresponds to a term in $\sfct_3$, whereas all other models correspond to $\sfct_1$.

  % \item

% \end{itemize}

\begin{figure}[t!]

  \vspace*{-3mm}
  \centering
  %\small
  \begin{subfigure}[h]{0.8\linewidth}
    \includegraphics[width=\linewidth]{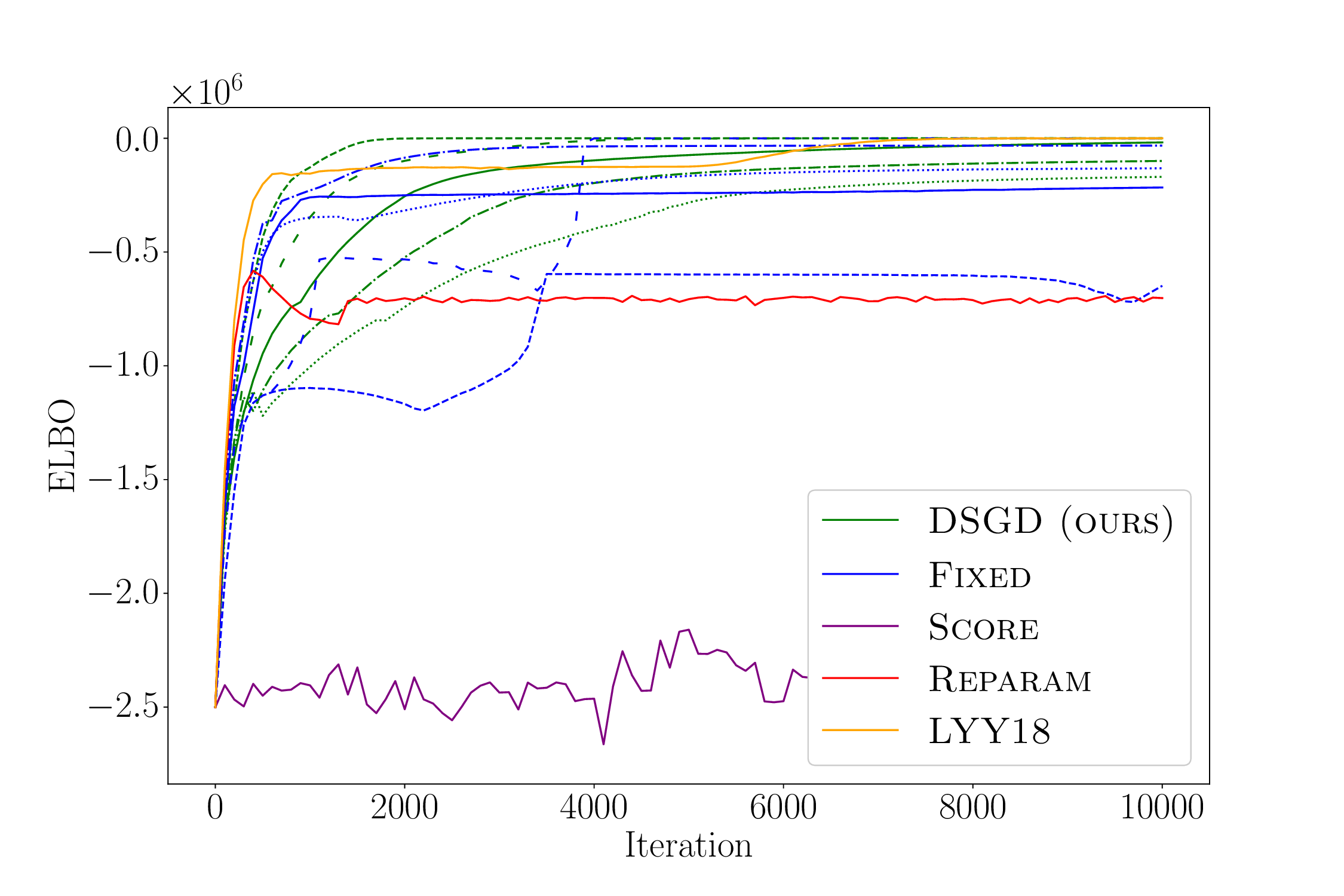}

    \vspace*{-3mm}
    \caption{\texttt{temperature}}
    \label{fig:temperature-graph}
  \end{subfigure}
  \\[-5pt]
  \begin{subfigure}[h]{0.8\linewidth}
    \includegraphics[width=\linewidth]{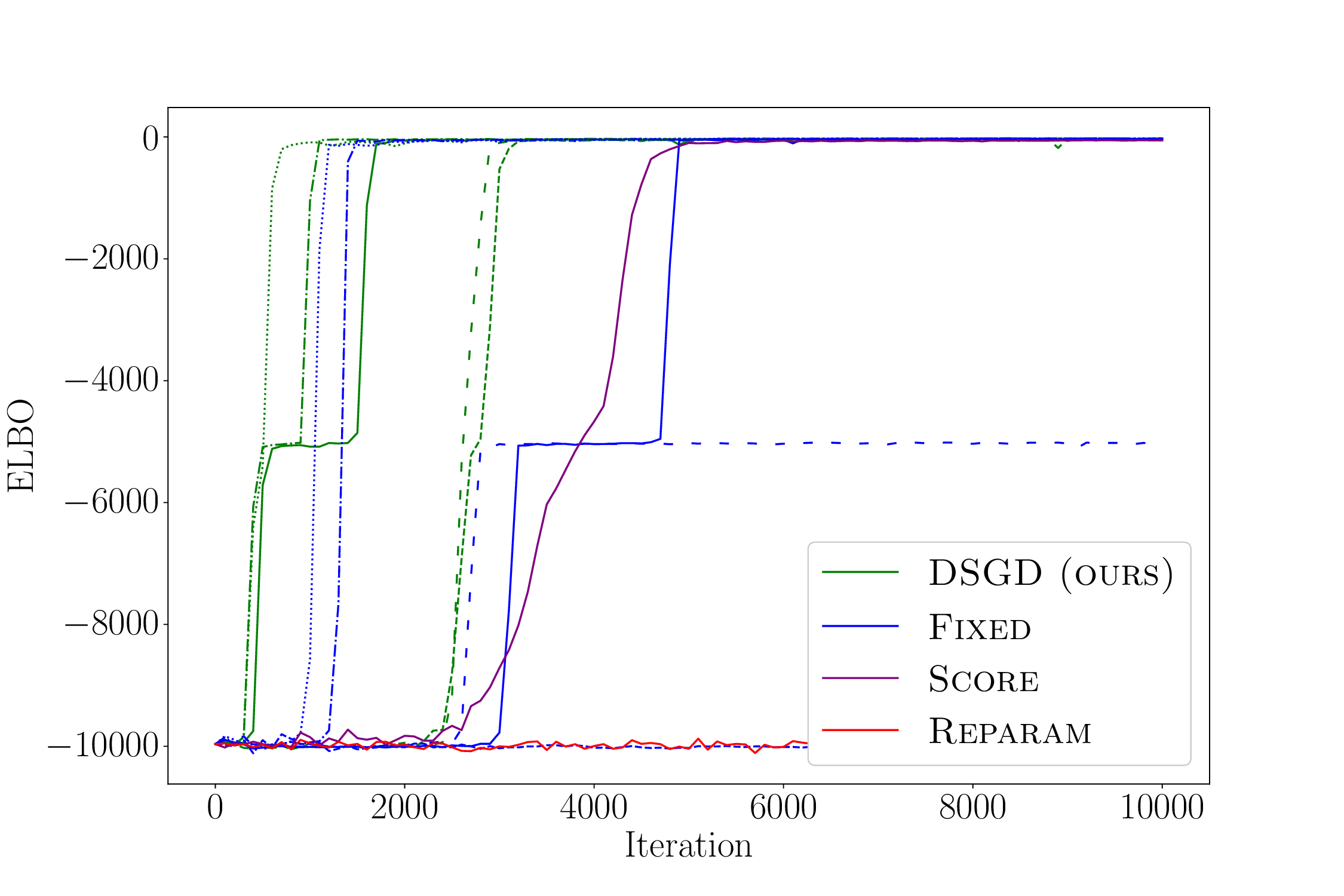}

    \vspace*{-3mm}
    \caption{\texttt{xornet}}
    \label{fig:xornet-graph}
  \end{subfigure}
\\[-5pt]
% \vspace*{0pt}
\begin{subfigure}[h]{0.8\linewidth}
  \includegraphics[width=\linewidth]{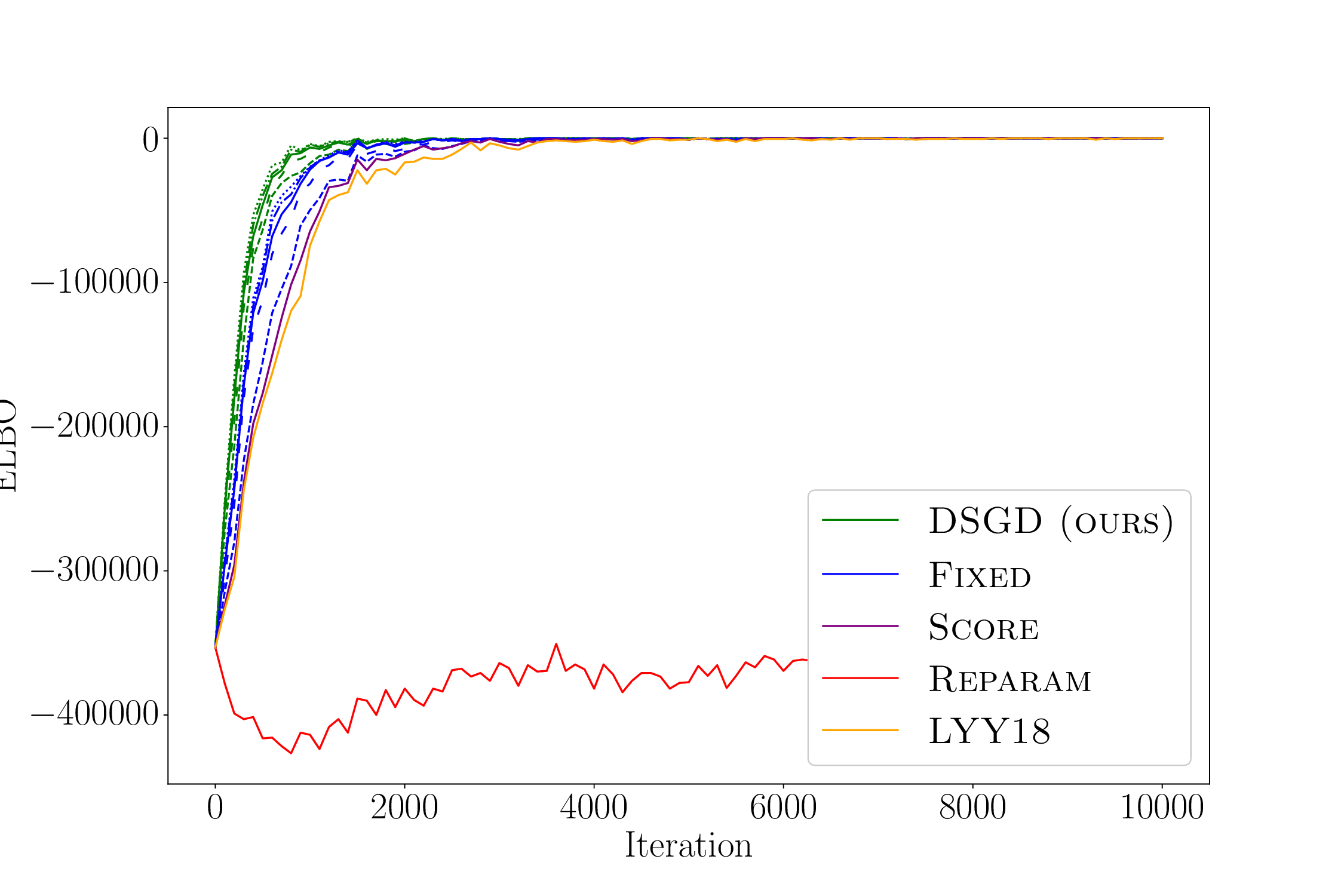}

  \vspace*{-3mm}
  \caption{\texttt{random-walk}}
  \label{fig:walk-graph}
\end{subfigure}
\\[-5pt]
\begin{subfigure}[h]{0.8\linewidth}
  \includegraphics[width=\linewidth]{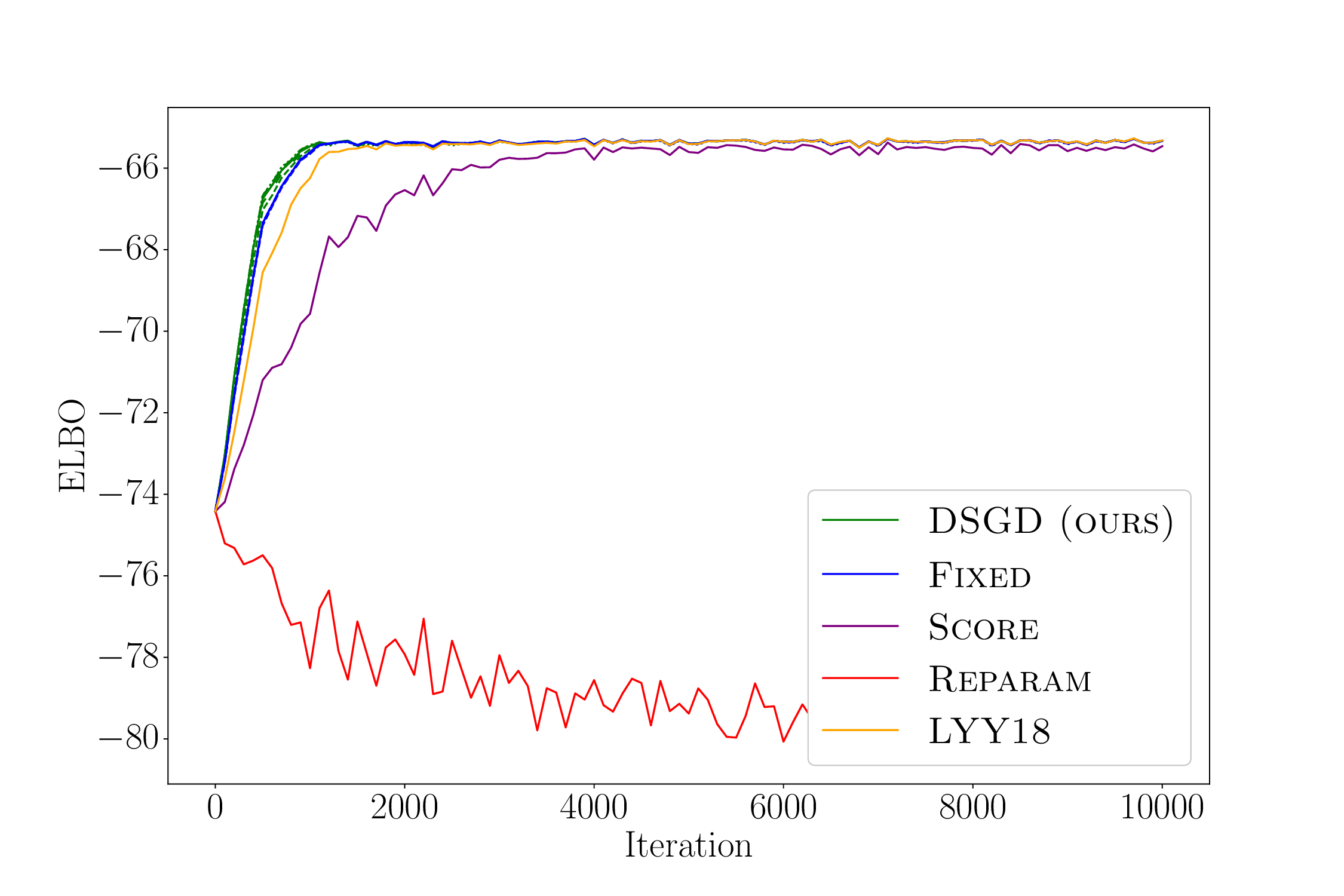}

  \vspace*{-3mm}
  \caption{\texttt{cheating}}
  \label{fig:cheating-graph}
\end{subfigure}

\vspace*{-2mm}
\caption{ELBO trajectories for each model. A single colour is used for each
  estimator and the choice of $\eta=\eta_{4000}=0.06,0.1,0.14,0.18,0.22$ (which determines $\eta_0$) is represented by dashed, loosely dashed, solid,  dash-dotted, dotted lines, respectively.
  \vspace*{-1pt}}
  \label{fig:elbo}
\end{figure}
\paragraph{Experimental Set-Up.}
The (Python) implementation is based on \citep{KOW23,LYY18}. We employ the \texttt{jax} library to provide automatic differentiation which is used to implement each of the above estimators for an arbitrary (probabilistic) program.
The smoothed interpretation can be obtained automatically by (recursively) replacing conditionals $\ifc {E_1}{E_2}{E_3}$ with $\smooth(-E_1)\cdot E_2+\smooth(E_1)\cdot E_3$ in a preprocessing step. (We avoid a potential blowup by using an auxiliary variable for $E_1$.)

In view of \cref{thm:corr}, for \textsc{DSGD} we choose the accuracy coefficient schemes $\eta_k\defeq\eta_0/\sqrt k$ for $\eta_0>0$; due to the nesting of guards we use $\eta_k\defeq\eta_0\cdot k^{-0.2}$ for \texttt{xornet}.
We compare (using the same line style) \textsc{DSGD} for different choices\footnote{The choice of benchmarked hyperparameters for \cref{fig:elbo} expands the range by \cite{KOW23}, who use $0.1,$ $0.15,0.2$, and we uniformly split the range in steps of $0.04$.} of $\eta_0$ to \textsc{Fixed} using the fixed accuracy coefficient corresponding to $\eta_{4000}$.

To enable a fair comparison to \citep{KOW23,LYY18}, we follow their set-up and use the state-of-the-art stochastic optimiser \texttt{Adam}\footnote{together with the respective gradient estimators, e.g.\ $\nabla\sem F_{\eta_k}(\rep_\para(\lat))$ in step $k$ for \textsc{DSGD}} with a step size of \texttt{0.001}, except for \texttt{xornet} for which we use \texttt{0.01}, for 10,000 iterations.
For each iteration, we use $16$ Monte Carlo samples from the chosen estimator to compute the gradient.
As in \citep{LYY18}, the \textsc{LYY18} estimator does not compute the boundary surface term exactly, but estimates it using a single subsample.

% \dw{Do you think it's necessary to give more details how the smoothing is computed, i.e.\ something along the lines of the paragraph ``Compilation for First-Order Programs'' in ESOP?}
% \lo{I'd add this to the appendix.}
 
% Finally, our \texttt{jax} implementation of the \textsc{LYY18} estimator followed the inventors' implementation as closely as possible so we claim it
% is a fair comparison point.
% To support our claim, in the supplementary material we include results from adding our \textsc{Smooth} estimator to the code made publicly available by \citet{LYY18} and show that the results are still comparable.

For every $100$ iterations, we take $1000$ samples of the estimator to estimate the current ELBO value and the variance of the gradient.
Since the gradient is a vector, the variance is taken in two ways: averaging the component-wise variances and the variance of the L2 norm.

We separately benchmark each estimator by computing the number of iterations each can complete in a fixed time budget;
the computational cost of each estimator is then estimated to be the reciprocal of this number.
This then allows us to compute a set of \emph{work-normalised} variances \citep{Botev2017} for each estimator, which are the product of the computational cost and the variance\footnote{This is a more suitable measure than ``raw'' variances since the latter can be improved routinely at the expense of computational efficiency by taking more samples.}.

% \subsection{Models}
% \label{sec:mod}

\begin{table*}[t]
  \caption{Mean of the final ELBO (the higher the better)
  for different random seeds and indicating error bars (the $\pm$ is one standard deviation).}
  \label{tab:finalelbo}
  \vspace*{2pt}
  \begin{subtable}{\linewidth}
    \caption{\texttt{temperature}}
    \centering
  \begin{tabular}{c|ccccc}
    \toprule
    $\eta/\eta_{4000}$ & \textsc{DSGD} (ours) & \textsc{Fixed} & \textsc{Score} & \textsc{Reparam} & \textsc{LYY18}\\
    \midrule
    0.06 & -76 $\pm$ 1 & -624,250 $\pm$ 44,121\\
    0.1 & -84 $\pm$ 2 & -425 $\pm$ 9 & -2,611,479 $\pm$ 255,193 & -706,729 $\pm$ 4,697 & -17,502 $\pm$ 52,044\\
    0.14 & -15,476 $\pm$ 4,641 & -121,932 $\pm$ 85,460\\
    \bottomrule
  \end{tabular}
\end{subtable}

\vspace*{5pt}
\begin{subtable}{\linewidth}
  \caption{\texttt{xornet}}
  \centering
\begin{tabular}{c|ccccc}
  \toprule
  $\eta/\eta_{4000}$ & \textsc{DSGD} (ours) & \textsc{Fixed} & \textsc{Score} & \textsc{Reparam}\\
  \midrule
  0.06 & -3,530 $\pm$ 3,889 & -5,522 $\pm$ 4,136\\
  0.1 & -33 $\pm$ 7 & -2,029 $\pm$ 3,305 & -553 $\pm$ 1,507 & -9,984 $\pm$ 38
\\
  0.14 & -27 $\pm$ 4 & -2,028 $\pm$ 3,986\\
  \bottomrule
\end{tabular}
\end{subtable}
\end{table*}

\begin{table}
  \caption{Computational cost and work-normalised variances, all given as ratios with respect to the \textsc{Score} estimator (omitted since it would be all $1$s).}
  \label{tab:var}

  \vspace*{2pt}
  \begin{subtable}{\linewidth}
    \caption{\texttt{temperature}}
    \label{tab:temperature}
    \centering
    \begin{tabular}{lcccr}
      \toprule
      Estimator & Cost & $\textrm{Avg}(V(.))$ & $V(\|.\|_2)$ \\
      \midrule
      \textsc{DSGD} (ours) &  1.71 &   4.91e-11 &   2.54e-10\\
      \textsc{Fixed} &   1.71 &  2.84e-10 &   2.24e-09 \\
      \textsc{Reparam} &  1.26 &  1.47e-08  &  1.94e-08\\
      \textsc{LYY18} &   9.61 & 1.05e-06   & 4.04e-05 \\
      \bottomrule
    \end{tabular}
  \end{subtable}

  \vspace*{5pt}
  \begin{subtable}{\linewidth}
    \caption{\texttt{xornet}}
    % \label{tab:s}
    \centering
    \begin{tabular}{lcccr}
      \toprule
      Estimator & Cost & $\textrm{Avg}(V(.))$ & $V(\|.\|_2)$ \\
      \midrule
      \textsc{DSGD} (ours) &   1.74 & 6.21e-03  &  3.66e-02 \\
      \textsc{Fixed} &    1.87 &   1.21e-02  &  5.43e-02  \\
      \textsc{Reparam} &   0.388  &  8.34e-09  &  2.62e-09 \\
      \bottomrule
    \end{tabular}
  \end{subtable}

  \vspace*{5pt}
  \begin{subtable}{\linewidth}
    \caption{\texttt{random-walk}}
    \label{tab:w}
    \centering
    \begin{tabular}{lcccr}
      \toprule
      Estimator & Cost & $\textrm{Avg}(V(.))$ & $V(\|.\|_2)$ \\
      \midrule
      \textsc{DSGD} (ours) &  4.70  &  1.71e-01   & 2.61e-01\\
      \textsc{Fixed} &   4.70 &  9.50e-01  &  1.49\\
      \textsc{Reparam} & 2.17 &  8.63e-10   & 7.01e-10 \\
      \textsc{LYY18} & 4.81 &  7.92   & 1.26e+01\\
      \bottomrule
    \end{tabular}
  \end{subtable}

  \vspace*{5pt}
  \begin{subtable}{\linewidth}
    \caption{\texttt{cheating}}
    \label{tab:cheating}
    \centering
    \begin{tabular}{lcccr}
      \toprule
      Estimator & Cost & $\textrm{Avg}(V(.))$ & $V(\|.\|_2)$ \\
      \midrule
      \textsc{DSGD} (ours)&   1.52 &   2.31e-03  &  3.51e-03 \\
      \textsc{Fixed} &   1.52 &  2.84e-03 &   4.64e-03 \\
      \textsc{Reparam} & 9.36e-01 &   4.14e-19  &  1.16e-18\\
      \textsc{LYY18} &   2.59 &   4.27e-02  &  1.09e-01\\
      \bottomrule
    \end{tabular}
  \end{subtable}
\end{table}

\paragraph{Analysis of Results.}
The ELBO trajectories as well as the data for computational cost and variance are presented \cref{fig:elbo,tab:var} (additional models are covered in \cref{app:eval}).
\changed[dw]{Besides, \cref{tab:finalelbo} lists the mean and standard deviation of the final ELBO across different seeds for the random number generator.}
Empirically, the bias of \textsc{Reparam} becomes evident and \textsc{Score} exhibits very high variance, resulting in slow convergence or even inferior results (\cref{fig:temperature-graph}).

Whenever the \textsc{LYY18} estimator is applicable\footnote{For \texttt{xornet}, \textsc{LYY18} is \emph{not} applicable as there are non-affine conditions in if-statements.}, the trajectories for \textsc{DSGD} perform comparably, however \textsc{DSGD} attains orders of magnitude reduction in work-normalised variance (4 to 20,000 x).

Compared to \textsc{Fixed}, we observe that \textsc{DSGD} is more robust\footnote{On a \emph{finite} run our \emph{asymptotic} convergence result \cref{thm:corr} cannot completely eliminate the dependence on this choice.} to the choice of (initial) accuracy coefficients (especially for \texttt{temperature} and \texttt{xornet}). Besides, there is a moderate improvement of variance (\cref{tab:var}).
%  \dw{This suggests that \cref{rem:avvar} is conservative.} 
% \dw{Should we comment that one reason why the choice still matters is that the correctness result is asymptotic but the time budget is fixed in advance so only certain accuracy coefficients can possibly be explored? Or does this downplay the contribution?}

%
%
% This is partly because our estimator has a comparable computational cost to the \textsc{Reparam} estimator,
% but also because it has a low variance, likely from differentiability of the smoothed models.

% Experiments with our prototype implementation confirm the benefits of reduced variance and unbiasedness.
% Compared to the unbiased correction of the reparameterised gradient estimator,
% our estimator has a similar convergence, but is simpler, faster, and attains orders of magnitude (2 to 3,000 x) reduction in work-normalised variance.
% \dw{also better than Fixed}
% }

% \bk{Besides the experimental data, the implementations themselves of the estimators
% show that constructing the \textsc{LYY18} estimator takes a considerable amount
% more of code and restrictions on the model compared to the \textsc{Smooth}
% estimator, which is quite similar to the \textsc{Reparam} estimator in terms of
% code complexity.
% }

\section{CONCLUDING REMARKS}
\label{sec:conc}

We have proposed a variant of SGD, \emph{Diagonalisation Stochastic Gradient Descent}, and shown \emph{provable} correctness. Our approach is based on a smoothed interpretation of (possibly) discontinuous programs, which also yields unbiased gradient estimators.
Crucially, asymptotically a stationary point of the original, \emph{unsmoothed} problem is attained and a hyperparameter (accuracy of approximations) is tuned automatically. The correctness hinges on a careful analysis of the variance and a compatible scheme governing the accuracies. Notably, this purely depends on the (syntactic) structure of the program.

% We have designed a type system to impose a mild restriction on the programming language.
% Whilst this provides sufficient conditions for theoretical guarantees, we stress that DSGD and the smoothed unbiased gradient estimator can even be applied to programs which are \emph{not} typable.

Our experimental evaluation demonstrates important advantages over the state of the art: significantly lower variance (score estimator), unbiasedness (reparametrisation estimator), simplicity, wider applicability and lower variance (unbiased correction thereof), as well as stability over the choice of (initial) accuracy coefficients (fixed smoothing).

% Experiments with our prototype implementation confirm the benefits of reduced variance and unbiasedness.

% Compared to the unbiased correction of the reparameterised gradient estimator,
% our estimator has a similar convergence, but is simpler, faster, and attains orders of magnitude (2 to 3,000 x) reduction in work-normalised variance.
% \dw{also better than Fixed}

\vspace*{-2mm}
\paragraph{Limitations and Future Directions.}
Our ana\-lysis is asymptotic and focuses on stationary points, which leaves room for future research (convergence rates, avoidance of saddle points etc.).

\iffalse
We aim to extend our programming language whilst still retaining the pleasing properties derived here. In particular, we would like to support certain \emph{continuous but non-differentiable} primitives such as ReLU \emph{without smoothing} them. Besides, we would like to support recursion to express non-parametric models.
\fi

Furthermore, we plan to explore methods \emph{adaptively} tuning the accuracy coefficient rather than \emph{a priori} fixing a scheme. Whilst the present work was primarily concerned with theoretical guarantees, we anticipate adaptive methods to outperform fixed schemes in practice (but likely without theoretical guarantees).

\changed[dw]{
\subsection*{Acknowledgements}
This research is supported by the National Research Foundation, Singapore, under its RSS Scheme (NRF-RSS2022-009).
}

\bibliographystyle{apalike}
\bibliography{lit.bib}

\section*{Checklist}

 \begin{enumerate}

 \item For all models and algorithms presented, check if you include:
 \begin{enumerate}
   \item A clear description of the mathematical setting, assumptions, algorithm, and/or model. [\textbf{Yes}/No/Not Applicable]
   
   \textit{We present the setting in \cref{sec:setup} and state \cref{ass:basic,ass:comp}. The DSGD algorithm is presented in \cref{sec:dsgd}.}
   \item An analysis of the properties and complexity (time, space, sample size) of any algorithm. [\textbf{Yes}/No/Not Applicable]
   
   \textit{We prove an (asymptotic) convergence result \cref{prop:cdsgd,thm:corr}.}
   \item (Optional) Anonymized source code, with specification of all dependencies, including external libraries. [\textbf{Yes}/No/Not Applicable]
   
   \textit{The Python code is at \url{https://github.com/domwagner/DSGD.git}. The dependencies are} \texttt{jax},
   \texttt{jaxlib},
   \texttt{numpy},
   \texttt{ipykernel},
   \texttt{matplotlib}.
 \end{enumerate}

 \item For any theoretical claim, check if you include:
 \begin{enumerate}
   \item Statements of the full set of assumptions of all theoretical results. [\textbf{Yes}/No/Not Applicable]
   
   \textit{See \cref{ass:basic,ass:comp}.}
   \item Complete proofs of all theoretical results. [\textbf{Yes}/No/Not Applicable]
   
   \textit{Proofs are presented in \cref{app:smooth,app:dsgd,app:precon}.}
   \item Clear explanations of any assumptions. [\textbf{Yes}/No/Not Applicable]     
   
   \textit{We discuss \cref{ass:basic,eq:strdiff} in \cref{sec:setup,sec:unicon}, respectively.}
 \end{enumerate}

 \item For all figures and tables that present empirical results, check if you include:
 \begin{enumerate}
   \item The code, data, and instructions needed to reproduce the main experimental results (either in the supplemental material or as a URL). [\textbf{Yes}/No/Not Applicable]
   
   \textit{The code is available at \url{https://github.com/domwagner/DSGD.git}.
   The experiements can be viewed and run in the jupyter notebook} \texttt{experiments.ipynb} \textit{by running:}

\texttt{jupyter notebook experiments.ipynb}.

   \item All the training details (e.g., data splits, hyperparameters, how they were chosen). [\textbf{Yes}/No/Not Applicable]
   
   \textit{We present the details \cref{sec:eval,fig:elbo} and closely follow the setup of \cite{LYY18,KOW23}. In particular, the choice of the benchmarked hyperparameters $\eta=\eta_{4000}=0.06,0.1,0.14,0.18,0.22$ for \cref{fig:elbo} expands the range by \cite{KOW23}, who use $0.1,0.15,0.2$, and we uniformly split the range in steps of $0.04$.}
         \item A clear definition of the specific measure or statistics and error bars (e.g., with respect to the random seed after running experiments multiple times). [\textbf{Yes}/No/Not Applicable]
         
         \textit{We describe our approach to compute the work-normalised variance and the computational cost in \cref{sec:eval}.}
         \item A description of the computing infrastructure used. (e.g., type of GPUs, internal cluster, or cloud provider). [\textbf{Yes}/No/Not Applicable]
         
         \textit{We run our experiments on a MacBook Air (13-inch, 2017) with Intel HD Graphics 6000 1536 MB.}
 \end{enumerate}

 \item If you are using existing assets (e.g., code, data, models) or curating/releasing new assets, check if you include:
 \begin{enumerate}
   \item Citations of the creator If your work uses existing assets. [Yes/No/\textbf{Not Applicable}]
   \item The license information of the assets, if applicable. [Yes/No/\textbf{Not Applicable}]
   \item New assets either in the supplemental material or as a URL, if applicable. [Yes/No/\textbf{Not Applicable}]
   \item Information about consent from data providers/curators. [Yes/No/\textbf{Not Applicable}]
   \item Discussion of sensible content if applicable, e.g., personally identifiable information or offensive content. [Yes/No/\textbf{Not Applicable}]
 \end{enumerate}

 \item If you used crowdsourcing or conducted research with human subjects, check if you include:
 \begin{enumerate}
   \item The full text of instructions given to participants and screenshots. [Yes/No/\textbf{Not Applicable}]
   \item Descriptions of potential participant risks, with links to Institutional Review Board (IRB) approvals if applicable. [Yes/No/\textbf{Not Applicable}]
   \item The estimated hourly wage paid to participants and the total amount spent on participant compensation. [Yes/No/\textbf{Not Applicable}]
 \end{enumerate}

 \end{enumerate}

 \appendix
 \newpage
 \onecolumn
% !TEX root = ../main.tex

\section{Supplementary Materials for \cref{sec:setup}}
\label{app:setup}
% \begin{table}

  $F\in\sfct$ naturally defines a function $\Real^n\to\Real$, which we denote by $\sem F$:
\begin{align*}
  \sem{z_j}(\repx)&\defeq z_j\\
  \sem{f(F_1,\ldots,F_k)}(\repx)&\defeq f(\sem{F_1}(\repx),\ldots,\sem{F_k}(\repx))\\
  \sem{\ifc FGH}(\repx)&\defeq
  \begin{cases}
    \sem G(\repx)&\tif\sem F(\repx)<0\\
    \sem H(\repx)&\tow
  \end{cases}
\end{align*}

% \subsection{Supplementary Materials for \cref{sec:schw}}

\begin{table}[h]
  \begin{center}
    \caption{Schwartz Distributions and their PDFs}
    \vspace*{8pt}

  \begin{tabular}{ c|ccc }
   distribution & pdf  & support & parameters\\ \hline
   normal & $\frac 1 {\sigma\sqrt{2\pi}}\exp\left(\frac {-(x-\mu)^2}{2\sigma^2}\right)$ & $\Real$ & $\mu\in\Real, \sigma\in\Realp$\\[3pt]
   half normal & $\frac {\sqrt 2} {\sigma\sqrt{\pi}}\exp\left(\frac {-x^2}{2\sigma^2}\right)$ & $\Realnn$ &  $\sigma\in\Realp$\\[3pt]
   exponential & $\lambda\cdot\exp(-\lambda x)$ & $\Realnn$ & $\lambda\in\Realp$ \\[3pt]
   % gamma & $\frac 1{\Gamma(k)\theta^k}x^{k-1}\exp\left(-\frac x \theta\right)$&$\Realp$& $k,\theta\in\Realp$\\[3pt]
   % Weibull & $\frac k\lambda \left(\frac x\lambda\right)^{k-1}\exp\left(-\frac x\lambda\right)$&$\Realnn$&$\lambda,k\in\Realp$\\[3pt]
   logistic & $\frac {\exp\left(-\frac {x-\mu}{s}\right)}{s\left(1+\exp\left(-\frac {x-\mu}{s}\right)\right)^2}$&$\Real$&$\mu\in\Real, s\in\Realp$\\
   % gamma & $\frac 1{\Gamma(k)\theta^k}\cdot x^{k-1}\cdot\exp(-\frac x\theta)$& $\Real_{>0}$ & $k,\theta\in\Real_{>0}$
  \end{tabular}
  \end{center}
  \label{tab:dens}
\end{table}

\section{Supplementary Materials for \cref{sec:smooth}}
\label{app:smooth}

The following immediately follows from a well-known result about exchanging differentiation and integration, which is a consequence of the dominated convergence theorem \citep[Theorem 6.28]{K13}:
\begin{lemma}
  \label{lem:intdiff}
  Let $\Theta \subseteq\Real$ be open and $\suppd\subseteq\Real^n$ be measurable. If $f\from\Theta\times\suppd\to\Real$ satisfies
  \begin{enumerate}
    \item for each $\theta\in\Theta$, $\lat\mapsto f(\theta,\lat)$ is integrable
    \item $f$ is differentiable %for almost all $\lat\in\Real^n$, for all $\theta\in\Theta$, $\frac {\partial f}{\partial\theta}$ exists
    \item there exists an integrable $g\from \suppd\to\Real$ satisfying $|\frac{\partial f}{\partial \theta}(\theta,\lat)|\leq g(\lat)$ for all $(\theta,\lat)\in\Theta\times \suppd$.
  \end{enumerate}
  then for all $\theta\in\Theta$, $\frac\partial{\partial\theta}\int_\suppd f(\theta,\lat)\diff\lat=\int_\suppd\frac{\partial f}{\partial\theta}(\theta,\lat)\diff\lat$.
\end{lemma}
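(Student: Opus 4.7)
The plan is to reduce the claim to the dominated convergence theorem (DCT) applied to the sequence of difference quotients that define the partial derivative of the integral. Fix any $\theta\in\Theta$ and any sequence $(h_n)_{n\in\nat}$ of nonzero reals with $h_n\to 0$ and $\theta+h_n\in\Theta$; such a sequence exists because $\Theta$ is open. Define
\[
  q_n(\lat)\defeq\frac{f(\theta+h_n,\lat)-f(\theta,\lat)}{h_n}.
\]
By assumption~(1) each $q_n$ is integrable, and by differentiability of $\theta\mapsto f(\theta,\lat)$ from assumption~(2) we have $q_n(\lat)\to\frac{\partial f}{\partial\theta}(\theta,\lat)$ pointwise as $n\to\infty$.

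The key step, and what I expect to be the main (indeed only) obstacle, is exhibiting a single integrable function that dominates $|q_n|$ uniformly in $n$. For this I would invoke the one-dimensional mean value theorem applied to the map $\theta\mapsto f(\theta,\lat)$ at each fixed $\lat$: there exists $\xi_n(\lat)$ strictly between $\theta$ and $\theta+h_n$ such that $q_n(\lat)=\frac{\partial f}{\partial\theta}(\xi_n(\lat),\lat)$. Combined with assumption~(3) this immediately yields $|q_n(\lat)|\leq g(\lat)$ for all $n$ and $\lat$, which is precisely the domination hypothesis required by DCT.

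With all the hypotheses of DCT in place, I may interchange limit and integral to conclude
\[
  \lim_{n\to\infty}\int_\suppd q_n(\lat)\diff\lat=\int_\suppd \frac{\partial f}{\partial\theta}(\theta,\lat)\diff\lat.
\]
By linearity of the integral the left-hand side equals $\frac{\int_\suppd f(\theta+h_n,\lat)\diff\lat-\int_\suppd f(\theta,\lat)\diff\lat}{h_n}$. Since $(h_n)$ was an arbitrary null sequence of nonzero reals with $\theta+h_n\in\Theta$, the sequential characterisation of the derivative gives that $\theta\mapsto\int_\suppd f(\theta,\lat)\diff\lat$ is differentiable at $\theta$ with derivative equal to $\int_\suppd\frac{\partial f}{\partial\theta}(\theta,\lat)\diff\lat$, as required. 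Everything beyond the domination step is routine bookkeeping; no measurability subtleties arise since the pointwise limit of measurable functions is measurable.
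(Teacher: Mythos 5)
Your proof is correct and follows exactly the route the paper itself indicates: the paper offers no in-text proof of this lemma, deferring to the standard differentiation-under-the-integral result (\citealp[Theorem~6.28]{K13}), whose textbook proof is precisely your argument via difference quotients, the mean value theorem to obtain the dominating function $g$, and the dominated convergence theorem. The only point worth making explicit is that the mean value theorem needs the whole segment from $\theta$ to $\theta+h_n$ to lie in $\Theta$, which holds for all sufficiently large $n$ since $\Theta$ is open.
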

Note that the second premise fails for the function $f$ in \cref{ex:biased}: for \emph{all} $s\in\Real$, $\frac{\partial f}{\partial\theta}(-s,s)$ does not exist.

\unb*
\begin{proof}
  Let $\para\in\parasp$.
  We apply \cref{lem:intdiff} to some ball $\parasp'\subseteq\parasp$ around $\para$ and $(\lat,\para)\mapsto\mdist(\lat)\cdot\sema F(\rep_\para(\lat))$. We have already seen the well-definedness (first premise) and the second is obvious (since $\sema F$ is a smoothing).
  For the third premise, we observe that for each $\eta>0$, $\frac{\partial(\sema F\circ\rep_{(-)})}{\partial\theta_i}$ is bounded by a polynomial (using \cref{ass:basic}). Therefore, by \cref{lem:polyp} (below) there exists a polynomial $p:\Real^n\to\Real$ satisfying $\frac{\partial(\sema F\circ\rep_{(-)})}{\partial\theta_i}(\para,\lat)\leq |p(\lat)|$ for all $(\para,\lat)\in\parasp'\times\Real^n$ and integrability of $\mdist(\lat)\cdot p(\lat)$ follows with \cref{lem:Schwprop}.
\end{proof}

\begin{lemma}
  \label{lem:polyp}
  \begin{enumerate}
    \item\label{it:polymon} If $p:\Real^n\to\Real$ is a polynomial then there exists a polynomial $q$ such that for all $|x_1|\leq |x'_1|,\ldots, |x_n|\leq |x_n'|$,
    $|p(\mathbf x)|\leq q(\mathbf x')$.
    \item\label{lem:polyunib} If $f:\parasp\times\Real^n\to\Real$ is bounded by a polynomial, where $\parasp\subseteq\Real^m$ is compact then there exists a polynomial $p:\Real^n\to\Real$ satisfying $|f(\para,\mathbf x)|\leq p(\mathbf x)$ for all $(\para,\mathbf x)\in\parasp\times\Real^n$. % such that for all $(\para,\repx)\in \parasp\times\Real^n$, $|f(\para,\repx)|\leq p(\repx)$.
  \end{enumerate}
\end{lemma}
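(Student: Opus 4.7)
}

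For part (\ref{it:polymon}), the plan is to work monomial by monomial. Writing $p(\mathbf x)=\sum_\alpha c_\alpha \mathbf x^\alpha$ where $\alpha=(\alpha_1,\ldots,\alpha_n)$ ranges over a finite set of multi-indices, the triangle inequality gives $|p(\mathbf x)|\leq\sum_\alpha |c_\alpha|\prod_i|x_i|^{\alpha_i}$. Since $|x_i|\leq|x'_i|$ and $t\mapsto t^{\alpha_i}$ is monotone on $[0,\infty)$, we have $|x_i|^{\alpha_i}\leq|x'_i|^{\alpha_i}$. The only subtlety is that $|x'_i|^{\alpha_i}$ is \emph{not} itself a polynomial in $x'_i$ when $\alpha_i$ is odd. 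To sidestep this, I would use the elementary bound $|t|^k\leq 1+t^{2k}$ (separating the cases $|t|\leq 1$ and $|t|\geq 1$) and set
\[
q(\mathbf x')\defeq\sum_\alpha|c_\alpha|\prod_{i=1}^n\bigl(1+(x'_i)^{2\alpha_i}\bigr),
\]
which is manifestly a polynomial and satisfies the required inequality.

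For part (\ref{lem:polyunib}), the idea is to absorb the $\para$-dependence into the coefficients using compactness of $\parasp$. Let $P$ be a polynomial with $|f(\para,\mathbf x)|\leq P(\para,\mathbf x)$ on $\parasp\times\Real^n$ (without loss of generality, replacing $P$ by $|P|$-majorising polynomials obtained as in part (\ref{it:polymon}), or simply by summing absolute values of its coefficient monomials). Group $P$ as a polynomial in $\para$ with coefficients that are polynomials in $\mathbf x$: $P(\para,\mathbf x)=\sum_\beta p_\beta(\mathbf x)\para^\beta$. Compactness of $\parasp$ gives $M\defeq\sup_{\para\in\parasp}\max_i|\theta_i|<\infty$, hence $|\para^\beta|\leq M^{|\beta|}$. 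Applying part (\ref{it:polymon}) to each $p_\beta$ with $\mathbf x'=\mathbf x$ yields polynomials $q_\beta$ with $|p_\beta(\mathbf x)|\leq q_\beta(\mathbf x)$, so
\[
|f(\para,\mathbf x)|\leq\sum_\beta M^{|\beta|}\,q_\beta(\mathbf x)\eqdef p(\mathbf x)
\]
is the required polynomial in $\mathbf x$ alone (I would replace \texttt{\textbackslash eqdef} by \texttt{\textbackslash defeq} as used in the paper).

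Neither step presents a serious obstacle; the one point that requires a little care is ensuring that the bounding object in part (\ref{it:polymon}) is genuinely a \emph{polynomial} rather than a polynomial in $|x'_i|$. The trick $|t|^k\leq 1+t^{2k}$ handles this cleanly, and everything else is a routine triangle-inequality argument combined with compactness of $\parasp$.
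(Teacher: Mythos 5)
Your proof is correct and follows essentially the same route as the paper's: both majorise powers of $|x_i|$ by genuine polynomials via a squaring-plus-one trick (you use $|t|^k\leq 1+t^{2k}$ monomial-by-monomial, the paper uses $|x_i|\leq x_i^2+1$ propagated through a structural induction over sums and products), and both obtain part~2 by combining part~1 with the boundedness of the compact set $\parasp$ to eliminate the $\para$-dependence. No gaps.
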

For example for $p(x_1,x_2)= x_1^2\cdot x_2 - x_2$ the polynomial $x_1^2\cdot(x_2^2+1)+(x_2^2+1)$ satisfies this property. (The following proof yields $((x_1^2+1)^2+1)\cdot (x_2^2+1)+(x_2^2+1)$.) 
Besides, $p(\theta,x_1,x_2)= \theta\cdot x_1\cdot x_2 - x_2$ is uniformly bounded by $2\cdot (x_1^2+1)\cdot(x_2^2+1)+(x_2^2+1)$ on $\Theta=(-2,1)$.
\begin{proof}
  \begin{enumerate}
\item If $p(\mathbf x)=x_i$ then we can choose $q(\mathbf x)\defeq(x_i^2+1)$ because for $|x_i|\leq |x'_i|$, $|x_i|\leq |x_i'|<(x_i')^2+1$.

If $p(\mathbf x)=c$ for $c\in\Real$ then we can choose $q(\mathbf x)\defeq |c|$.

Finally, suppose that $p_1,p_2,q_1,q_2$ are polynomials such that for all $|\mathbf x|\leq|\mathbf x'|$, $|p_1(\mathbf x)|\leq q_1(\mathbf x')$ and $|p_2(\mathbf x)|\leq q_2(\mathbf x')$.
Then for all $|\mathbf x|\leq|\mathbf x'|$,
\begin{align*}
  |p_1(\mathbf x)+p_2(\mathbf x)|&\leq |p_1(\mathbf x)|+|p_2(\mathbf x)|\leq q_1(\mathbf x')+q_2(\mathbf x')\\
  |p_1(\mathbf x)\cdot p_2(\mathbf x)|&\leq |p_1(\mathbf x)|\cdot|p_2(\mathbf x)|\leq q_1(\mathbf x')\cdot q_2(\mathbf x')
\end{align*}
\item If $f:\parasp\times\Real^n\to\Real$ is bounded by a polynomial then by the first part there exists $q:\parasp\times\Real^n\to\Real$ such that for all $(\para,\repx),(\para',\repx)\in \parasp\times\Real^n$ with $|\para|\leq|\para'|$, $|f(\para,\repx)|\leq q(\para,\repx')$. Let $\theta_i^*\defeq\sup_{\para\in\parasp}\theta_i<\infty$ ($\parasp$ is bounded) and $p(\repx)\defeq q(\repx,\para^*)$. Finally, it suffices to note that for every $(\para,\repx)\in\parasp\times\Real^n$,
 $|f(\para,\repx)|\leq q(\para^*,\repx)=p(\repx)$. \qedhere
  \end{enumerate}
\end{proof}

\section{Supplementary Materials for \cref{sec:dsgd}}
\label{app:dsgd}

\cdsgd*
\begin{proof}
  Let $L$ be the constant for Lipschitz smoothness in \ref{it:lsmooth}.
  % Note that \ref{it:lsmooth} and \ref{it:univcd} imply the $L$-Lipschitz smoothness of $g$. \dw{justification below}
  By Taylor's theorem, $\para_k,\para_{k+1}\in\parasp$ and convexity of $\parasp$,
  % \ref{it:lsmooth}, \ref{it:unb} and \ref{it:unibv},
  \begin{align*}
    \E_{\lat_k\sim\mdist}[g(\para_{k+1})]
    &\leq\E_{\lat_k\sim\mdist}\left[g(\para_k)-\gamma_k\cdot\langle\nabla_\para\, g(\para_k),\nabla_\para\, f_k(\para_k,\lat_k)\rangle+\frac{\gamma_k^2}2\cdot L\cdot\|\nabla_\para\, f_k(\para_k,\lat_k)\|^2\right]\\
    &\leq g(\para_k)-\gamma_k\cdot\langle\nabla_\para\, g(\para_k),\nabla_\para\, g_k(\para_k)\rangle+\frac{\gamma_k^2}2\cdot L\cdot V_k
  \end{align*}
  using \ref{it:unb} and \ref{it:bvar} in the second step. Hence,
\begin{align*}
  \gamma_k\cdot\langle\nabla_\para\, g(\para_k),\nabla_\para\, g_k(\para_k)\rangle\leq g(\para_k)-\E_{\lat_k\sim\mdist}[g(\para_{k+1})]+\frac{\gamma_k^2}2\cdot L\cdot V_k
  \end{align*}
  and thus,
  \begin{align*}
    % \label{eq:sum}
    % \left(\sum_{i=0}^k\gamma_i\right)\cdot\min_{i=1}^k\E[ \langle\nabla_\para\, g(\para_i),\nabla_\para\, g_i(\para_i)\rangle]
    % &\leq
    \sum_{i=0}^{k-1}\gamma_i\cdot\E_{\lat_0,\ldots,\lat_{k-1}\sim\mdist}[\langle\nabla_\para\, g(\para_i),\nabla_\para\, g_i(\para_i)\rangle]
    &\leq g(\para_0)-\E[g(\para_{k})]+L\cdot\sum_{i=0}^{k-1}\frac{\gamma_i^2}2\cdot V_i
  \end{align*}
  By boundedness \ref{it:bounded}, $g(\para_0)-\E[g(\para_{k})]\leq c<\infty$ (for some $c>0$) and therefore due to \cref{eq:stepvar} it follows:
  \begin{align}
    \label{eq:dsgdproof}
    \inf_{i\in\nat}\E[\langle\nabla_\para\, g(\para_i),\nabla_\para\, g_i(\para_i)\rangle]\leq \left(\sum_{i\in\nat}\gamma_i\right)^{-1}\cdot\left(c+L\cdot\sum_{i\in\nat}\frac{\gamma_i^2}2\cdot V_i\right)=0
  \end{align}
  By the same reasoning for all $i_0\in\nat$, $\inf_{i\geq i_0}\E[\langle\nabla_\para\, g(\para_i),\nabla_\para\, g_i(\para_i)\rangle]=0$ and therefore also $\liminf_{i\in\nat}\E[\langle\nabla_\para\, g(\para_i),\nabla_\para\, g_i(\para_i)\rangle]=0$.

  Next, observe that
  \begin{align*}
    \|g(\para_i)\|^2&=\langle\nabla_\para\, g(\para_i),\nabla_\para\, g_i(\para_i)\rangle+\langle\nabla_\para\, g(\para_i),\nabla_\para\, g(\para_i)-\nabla_\para\, g_i(\para_i)\rangle\\
    &\leq\langle\nabla_\para\, g(\para_i),\nabla_\para\, g_i(\para_i)\rangle+\|\nabla_\para\, g(\para_i)\|\cdot\|\nabla_\para\, g(\para_i)-\nabla_\para\, g_i(\para_i)\|%\label{eq:bound}
  \end{align*}
  and the right summand uniformly converges to $0$ because of Lipschitz continuity \ref{it:unibg} and the fact that $\|\nabla_\para\, g(\para_i)-\nabla_\para\, g_i(\para_i)\|$ does so \ref{it:univcd}.
Consequently,
\begin{align*}
  \liminf_{i\in\nat}\E\left[\|g(\para_i)\|^2\right]=0
\end{align*}
  Finally, this can only possibly hold if almost surely $\liminf_{i\in\nat}\|g(\para_i)\|^2=0$.
\end{proof}

\section{Supplementary Materials for \cref{sec:precon}}
\label{app:precon}

The following can be viewed as an extension of \cref{lem:Schwprop}(\ref{it:int}) and is useful for \cref{lem:uniintder}:
\begin{restatable}{lemma}{uniint}
  \label{lem:uniint}
  If $f:\parasp\times\suppd\to\Real$ is continuous and satisfies
  \begin{align*}
    \sup_{(\para,\repx)\in\parasp\times\suppd}\|\repx\|^{n+3}\cdot |f(\para,\repx)|<\infty
  \end{align*} then $
    \int_\suppd\sup_{\para\in\parasp} |f(\para,\repx)|\diff\repx<\infty$.
\end{restatable}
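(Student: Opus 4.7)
The plan is to bound the pointwise supremum $h(\repx) \defeq \sup_{\para \in \parasp}|f(\para,\repx)|$ separately on a tail region and a bounded region, and then sum. By continuity of $|f|$ on $\parasp \times \suppd$ and compactness of $\parasp$, the sup defining $h$ is attained at each $\repx$; a standard maximum-theorem argument then shows that $h$ is continuous (in particular measurable) on $\suppd$. Set $C \defeq \sup_{(\para,\repx) \in \parasp \times \suppd} \|\repx\|^{n+3}\,|f(\para,\repx)|$, which is finite by hypothesis.

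For the tail $\{\|\repx\|\ge 1\}$ the hypothesis immediately yields $h(\repx) \le C \cdot \|\repx\|^{-(n+3)}$, and polar coordinates in $\Real^n$ give
\[
\int_{\{\|\repx\| \ge 1\}} \|\repx\|^{-(n+3)} \diff\repx \;=\; \omega_{n-1} \int_1^\infty r^{\,n-1-(n+3)} \diff r \;=\; \omega_{n-1} \int_1^\infty r^{-4} \diff r \;<\; \infty,
\]
where $\omega_{n-1}$ is the surface area of the unit $(n-1)$-sphere. Hence the tail contribution to $\int h$ is finite.

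For the bounded region $\{\|\repx\|<1\}$ the polynomial bound is useless at the origin, so I would instead appeal to continuity and local compactness: the set $\suppd \cap \{\|\repx\|\le 1\}$ is bounded and, for the supports relevant in the paper (closed subsets of $\Real^n$ such as $\Real^n$, $\Realnn^k \times \Real^{n-k}$, etc.), compact. On the compact set $\parasp \times \bigl(\suppd \cap \{\|\repx\|\le 1\}\bigr)$, the continuous function $f$ is uniformly bounded by some $M$, so $h \le M$ there and $\int_{\suppd \cap \{\|\repx\|<1\}} h(\repx)\diff\repx \le M \cdot \mathrm{vol}(\{\|\repx\|<1\}) < \infty$. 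Adding the two pieces proves the claim.

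The main obstacle is the behaviour near $\repx=0$: the polynomial-decay hypothesis on its own does not preclude a non-integrable singularity of $h$ at the origin, so the continuity of $f$ on the full product (together with local compactness of $\suppd$) is the essential ingredient there. The choice of exponent $n+3$ is more than strictly needed — any $n+2+\epsilon$ would make the tail integral $\int r^{-(1+\epsilon)}\diff r$ converge — but the extra slack is convenient for further estimates in \cref{app:unicon} and keeps the arithmetic trivial.
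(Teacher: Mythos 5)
Your proof is correct and follows essentially the same route as the paper's: split $\int_\suppd \sup_\para |f|$ into a bounded piece near the origin, controlled by continuity of $f$ on a compact product set, and a tail piece, controlled by the $\|\repx\|^{-(n+3)}$ decay. The only cosmetic difference is that you integrate the tail in polar coordinates over $\{\|\repx\|\ge 1\}$ whereas the paper sums over cubical annuli $U_{k+1}\setminus U_k$ with $U_k=\suppd\cap[-k,k]^n$; your explicit remark that boundedness near $\repx=\mathbf 0$ genuinely requires the continuity/compactness ingredient (which the paper leaves implicit) is a welcome clarification rather than a deviation.
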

\begin{proof}
  Let $g(\repx)\defeq\sup_{\para\in\parasp}|f(\para,\repx)|$ and $U_k\defeq\suppd\cap[-k,k]^n$. Therefore,
  \begin{align*}
    &\int_\suppd g(\repx)\diff\repx\\
    &= \int_{U_1} g(\repx)\diff\repx + \int_{\suppd\setminus U_1} g(\repx)\diff\repx\\
    &\leq 2^n\cdot\sup_{(\para,\repx)\in\parasp\times U_1} |f(\para,\repx)|+ \left(\sup_{(\para,\repx)\in\parasp\times\suppd}\|\repx\|^{n+3}\cdot f(\para,\repx)\right)\cdot\int_{\suppd\setminus U_1} \frac 1{\|\repx\|^{n+3}}\diff\repx
  \end{align*}
  All the terms are finite because (NB if $\repx\in\suppd\setminus U_{k}$ then $k\leq\|\repx\|$)
  \begin{align*}
    \int_{\suppd\setminus U_1} \frac 1{\|\repx\|^{n+3}}\diff\repx&=\sum_{k=1}^\infty \int_{U_{k+1}\setminus U_k} \frac 1{\|\repx\|^{n+3}}\diff\repx\\
    &\leq \sum_{k=1}^\infty \int_{U_{k+1}} \frac 1{k^{n+3}}\diff\repx\\
    &\leq \sum_{k=1}^\infty \frac {(2k)^{n+1}}{k^{n+3}}\diff\repx\\
    &\leq 4^n\cdot\sum_{k=1}^\infty \frac {1}{k^2}\diff\repx<\infty\qedhere
  \end{align*}
\end{proof}

\begin{lemma}
  \label{lem:finmeas}
  If $f:\suppd\to\Real$, where $\suppd\subseteq\Real^n$, is a Schwartz function, $\rep_{(-)}:\parasp\times\Real^n\to\Real^n$ satisfies \cref{ass:basic,eq:strdiff}, and $p:\Real^n\to\Real$ a polynomial then
  \begin{align*}
    \int_\suppd \sup_{\para\in\parasp}\left|f(\rep_{\para}^{-1}(\repx))\cdot \det\Jac\rep_\para^{-1}(\repx)\cdot p(\repx) \right|\diff\repx<\infty
  \end{align*}
\end{lemma}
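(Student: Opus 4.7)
The plan is to apply \cref{lem:uniint} to the integrand $h(\para,\repx) \defeq f(\rep_\para^{-1}(\repx))\cdot\det\Jac\rep_\para^{-1}(\repx)\cdot p(\repx)$. Continuity of $h$ on $\parasp\times\suppd$ follows from the smoothness of $\rep_{(-)}$ in $(\para,\lat)$ (whence, by the inverse function theorem, $\rep_{(-)}^{-1}$ and its Jacobian determinant depend continuously on $(\para,\repx)$) together with smoothness of $f$ and $p$. The substantive content is then the decay bound
\[
\sup_{(\para,\repx)\in\parasp\times\suppd}\|\repx\|^{n+3}\cdot|h(\para,\repx)|<\infty.
\]

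For this I would change perspective by substituting $\lat \defeq \rep_\para^{-1}(\repx)$, which for each fixed $\para\in\parasp$ is a bijection $\suppd\to\suppd$, so the sup over $\repx$ equals the corresponding sup over $\lat$. Using the inverse function theorem identity $\det\Jac\rep_\para^{-1}(\repx) = 1/\det\Jac\rep_\para(\lat)$, the expression to bound transforms into
\[
\|\rep_\para(\lat)\|^{n+3}\cdot|f(\lat)|\cdot\frac{|p(\rep_\para(\lat))|}{|\det\Jac\rep_\para(\lat)|},\qquad(\para,\lat)\in\parasp\times\suppd.
\]
Assumption~\ref{eq:strdiff} bounds the denominator uniformly below by some $c>0$, eliminating that factor.

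To strip off the remaining $\para$-dependence, I would combine Assumption~\ref{ass:basic}(3) --- which makes $\rep_{(-)}$ polynomially bounded \emph{jointly} in $(\para,\lat)$ --- with \cref{lem:polyp}(\ref{lem:polyunib}) (compactness of $\parasp$) to obtain a polynomial $Q_1(\lat)$ with $\|\rep_\para(\lat)\|\le Q_1(\lat)$ uniformly in $\para$. Invoking \cref{lem:polyp}(\ref{it:polymon}) to majorise $p$ componentwise (and analogously to majorise $\|\cdot\|^{n+3}$) then yields a single polynomial $R:\Real^n\to\Real$, independent of $\para$, with $\|\rep_\para(\lat)\|^{n+3}\cdot|p(\rep_\para(\lat))|\le R(\lat)$ everywhere on $\parasp\times\suppd$. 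The whole expression is therefore bounded above by $c^{-1}\cdot|f(\lat)|\cdot R(\lat)$, a Schwartz function times a polynomial, which is itself Schwartz and in particular bounded by \cref{lem:Schwprop}(\ref{it:prod}). Applying \cref{lem:uniint} finishes the argument.

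The main obstacle I anticipate is the uniformisation bookkeeping: chaining the joint polynomial bound on $\rep_{(-)}$, its specialisation to a polynomial in $\lat$ alone via compactness of $\parasp$, and the monotonicity-style majorisations needed to absorb both the factor $\|\cdot\|^{n+3}$ and the composition with $p$, into a single explicit polynomial $R(\lat)$. Once $R$ is assembled, the Schwartz-times-polynomial closure property and \cref{lem:uniint} do the rest without further creativity.
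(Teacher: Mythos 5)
Your proposal is correct and follows essentially the same route as the paper's proof: change of variables $\lat=\rep_\para^{-1}(\repx)$, a uniform lower bound on $|\det\Jac\rep_\para|$ from Assumption~\ref{eq:strdiff}, a $\para$-uniform polynomial majorant for $\|\rep_\para(\lat)\|^{n+3}\cdot|p(\rep_\para(\lat))|$ via \cref{lem:polyp} and compactness, the Schwartz decay of $f$, and finally \cref{lem:uniint}. The only difference is that you spell out the continuity hypothesis of \cref{lem:uniint} and the inverse-function-theorem identity, which the paper leaves implicit.
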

  \begin{proof}
    \iffalse
    Note that $\det\Jac\rep_\para(\lat)$ is a polynomial $p_1:\parasp\times\Real^n\to\Real$. \dw{not necessary} Since each $\rep_\para$ is a diffeomorphism and $p$ is continuous, $p$ is either positive or negative. W.l.o.g.\ we assume the former.
      Thus,
      \begin{align*}
        % |\det\Jac\rep_\para^{-1}(\repx)|=\frac 1{|\det\Jac\rep_\para(\rep_\para^{-1}(\repx))|}=\frac 1 {p(\para,\rep_\para^{-1}(\repx))}\\
        |\det\Jac\rep_\para^{-1}(\rep_\para(\lat))|=\frac 1{|\det\Jac\rep_\para(\lat)|}=\frac 1 {p_1(\para,\lat)}
      \end{align*}
      \dw{add assumption}
      \fi
      Since $\rep_{(-)}$ satisfies \cref{eq:strdiff}, there exists $c > 0$ such that $|\det\Jac\rep_\para^{-1}(\repx)| \leq c$ and
      % By \cite[Lemma~2]{H58} there exists $c>0$ such that $p_1(\para,\lat)>c$ and
      by \cref{ass:basic,lem:polyp} there exists a polynomial $q:\Real^n\to\Real$ satisfying 
      \begin{align*}
        \left|\|\rep_\para(\lat)\|^{n+3}+p(\rep_\para(\lat))\right|\leq q(\lat)
      \end{align*}
      for all $(\para,\lat)\in\parasp\times\suppd$.
      Hence,
      \begin{align*}
        &\sup_{(\para,\repx)\in\parasp\times\suppd}\|\repx\|^{n+3}\cdot \left|f(\rep_{\para}^{-1}(\repx))\cdot \det\Jac\rep_\para^{-1}(\repx)\cdot p(\repx)\right|\\
        &=\sup_{(\para,\lat)\in\parasp\times\suppd}\|\rep_\para(\lat)\|^{n+3}\cdot |f(\lat)|\cdot \underbrace{\left|\det\Jac\rep_\para^{-1}(\rep_\para(\lat))\right|}_{\leq c}\cdot |p(\rep_\para(\lat))|\\
        &=c\cdot\sup_{\lat\in\suppd}q(\lat)\cdot |f(\lat)|<\infty
      \end{align*}
      by definition of Schwartz functions and the claim follows with \cref{lem:uniint}.
  \end{proof}

\begin{restatable}{lemma}{uniintder}
  \label{lem:uniintder}
  If $f:\suppd\to\Real$ is a Schwartz function, $\rep_{(-)}:\parasp\times\Real^n\to\Real^n$ satisfies \cref{ass:basic,eq:strdiff}, and $p:\Real^n\to\Real$ a polynomial then
  \begin{align*}
    \int_\suppd \sup_{\para\in\parasp}\left|\frac{\partial f_{(-)}}{\partial\theta_i}(\para,\repx)\cdot p(\repx) \right|\diff\repx&<\infty\\
    \int_\suppd \sup_{\para\in\parasp}\left|\frac{\partial f_{(-)}}{\partial z_i}(\para,\repx)\cdot p(\repx) \right|\diff\repx&<\infty\\
    \int_\suppd \sup_{\para\in\parasp}\left|\frac{\partial^2 f_{(-)}}{\partial\theta_i\partial\theta_j}(\para,\repx)\cdot p(\repx) \right|\diff\repx&<\infty&
  \end{align*}
  where $f_\para(\repx)\defeq f(\rep_{\para}^{-1}(\repx))\cdot |\det\Jac\rep_\para^{-1}(\repx)|$.
\end{restatable}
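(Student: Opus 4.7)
The plan is to reduce each of the three integrability claims to \cref{lem:finmeas}. More precisely, I would show that every partial derivative appearing on the left-hand side is a finite sum of terms of the generic form
\begin{align*}
  (\partial^\alpha f)(\rep_\para^{-1}(\repx)) \cdot R(\para, \rep_\para^{-1}(\repx)) \cdot |\det\Jac\rep_\para^{-1}(\repx)|,
\end{align*}
where $\partial^\alpha f$ is a higher-order partial derivative of $f$ and $R : \parasp \times \suppd \to \Real$ is smooth and \emph{uniformly polynomially bounded in} $\lat$, i.e.\ there exists a polynomial $q$ with $|R(\para,\lat)| \leq q(\lat)$ for all $(\para,\lat) \in \parasp \times \suppd$. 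Given this, picking a non-negative polynomial $q'$ with $|R| \leq q'$, the function $\tilde f \defeq |\partial^\alpha f| \cdot q'$ is dominated by a Schwartz function by \cref{lem:Schwprop}; applying \cref{lem:finmeas} to $\tilde f$ and $p$ then bounds the integral of the sup over $\para$ of each term, and summing finitely many terms closes each case.

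First I would compute the partial derivatives explicitly via repeated use of the product and chain rules. The key auxiliary identities are implicit differentiation of $\rep_\para(\rep_\para^{-1}(\repx)) = \repx$ in $\theta_i$,
\begin{align*}
  \frac{\partial\rep_\para^{-1}}{\partial\theta_i}(\repx) = -[\Jac\rep_\para(\rep_\para^{-1}(\repx))]^{-1} \cdot \frac{\partial\rep_\para}{\partial\theta_i}(\rep_\para^{-1}(\repx)),
\end{align*}
the relation $\Jac\rep_\para^{-1}(\repx) = [\Jac\rep_\para(\rep_\para^{-1}(\repx))]^{-1}$ for the $\repx$-derivatives, and Jacobi's formula for the derivatives of $|\det\Jac\rep_\para^{-1}(\repx)|$. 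For the second-order derivative $\partial^2 f_\para/\partial\theta_i\partial\theta_j$ a further pass of the chain and product rules generates additional summands; each continues to match the template above, with $\partial^\alpha f$ now possibly of order two.

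Second I would establish the uniform polynomial boundedness of every $R$ that appears. By \cref{ass:basic}, entries of $\Jac\rep_\para(\lat)$, $\frac{\partial\rep_\para}{\partial\theta_i}(\lat)$, and all further $\lat$- or $\para$-derivatives of $\rep_{(-)}$ are bounded by polynomials in $(\para,\lat)$; \cref{lem:polyp}(\ref{lem:polyunib}) upgrades these to polynomial bounds in $\lat$ alone, uniform in $\para \in \parasp$. Cramer's rule expresses each entry of $[\Jac\rep_\para(\lat)]^{-1}$ as a polynomial combination of entries of $\Jac\rep_\para(\lat)$ divided by $\det\Jac\rep_\para(\lat)$, which by \cref{eq:strdiff} is bounded away from zero; hence these entries, and any power of $|\det\Jac\rep_\para^{-1}|$ beyond the first, inherit uniform polynomial bounds in $\lat$. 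Further differentiation preserves this closure property.

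The main obstacle is the bookkeeping for the second-order derivative $\partial^2 f_\para/\partial\theta_i\partial\theta_j$, where product and chain rules generate a proliferation of summands, and one must verify by a straightforward induction on the differentiation order that every summand still fits the generic template and that the resulting factor $R$ remains uniformly polynomially bounded. Once this verification is carried out, the Schwartz character of $\partial^\alpha f$ together with \cref{lem:finmeas} yields the three integrability statements simultaneously.
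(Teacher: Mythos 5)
Your proposal is correct and follows essentially the same route as the paper's proof: implicit differentiation of $\rep_\para\circ\rep_\para^{-1}=\id$ to get $\Jac_{\theta_i}\rep_\para^{-1}$, the adjugate/Cramer's-rule representation of $[\Jac\rep_\para]^{-1}$, \cref{eq:strdiff} to control the reciprocal determinant, \cref{lem:polyp} for $\para$-uniform polynomial bounds, and a reduction of each summand to the integrability criterion behind \cref{lem:uniint}/\cref{lem:finmeas}. The only cosmetic differences are that you use Jacobi's formula to keep a single explicit factor of $\det\Jac\rep_\para^{-1}$ so as to cite \cref{lem:finmeas}, whereas the paper writes $|\det\Jac\rep_\para^{-1}|=1/h$ and reduces directly to \cref{lem:uniint}; also, you should apply \cref{lem:finmeas} to the Schwartz function $(\partial^\alpha f)\cdot q'$ itself (its conclusion already carries the absolute value) rather than to $|\partial^\alpha f|\cdot q'$, which is not smooth.
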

\begin{proof}
  As for \cref{lem:finmeas}, by \cref{lem:uniint} it suffices to prove
  \begin{align*}
    \sup_{(\para,\repx)\in\parasp\times\suppd}\|\repx\|^{n+2}\cdot \left|\frac{\partial f_{(-)}}{\partial\theta_i}\cdot p(\repx)\right|
  \end{align*}
  for the first claim.
    %  (by \cite[Lemma~2]{H58}).
    Note that
      \begin{align*}
        0=\Jac_{\theta_i}(\rep_\para\circ\rep_\para^{-1})(\repx)
        =\Jac_{\theta_i}\rep_\para(\rep_\para^{-1}(\repx))+\Jac_\lat \rep_\para(\rep_\para^{-1}(\repx))\cdot \Jac_{\theta_i}\rep_\para^{-1}(\repx)
      \end{align*}
      and hence,
      \begin{align*}
        \Jac_{\theta_i}\rep_\para^{-1}(\repx)&=-\left(\Jac_\lat \rep_\para(\rep_\para^{-1}(\repx))\right)^{-1}\cdot\Jac_{\theta_i}\rep_\para(\rep_\para^{-1}(\repx))\\
        &=-\frac 1{|\det\Jac\rep_\para(\rep_\para^{-1}(\repx))|}\cdot\mathrm{adj}(\Jac\rep_\para(\rep_\para^{-1}(\repx)))\cdot\Jac_{\theta_i}\rep_\para(\rep_\para^{-1}(\repx))\\
         &=-\frac {1}{|\det\Jac\rep_\para(\rep_\para^{-1}(\repx))|}\cdot \mathbf{g}(\para,\rep_\para^{-1}(\repx))
      \end{align*}
      for a suitable\footnote{$\mathrm{adj}$ is the adjugate matrix} function $\mathbf{g}:\parasp\times\Real^n\to\Real^n$ bounded by polynomials (component-wise).
      Therefore,
      \begin{align*}
        \frac{\partial(f\circ \rep_{(-)}^{-1})}{\partial\theta_i}(\repx)
        &=\nabla f(\rep_\para^{-1}(\repx))\cdot\Jac_{\theta_i}\rep_\para^{-1}(\repx)\\
        &=\frac {\nabla f(\rep_\para^{-1}(\repx))\cdot \mathbf{g}(\para,\rep_\para^{-1}(\repx))}{|\det\Jac\rep_\para(\rep_\para^{-1}(\repx))|}
      \end{align*}
      By \cref{eq:strdiff}, there exists $c>0$ satisfying
      \begin{align*}
        \frac 1{|\det\Jac\rep_\para(\rep_\para^{-1}(\repx))|}=|\det\Jac\rep^{-1}_\para(\repx)|&\leq c  
      \end{align*} for all $\para\in\parasp$ and $\repx\in\suppd$.
      Consequently,
      % \begin{align*}
      %   \left|\frac{\partial(f\circ \rep_{(-)}^{-1})}{\partial\theta_i}(\rep_\para(\lat))\right|
      % \end{align*}
      \begin{align*}
         &\sup_{(\para,\repx)\in\parasp\times\suppd}\left|\|\repx\|^{n+2}\cdot\frac{\partial(f\circ\rep^{-1}_{(-)})}{\partial\theta_i}(\repx)\cdot\det\Jac\rep^{-1}_\para(\repx)\cdot p(\repx) \right|\\
         &\leq c^2\cdot\sup_{(\para,\lat)\in\parasp\times\suppd}\left|\|\rep_\para(\lat)\|^{n+2}\cdot (\nabla f(\lat)\cdot \mathbf{g}(\para,\lat))\cdot p(\rep_\para(\lat)) \right|<\infty
      \end{align*}
      because $\rep_\para(\lat)$, $\mathbf{g}(\para,\lat)$ and $p(\rep_\para(\lat))$ are uniformly bounded by polynomials independent of $\para$ (by \cref{lem:polyp}) and derivatives of Schwartz functions are Schwartz functions, too.

      Likewise, note that
      % Recall that %$\det\Jac\rep_\para(\lat)$ is  and we can assume w.l.o.g.\ that $p_1$ is positive.
    \begin{align*}
      |\det\Jac\rep_\para^{-1}(\repx)|=\frac 1{|\det\Jac\rep_\para(\rep_\para^{-1}(\repx))|}=\frac 1 {h(\para,\rep_\para^{-1}(\repx))}
      % |\det\Jac\rep_\para^{-1}(\rep_\para(\lat))|=\frac 1{|\det\Jac\rep_\para(\lat)|}=\frac 1 {q(\para,\lat)}
    \end{align*}
    for a function $h:\parasp\times\Real^n\to\Real$, the partial derivatives of which are bounded by polynomials 
    and which we can assume by \cref{ass:basic} w.l.o.g.\ to be positive
     (and greater than the constant $c$ above).
  
    Thus,
      \begin{align*}
        \frac{\partial}{\partial\theta_i}\det\Jac\rep_\para^{-1}(\repx)&=
        \frac\partial{\partial\theta_i}\frac 1{h(\para,\rep_\para^{-1}(\repx))}\\
        &=-\frac{\frac{\partial h}{\partial\theta_i}(\para,\rep_\para^{-1}(\repx))+\nabla_\lat h(\para,\rep_\para^{-1}(\repx))\cdot\Jac_{\theta_i}\rep_\para^{-1}(\repx)}{\left(h(\para,\rep_\para^{-1}(\repx))\right)^{2}}\\
        &=-\frac{\frac{\partial h}{\partial\theta_i}(\para,\rep_\para^{-1}(\repx))}{\underbrace{\left(h(\para,\rep_\para^{-1}(\repx))\right)^{2}}_{|\det\Jac\rep^{-1}_\para(\repx)|^{-2}}}-\frac{\nabla_\lat h(\para,\rep_\para^{-1}(\repx))\cdot \mathbf{g}(\para,\rep_\para^{-1}(\repx))}{\underbrace{\left(h(\para,\rep_\para^{-1}(\repx))\right)^{3}}_{|\det\Jac\rep^{-1}_\para(\repx)|^{-3}}}
      \end{align*}
      to show 
      \begin{align*}
        \sup_{(\para,\repx)\in\parasp\times\suppd}\left|\|\repx\|^{n+2}\cdot f(\rep_\para^{-1}(\repx))\cdot\frac{\partial}{\partial\theta_i}\det\Jac\rep^{-1}_\para(\repx)\cdot p(\repx) \right|<\infty
      \end{align*}
      The same insights can be used to show the second and third bounds.
\end{proof}

\begin{restatable}[Lipschitz Smoothness]{corollary}{lipsch}
  \label{prop:lipsch}
  If $F\in\sfct$ then the function
  \begin{align*}
    \para\mapsto\E_{\lat\sim\mdist}[\sem F(\rep_\para(\lat))]
  \end{align*} is Lipschitz smooth.
\end{restatable}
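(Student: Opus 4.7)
The strategy is to exploit the reparameterisation in reverse: perform a change of variables to shift all $\para$-dependence off of $\sem F$ (which is merely piecewise, not smooth) and onto the transformed density, which \emph{is} smooth in $\para$. Concretely, substituting $\repx = \rep_\para(\lat)$ yields
\[
  \E_{\lat \sim \mdist}[\sem F(\rep_\para(\lat))] = \int_\suppd \sem F(\repx) \cdot f_\para(\repx) \diff\repx,
\]
where $f_\para(\repx) \defeq \mdist(\rep_\para^{-1}(\repx)) \cdot |\det \Jac \rep_\para^{-1}(\repx)|$. Since $\sem F$ no longer depends on $\para$, Lipschitz smoothness reduces to a uniform bound on the Hessian of the integral with respect to $\para$, obtained by differentiating under the integral twice.

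First, I would establish by structural induction on $F \in \sfct$ that $\sem F$ is bounded by a polynomial $p_F : \Real^n \to \Real$. The base cases (variables) are trivial; the primitive case uses \cref{ass:basic} (primitives are polynomially bounded); and the conditional case follows from $|\sem{\ifc F G H}(\repx)| \leq |\sem G(\repx)| + |\sem H(\repx)| \leq p_G(\repx) + p_H(\repx)$, with \cref{lem:polyp}(\ref{it:polymon}) giving a single polynomial dominator.

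Next, I would apply \cref{lem:intdiff} twice in succession to justify exchanging differentiation and integration. For the first derivative, the required integrable dominator for $|\sem F(\repx) \cdot \frac{\partial f_\para}{\partial \theta_i}(\repx)|$ is supplied by the first bound of \cref{lem:uniintder} (taking the Schwartz function to be $\mdist$ and the polynomial to be $p_F$). For the second derivative $\frac{\partial^2}{\partial \theta_i \partial \theta_j}$, the dominator comes from the third bound of \cref{lem:uniintder}. This yields
\[
  \frac{\partial^2}{\partial \theta_i \partial \theta_j} \E_{\lat \sim \mdist}[\sem F(\rep_\para(\lat))] = \int_\suppd \sem F(\repx) \cdot \frac{\partial^2 f_\para}{\partial \theta_i \partial \theta_j}(\repx) \diff\repx,
\]
and the uniform domination shows the right-hand side is bounded uniformly in $\para \in \parasp$.

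Finally, since each Hessian entry is continuous in $\para$ (dominated convergence, again with the dominators from \cref{lem:uniintder}) and $\parasp$ is compact by \cref{ass:comp}, the Hessian is uniformly bounded on $\parasp$. Lipschitz smoothness of the gradient then follows by the mean value theorem along line segments; one can reduce to the convex case by noting $\parasp$ is contained in some closed ball on which the same bounds apply, or by extending the integrand smoothly. The main technical obstacle is the appeal to \cref{lem:uniintder} for the second-derivative dominator: this is precisely where the Schwartz decay of $\mdist$ must absorb both the polynomial growth of $\sem F$ and the (polynomially bounded) derivatives of $\rep_\para^{-1}$ and its Jacobian determinant, which in turn relies essentially on \cref{eq:strdiff} to control $1/|\det\Jac\rep_\para|$.
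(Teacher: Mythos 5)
Your proposal is correct and follows essentially the same route as the paper: change variables so that the parameter dependence sits entirely in the pushforward density $\mdist_\para$, bound $\sem F$ by a polynomial, and invoke the integrable dominators of \cref{lem:uniintder} (with $f=\mdist$ and $p$ the polynomial bound on $\sem F$) to differentiate under the integral twice and uniformly bound the Hessian over the compact $\parasp$. The paper leaves these details implicit (stating the corollary follows ``in the same manner'' from \cref{lem:uniintder}), so your write-up, including the remark on needing convexity of $\parasp$ for the mean-value step, is a faithful and slightly more explicit rendering of the intended argument.
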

In the same manner we can prove the other (simpler) obligations of \ref{it:bounded}.

\subsection{Supplementary Materials for \cref{sec:unicon}}
\label{app:unicon}

% \subsection{Supplementary Materials for \cref{sec:unicon}}

\unicon*
\begin{proof}
  Let $p:\Real^n\to\Real$ be a polynomial bound to $\sema F$ and $\sem F$ and let $\epsilon>0$. We focus on the second result (the first is analogous). We define
  \begin{align*}
    c&\defeq \int_\suppd\left|\sup_{\para\in\parasp}\frac{\partial\mdist_{(-)}}{\partial\theta_i}(\para,\repx)\right|\diff\repx\\
    U_k&\defeq\left\{\repx\in\Real^n\mid |f_\eta(\repx)-f(\repx)|>\frac\epsilon {2c}\text{ for some }0<\eta<\frac 1 k\right\}\\
    \mu(U)&\defeq\int_U p(\repx)\cdot\sup_{\para\in\parasp}\left|\frac{\partial\mdist_{(-)}}{\partial\theta_i}(\para,\repx)\right|\diff\repx
  \end{align*} 
  which is a finite measure by \cref{lem:uniintder}.
  Note that $(U_k)_{k\in\nat}$ is a non-increasing sequence of sets and $\bigcap_{k\in\nat} U_k$ is negligible.
  Hence, by continuity from above (of $\mu$) there exists $k$ such that $\mu(U_k)<\frac \epsilon 4$. 
  %  W.l.o.g.\ we can assume $\frac 1 k\leq\frac \epsilon 2$ (otherwise take a larger $k$).
  Finally, it suffices to observe that for $0<\eta<\frac 1 k$ and $\para\in\parasp$:
    \begin{align*}
      &\left|\frac\partial{\partial\theta_i}\E_{\lat\sim\mdist}[\sema F(\rep_\para(\lat))]-\frac\partial{\partial\theta_i}\E_{\lat\sim\mdist}[\sem F(\rep_\para(\lat))]\right|\\
      &=\left|\frac\partial{\partial\theta_i}\E_{\repx\sim\mdist_\para}[\sema F(\repx)]-\frac\partial{\partial\theta_i}\E_{\repx\sim\mdist_\para}[\sem F(\repx)]\right|\\
      &\leq \int_\suppd \left|\frac{\partial\mdist_{(-)}}{\partial\theta_i}(\para,\repx)\right|\cdot |\sema F(\repx)-\sem F(\repx)|\diff\repx\\
      &\leq \int_{U_k} \left|\frac{\partial\mdist_{(-)}}{\partial\theta_i}(\para,\repx)\right|\cdot |\sema F(\repx)-\sem F(\repx)|\diff\repx\\
      &\qquad+
      \int_{\suppd\setminus U_k} \left|\frac{\partial\mdist_{(-)}}{\partial\theta_i}(\para,\repx)\right|\cdot |\sema F(\repx)-\sem F(\repx)|\diff\repx\\
      &\leq 2\cdot\int_{U_k} \left|\frac{\partial\mdist_{(-)}}{\partial\theta_i}(\para,\repx)\right|\cdot p(\repx)\diff\repx+
      \int_{\suppd\setminus U_k} \left|\frac{\partial\mdist_{(-)}}{\partial\theta_i}(\para,\repx)\right|\cdot \frac\epsilon{2c}\diff\repx\\
      &\leq 2\cdot\mu(U_k)+
      \frac\epsilon{2c}\cdot\int_\suppd \left|\frac{\partial\mdist_{(-)}}{\partial\theta_i}(\para,\repx)\right|\diff\repx\\
      &\leq\epsilon\qedhere
    \end{align*}
\end{proof}

Uniform convergence may fail if $\parasp\subseteq\Real^m$ is not compact:
\begin{example}
  \label{ex:comp}
  Let $\parasp\defeq\Real\times\Real_{>0}$. $\E_{z\sim\nd(\theta_1,\theta_2)}[\smooth(z)]$ does \emph{not} converge  uniformly to $\E_{z\sim\nd(\theta_1,\theta_2)}[[z\geq 0]]$: Suppose $\eta>0$. There exists $\delta>0$ such that $\smooth(\delta)=0.6$. Define $\theta_1\defeq\theta_2\defeq\frac\delta 2$. Now, it suffices to note that
  \begin{align*}
    \E_{z\sim\nd(\theta_1,\theta_2)}[[z\geq 0]]&\geq\int_{\theta_1-\theta_2}^\infty\nd(z\mid\theta_1,\theta_2)\diff z\geq 0.83 \\
    \E_{z\sim\nd(\theta_1,\theta_2)}[\smooth(z)]&\leq 0.5\int_{-\infty}^{\theta_1-\theta_2}\nd(z\mid\theta_1,\theta_2)\diff z+
    0.6\int_{\theta_1-\theta_2}^{\theta_1+\theta_2}\nd(z\mid\theta_1,\theta_2)\diff z+
    \int_{\theta_1+\theta_2}^\infty\nd(z\mid\theta_1,\theta_2)\diff z\\
    &\leq 0.5\cdot 0.16+0.6\cdot 0.69+0.16= 0.654
  \end{align*}
% \dw{could plot something to illustrate}
\end{example}

\subsection{Supplementary Materials for \cref{sec:var}}
\label{app:var}

Now, exploiting the chain rule and \cref{ass:basic}, %and the fact that $|\smooth'|\leq\eta^{-1}$,
it is straightforward to show inductively that for $F\in\sfct_\ell$,
\begin{lemma}
  \label{lem:smsigder}
    If $F\in\sfct_\ell$ there exists a polynomial $p:\Real^n\to\Real$ such that for all $\eta>0$ and $\repx\in\Real^n$,
    \begin{align*}
        \left|\frac{\partial\sema F}{\partial z_i}(\repx)\right|\leq |\smooth'|^\ell\cdot p(\repx)
    \end{align*}
\end{lemma}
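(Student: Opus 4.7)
The plan is to prove the bound by structural induction on $F$ following the inductive definition of $\sfct_\ell$. To keep the induction going through the function-application case, I will strengthen the hypothesis so that we simultaneously prove two things: (a) $|\sema F(\repx)| \leq q(\repx)$ for some polynomial $q$ independent of $\eta$, and (b) $|\partial \sema F/\partial z_i(\repx)| \leq |\smooth'|^\ell \cdot p(\repx)$ for some polynomial $p$ independent of $\eta$, where $|\smooth'|$ is the uniform bound $\|\smooth'\|_\infty \leq \eta^{-1}$ used throughout \cref{sec:var}.

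Base case $F = z_j \in \sfct_0$: we have $\sema F(\repx) = z_j$, so $|\sema F(\repx)| \leq \|\repx\|$ is polynomially bounded, and $|\partial \sema F/\partial z_i| \in \{0,1\} \leq 1 = |\smooth'|^0 \cdot 1$. Function application $F = f(F_1,\ldots,F_k) \in \sfct_\ell$: by the chain rule
\[
\frac{\partial \sema F}{\partial z_i}(\repx) = \sum_{j=1}^k \partial_j f\bigl(\sema{F_1}(\repx),\ldots,\sema{F_k}(\repx)\bigr)\cdot \frac{\partial \sema{F_j}}{\partial z_i}(\repx).
\]
By \cref{ass:basic}, $\partial_j f$ is polynomially bounded in its arguments, and by IH(a) the $\sema{F_j}$ are polynomially bounded in $\repx$, so the first factor is polynomially bounded in $\repx$ (by composition, using \cref{lem:polyp}\ref{it:polymon}). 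Combining this with IH(b) applied to each $\partial \sema{F_j}/\partial z_i$ yields the desired form $|\smooth'|^\ell$ times a polynomial. The bound on $|\sema F(\repx)|$ follows by composition of polynomials.

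If-then-else case $F = (\ifc{F'}{G}{H}) \in \sfct_{\ell+1}$ with $F' \in \sfct_\ell$ and $G,H \in \sfct_{\ell+1}$ is the main step. Differentiating
$\sema F = \smooth(-\sema{F'})\cdot \sema G + \smooth(\sema{F'})\cdot \sema H$
gives four terms. In the two terms carrying $\smooth'(\pm\sema{F'})$, we bound $|\smooth'(\pm\sema{F'})| \leq |\smooth'|$ uniformly, then use IH(b) on $F'$ (contributing $|\smooth'|^\ell$) and IH(a) on $G,H$ (polynomial factors); the factors of $|\smooth'|$ combine to $|\smooth'|^{\ell+1}$. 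In the two terms carrying $\smooth(\pm\sema{F'})$, we bound $|\smooth(\pm\sema{F'})| \leq 1$ and apply IH(b) directly to $G,H$ to obtain $|\smooth'|^{\ell+1}$ times a polynomial. Summing the four polynomial bounds yields a single polynomial, and the bound on $|\sema F|$ is immediate from $\smooth\in[0,1]$ and IH(a) on $G,H$. The weakening rule $\sfct_\ell \subseteq \sfct_{\ell+1}$ is handled by absorbing an extra factor: since throughout our analysis we are interested in small $\eta$ where $|\smooth'|\geq 1$ (or one may rescale the polynomial by a universal factor to cover all remaining $\eta$), the bound $|\smooth'|^\ell \cdot p \leq |\smooth'|^{\ell+1}\cdot p'$ follows.

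The step I expect to be the main obstacle is the if-then-else case: getting the bookkeeping right so that exactly one extra factor of $|\smooth'|$ is incurred per nesting level (and not two, which would yield only the weaker $|\smooth'|^{2\ell}$ bound alluded to in the paper's informal remark). The key insight to avoid the extra factor is that in the $\smooth'$ terms we already have $|\smooth'|^\ell$ from IH on $F'$, so multiplying by a single $|\smooth'|$ from $\smooth'(\pm\sema{F'})$ gives the full $|\smooth'|^{\ell+1}$; while in the $\smooth$ terms the factor of $1$ from $\smooth$ leaves room for $|\smooth'|^{\ell+1}$ to come entirely from the IH on $G$ and $H$. Once this balance is made explicit, the rest is routine polynomial bookkeeping using \cref{lem:polyp}.
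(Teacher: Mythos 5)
Your proof is correct and follows exactly the route the paper intends: the paper only remarks that the lemma is ``straightforward to show inductively'' via the chain rule and \cref{ass:basic}, and your writeup supplies the missing details, including the necessary strengthening of the induction hypothesis with a polynomial bound on $|\sema F|$ itself and the correct accounting of one factor of $|\smooth'|$ per guard-nesting level. The only wrinkle you flag---that the weakening rule $\sfct_\ell\subseteq\sfct_{\ell+1}$ costs a factor $|\smooth'|^{-1}=4\eta$ that is not absorbable by a polynomial when $\eta$ is unbounded---is an imprecision inherited from the paper's own statement (``for all $\eta>0$'') and is harmless since every use of the lemma (e.g.\ \cref{lem:varaux}) restricts to $0<\eta\leq 1$, where your constant-rescaling fix is valid.
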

(By $|\smooth'|$ we mean $\sup_{y\in\Real}|\smooth'(y)|$.)

\begin{restatable}{lemma}{varaux}
  \label{lem:varaux}
  Let $F\in\sfct_\ell$, $f:\suppd\to\Real$ be a non-negative Schwartz function, where $\suppd=U_1\times\cdots\times U_n$ and $U_1,\ldots,U_n\in\{\Real,\Real_{\geq 0}\}$, and $p$ be a non-negative polynomial.
  For all $1\leq i\leq n$ there exists $c>0$ such that for all \changed[dw]{$0<\eta\leq 1$},
  \begin{align*}
    \int_\suppd f_\para(\repx)\cdot p(\repx)\cdot\left|\frac{\partial^2 \sema F}{\partial z_i^2}(\repx)\right|\diff\repx&<c\cdot\eta^{-\ell}\\
    \int_\suppd f_\para(\repx)\cdot p(\repx)\cdot \left|\frac{\partial \sema F}{\partial z_i}(\repx)\right|^2\diff\repx&<c\cdot\eta^{-\ell}
  \end{align*}
  where $f_\para\defeq f(\rep_\para^{-1}(\repx))\cdot|\det\Jac\rep_\para^{-1}(\repx)|$.
\end{restatable}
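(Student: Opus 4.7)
My plan is to prove both inequalities simultaneously by structural induction on $F$. In the base case $F \in \sfct_0$, the smoothing $\sema F$ is a composition of primitives from $\pop$; all its partial derivatives are then uniformly bounded in $\eta$ by polynomials (by \cref{ass:basic} and an inner induction on structure), so both integrals are $O(1)$ by the reasoning of \cref{lem:finmeas} and \cref{lem:uniintder}. The case $F = f(F_1, \ldots, F_k) \in \sfct_\ell$ follows from the product rule together with the inductive hypothesis. The essential case is $F = \ifc{G}{H_1}{H_2} \in \sfct_{\ell+1}$ with guard $G \in \sfct_\ell$ and branches $H_1, H_2 \in \sfct_{\ell+1}$.

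Expanding $\sema F = \smooth(-\sema G)\sema{H_1} + \smooth(\sema G)\sema{H_2}$ and differentiating via the chain rule, every term of $(\partial_i \sema F)^2$ or $|\partial_i^2 \sema F|$, after using $|\smooth|\le 1$, the triangle inequality and Cauchy--Schwarz, is bounded above pointwise by a Schwartz-times-polynomial weight times one of the following shapes: (a) $(\partial_i \sema{H_j})^2$ or $|\partial_i^2 \sema{H_j}|$, handled directly by the IH on $H_j \in \sfct_{\ell+1}$; (b) $|\smooth'(\sema G)|\,|\partial_i^2 \sema G|$, bounded by $(4\eta)^{-1}|\partial_i^2 \sema G|$ using $|\smooth'|\le(4\eta)^{-1}$ together with the IH on $G$; (c) $|\smooth''(\sema G)|(\partial_i \sema G)^2$ or $(\smooth'(\sema G))^2 (\partial_i \sema G)^2$, which via the pointwise inequalities $|\smooth''|\le\eta^{-1}\smooth'$ (from $\sigma''=\sigma'(1-2\sigma)$) and $(\smooth')^2\le\tfrac1{4\eta}\smooth'$ reduce to a multiple of $\eta^{-1}\smooth'(\sema G)(\partial_i \sema G)^2$. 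In each case the target order $\eta^{-(\ell+1)}$ is achieved once we can bound the shape-(c) integrand by $O(\eta^{-\ell})$.

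This pivotal sub-estimate reads: for every polynomial $q$, $\int f_\para\cdot q\cdot \smooth'(\sema G)(\partial_i \sema G)^2\,\diff\repx = O(\eta^{-\ell})$. The idea is to rewrite the integrand as $(f_\para\,q\,\partial_i \sema G)\cdot\partial_i\smooth(\sema G)$ and integrate by parts in the coordinate $z_i$. Since $|\smooth(\sema G)|\le 1$, the bulk term reduces to a sum of integrals of the form $\int f_\para\,\tilde q\,|\partial_i^2\sema G|$, $\int |\partial_i f_\para|\,\tilde q\,|\partial_i\sema G|$ and $\int f_\para\,|\partial_i\tilde q|\,|\partial_i\sema G|$ for appropriate polynomials $\tilde q$. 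The first is $O(\eta^{-\ell})$ by the IH's second-derivative bound on $G \in \sfct_\ell$; the latter two, by combining the pointwise bound $|\partial_i\sema G|\le(4\eta)^{-\ell}\cdot\mathrm{poly}(\repx)$ from \cref{lem:smsigder} with the uniform integrability of $|\partial_i f_\para|$ against polynomials (\cref{lem:uniintder}), are also $O(\eta^{-\ell})$.

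The chief obstacle is handling the boundary terms of the integration by parts when $U_i = \Realnn$: unlike the boundary at infinity, where the Schwartz decay of $f_\para$ makes everything vanish, the contribution at $z_i = 0$ is a priori non-zero. I plan to estimate it by a Fubini-type argument: bound the boundary integrand pointwise using $|\smooth(\sema G)|\le 1$ and the pointwise bound on $\partial_i\sema G$, then integrate over the remaining coordinates using the Schwartz decay of $f$ in those coordinates and polynomial bounds on $\rep_\para^{-1}$ to obtain a contribution of the correct order. A secondary bookkeeping issue is that each application of the chain rule enlarges the polynomial weight; this is harmless because the lemma (and thus the inductive hypothesis) is quantified over arbitrary polynomials $p$.
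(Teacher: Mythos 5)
Your proposal is correct and follows essentially the same route as the paper's proof: structural induction on the derivation of $F\in\sfct_\ell$, the pointwise inequalities $|\smooth'|\le(4\eta)^{-1}$ and $|\smooth''|\le\eta^{-1}\smooth'$, and the same integration-by-parts trick that converts $\smooth'(\sema G)(\partial_i\sema G)^2$ into terms controlled by the inductive hypothesis together with \cref{lem:smsigder,lem:uniintder}. The only notable difference is that you explicitly handle the boundary contribution at $z_i=0$ when $U_i=\Realnn$, which the paper dismisses with a one-line remark; your Fubini-type estimate there is a reasonable way to fill in that detail.
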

\begin{proof}
  Note that $f_\para$ is differentiable and non-negative. To simplify notation, we assume that $\suppd=\Real^n$. (Otherwise the proof is similar, exploiting \cref{lem:smsigder}.) Besides, it suffices to establish the first bound. To see that the second is a consequence of the first, we use integration by parts
  \begin{align*}
    &\int f_\para(\repx)\cdot p(\repx)\cdot \left(\frac{\partial \sema F}{\partial z_i}(\repx)\right)^2\diff\repx\\
    &=\int\int \left(f_\para(\repx)\cdot p(\repx)\cdot \frac{\partial \sema F}{\partial z_i}(\repx)\right)\cdot\frac{\partial \sema F}{\partial z_i}(\repx)\diff z_j\diff\repx_{-j}\\
    &=\int\underbrace{\left[f_\para(\repx)\cdot p(\repx)\cdot \frac{\partial \sema F}{\partial z_i}(\repx)\cdot \sema F(\repx)\right]^{\infty}_{-\infty}}_{=0}\diff\repx_{-j}\\
    &\qquad-\int \frac\partial{\partial z_i}\left(f_\para(\repx)\cdot p(\repx)\cdot \frac{\partial \sema F}{\partial z_i}(\repx)\right)\cdot \sema F(\repx)\diff\repx
  \end{align*}
  where $\repx_{-j}$ is $z_1,\ldots,z_{j-1},z_{j+1},\ldots,j_n$,
  because by \cref{lem:finmeas}, for fixed $\eta>0$ and $\repx_{-j}$,
  \begin{align*}
      \lim_{z_j\to\pm\infty}f_\para(\repx)\cdot \underbrace{p(\repx)\cdot \frac{\partial \sema F}{\partial z_i}(\repx)\cdot \sema F(\repx)}_{\text{bounded by polynomial}}
  \end{align*}
  % \cref{lem:smsigder,rem:unibsm}, , $z_j\mapsto p(\repx)\cdot \frac{\partial \sema F}{\partial z_i}(\repx)\cdot \sema F(\repx)$ is bounded by a polynomial, and .
We continue bounding:
  \begin{align*}
&-\int \frac\partial{\partial z_i}\left(f_\para(\repx)\cdot p(\repx)\cdot \frac{\partial \sema F}{\partial z_i}(\repx)\right)\cdot \sema F(\repx)\diff\repx\\
&=-\int \frac{\partial f_{(-)}}{\partial z_i}(\para,\repx)\cdot p(\repx)\cdot \frac{\partial \sema F}{\partial z_i}(\repx)\cdot \sema F(\repx)\diff\repx\\
&\qquad-\int f_\para(\repx)\cdot \frac{\partial p}{\partial z_i}(\repx)\cdot \frac{\partial \sema F}{\partial z_i}(\repx)\cdot \sema F(\repx)\diff\repx\\
&\qquad-\int f_\para(\repx)\cdot p(\repx)\cdot \frac{\partial^2 \sema F}{\partial z^2_i}(\repx)\cdot \sema F(\repx)\diff\repx\\
&\leq \eta^{-\ell}\cdot\int\sup_{\para\in\parasp}\left|\frac{\partial f_{(-)}}{\partial z_i}(\para,\repx)\cdot p_1(\repx)\right|\diff\repx\\
&\qquad+\eta^{-\ell}\cdot\int\sup_{\para\in\parasp}\left| f_\para(\repx)\cdot p_2(\repx)\right|\diff\repx\\
&\qquad+ \int f_\para(\repx)\cdot p_3(\repx)\cdot\left|\frac{\partial^2 \sema F}{\partial z^2_i}(\repx)\right|
  \end{align*}
  for suitable polynomial bounds $p_1,p_2,p_3:\Real^n\to\Real$ (which exist due to the fact that $\frac{\partial p}{\partial z_i}$ \emph{is} a polynomial, \cref{it:polymon,lem:smsigder}) and the second inequality follows with \cref{lem:uniintder}.

  We prove the claim by induction on the definition of $\sfct_\ell$:
  \begin{itemize}
      \item For $z_j\in\sfct_0$ and $F\in\sfct_{\ell+1}$ due to $F\in\sfct_\ell$ the claim is obvious.
      \item For $g(F_1,\ldots, F_k)\in\sfct_\ell$ because $F_1,\ldots,f_k\in\sfct_\ell$,
      \begin{align*}
          &\int f_\para(\repx)\cdot p(\repx)\cdot \left|\frac{\partial^2 (g\circ\langle \sema{F_1},\ldots,\sema{F_k}\rangle)}{\partial z_i^2}(\repx)\right|\diff\repx\\
          &\leq\sum_{1\leq j,j'\leq k}\int f_\para(\repx)\cdot p(\repx)\cdot \left|\frac{\partial^2 g}{\partial y_j\partial y_{j'}}(\sema{F_1}(\repx),\ldots, \sema{F_k}(\repx))\cdot\frac{\partial \sema {F_j}}{\partial z_i}(\repx)\cdot\frac{\partial \sema {F_{j'}}}{\partial z_i}(\repx)\right|\diff\repx\\
           &\qquad+\sum_{1\leq j\leq k}\int f_\para(\repx)\cdot p(\repx)\cdot \left|\frac{\partial g}{\partial y_j}(\sema{F_1}(\repx),\ldots, \sema{F_k}(\repx))\cdot\frac{\partial^2 \sema {F_j}}{\partial^2 z_i}(\repx)\right|\diff\repx
        \end{align*}
        By \cref{ass:basic}, $\frac{\partial g}{\partial y_j}(g_\eta^{(1)}(\repx),\ldots,g_\eta^{(\ell)}(\repx))$ is bounded by a polynomial. Therefore the second summand can be bounded by the inductive hypothesis.
        For the first, again by \cref{ass:basic}, we can bound
        \begin{align*}
            p(\repx)\cdot\left|\frac{\partial^2 g}{\partial y_j\partial y_{j'}}(g_\eta^{(1)}(\repx),\ldots,g_\eta^{(\ell)}(\repx))\right|\leq p_1(\repx)
        \end{align*} 
        for a (non-negative) polynomial $p_1$, apply the Cauchy-Schwarz inequality and apply the inductive hypothesis to
        \begin{align*}
        %   &\E_{\repx\sim\mdist_\para}\left[p(\repx)\cdot \left|\frac{\partial^2 g}{\partial y_j\partial y_{j'}}(\sema{F_1}(\repx),\ldots, \sema{F_k}(\repx))\cdot\frac{\partial \sema {F_j}}{\partial z_i}(\repx)\cdot\frac{\partial \sema {F_{j'}}}{\partial z_i}(\repx)\right|\right]\\
          &\int f_\para(\repx)\cdot p_1(\repx)\cdot \left|\frac{\partial \sema {F_j}}{\partial z_i}(\repx)\right|^2\diff\repx&
          &\int f_\para(\repx)\cdot p_1(\repx)\cdot\left|\frac{\partial \sema {F_{j'}}}{\partial z_i}(\repx)\right|^2\diff\repx
        \end{align*}
        \item Next, suppose $\ifc{F_1}{F_2}{F_3}\in\sfct_{\ell+1}$ because $F_1\in\sfct_\ell$ and $F_2,F_3\in\sfct_{\ell+1}$. By linearity we bound (similarly for the other branch):
        \begin{align*}
          &\int f_\para(\repx)\cdot p(\repx)\cdot \left|\frac{\partial^2 ((\smooth\circ\sema {F_1})\cdot\sema {F_3})}{\partial z^2_i}(\repx)\right|\diff\repx\\
          &\leq\int f_\para(\repx)\cdot p(\repx)\cdot \left|\frac{\partial^2 (\smooth\circ\sema {F_1})}{\partial z_i^2}(\repx)\cdot\sema {F_3}(\repx)\right|\diff\repx\\
           &\qquad+2\cdot\int f_\para(\repx)\cdot p(\repx)\cdot \left|\frac{\partial (\smooth\circ\sema {F_1})}{\partial z_i}(\repx)\cdot\frac{\partial\sema {F_3}}{\partial z_i}(\repx)\right|\diff\repx\\
           &\qquad+\int f_\para(\repx)\cdot p(\repx)\cdot \left|\smooth(\sema {F_1}(\repx))\cdot\frac{\partial^2\sema {F_3}}{\partial z_i^2}(\repx)\right|\diff\repx
        \end{align*}
          We abbreviate $F\equiv F_1$ and bound $p(\repx)\cdot|\sema{F_3}(\repx)|$ by the non-negative polynomial $p_2$. Bounding the first summand is most interesting:
        \begin{align*}
          &\int f_\para(\repx)\cdot p(\repx)\cdot \left|\frac{\partial^2 (\smooth\circ\sema {F_1})}{\partial z_i^2}(\repx)\cdot\sema {F_3}(\repx)\right|\diff\repx\\
          &=\int f_\para(\repx)\cdot p_2(\repx)\cdot \left|\frac{\partial^2 (\smooth\circ \sema F)}{\partial z_i}(\repx)\right|\diff\repx\\
          &\leq\int f_\para(\repx)\cdot p_2(\repx)\cdot \left|\smooth''(\sema F(\repx))\right|\cdot\left(\frac{\partial \sema F}{\partial z_i}(\repx)\right)^2\diff\repx\\
           &\qquad+\int f_\para(\repx)\cdot p_2(\repx)\cdot \left|\underbrace{\smooth'(\sema F(\repx))}_{\leq\eta^{-1}}\cdot\frac{\partial^2 \sema F}{\partial z_i^2}(\repx)\right| \diff\repx
        \end{align*}
        The first summand can be bounded with the inductive hypothesis.
        For the second summand we exploit that
        \begin{align*}
            |\smooth''(x)|&=\eta^{-2}\left|\smooth(x)\cdot(1-\smooth(x))\cdot(1-2\smooth(x))\right|\\
          &\leq\eta^{-2}\cdot\smooth(x)\cdot(1-\smooth(x))\\
          &= \eta^{-1}\cdot\smooth'(x)
        \end{align*}
        and continue using integration by parts again in the second step
        \begin{align*}
          &\int f_\para(\repx)\cdot p_2(\repx)\cdot \left|\smooth''(\sema F(\repx))\cdot\left(\frac{\partial \sema F}{\partial z_i}(\repx)\right)^2 \right|\diff\repx\\
          &\leq\eta^{-1}\cdot\int f_\para(\repx)\cdot p_2(\repx)\cdot\frac{\partial \sema F}{\partial z_i}(\repx)\cdot\frac{\partial(\smooth\circ \sema F)}{\partial z_i}(\repx)\diff\repx\\
          &=-\eta^{-1}\cdot\int\frac\partial{\partial z_i}\left( f_\para(\repx)\cdot p_2(\repx)\cdot\frac{\partial \sema F}{\partial z_i}(\repx)\right)\cdot\smooth(\sema F(\repx))\diff\repx\\
          &=-\eta^{-1}\cdot\int\frac{\partial f_{(-)}}{\partial z_i}(\para,\repx)\cdot p_2(\repx)\cdot\frac{\partial \sema F}{\partial z_i}(\repx)\cdot\smooth(\sema F(\repx))\diff\repx\\
           &\qquad-\eta^{-1}\cdot\int f_\para(\repx)\cdot \frac{\partial p_2}{\partial z_i}(\repx)\cdot\frac{\partial \sema F}{\partial z_i}(\repx)\cdot\smooth(\sema F(\repx))\diff\repx\\
           &\qquad-\eta^{-1}\cdot\int f_\para(\repx)\cdot p_2(\repx)\cdot\frac{\partial^2 \sema F}{\partial z^2_i}(\repx)\cdot\smooth(\sema F(\repx))\diff\repx\\
          &\leq \eta^{-1}\cdot\int\left|\frac{\partial f_{(-)}}{\partial z_i}(\para,\repx)\cdot p_2(\repx)\cdot\frac{\partial \sema F}{\partial z_i}(\repx)\right|\diff\repx\\
           &\qquad+\eta^{-1}\cdot\int f_\para(\repx)\cdot\left|\frac{\partial p_2}{\partial z_i}(\repx)\cdot\frac{\partial \sema F}{\partial z_i}(\repx)\right|\diff\repx\\
           &\qquad+\eta^{-1}\cdot\int f_\para(\repx)\cdot\left|p_2(\repx)\cdot\frac{\partial^2 \sema F}{\partial z^2_i}(\repx)\right|\diff\repx
        \end{align*}
        and the claim follows with the inductive hypothesis (recall $F\equiv F_1\in\sfct_\ell$), \cref{lem:smsigder,lem:uniintder}.\qedhere
  \end{itemize}
\end{proof}

\var*
\begin{proof}
  Note that
  \begin{align*}
    \E_{\lat\sim\mdist}\left[\left|\frac{\partial(\sema F\circ\rep_{(-)})}{\partial \theta_i}(\lat)\right|^2\right]
    &=
    \E_{\lat\sim\mdist}\left[\left|\sum_{j=1}^n \frac{\partial\sema F}{\partial z_j}(\rep_\para(\repx))\cdot\frac{\partial\phi^{(j)}_{(-)}}{\partial\theta_i}(\para,\lat)\right|^2\right]
    % &\leq\sum_{1\leq j,j'\leq n}\sqrt{\E_{\lat\sim\mdist}\left[\left|\frac{\partial\sema F}{\partial z_j}(\rep_\para(\repx))\cdot\frac{\partial\phi^{(j)}_{(-)}}{\partial\theta_i}(\para,\lat)\right|^2\right]\cdot
    % \E_{\lat\sim\mdist}\left[\left|\frac{\partial\sema F}{\partial z_{j'}}(\rep_\para(\repx))\cdot\frac{\partial\phi^{(j')}_{(-)}}{\partial\theta_i}(\para,\lat)\right|^2\right]}
  \end{align*}
  Therefore, by the Cauchy-Schwarz inequality it suffices to bound
  \begin{align*}
    \E_{\lat\sim\mdist}\left[\left|\frac{\partial\sema F}{\partial z_j}(\rep_\para(\repx))\cdot\frac{\partial\phi^{(j)}_{(-)}}{\partial\theta_i}(\para,\lat)\right|^2\right]
  \end{align*}
  for all $1\leq j\leq n$.
%   In particular, it suffices to bound each of the expectations.
  By \cref{lem:polyp}, for each $1\leq i\leq m$ and $1\leq j\leq n$ there exists a (non-negative) polynomial bound $p:\Real^n\to\Real$  satisfying
  \begin{align*}
    \left(\frac{\partial\phi^{(j)}_{(-)}}{\partial\theta_i}(\para,\lat)\right)^2\leq p(\lat)
  \end{align*}
  and for $f(\lat)\defeq\mdist(\lat)\cdot p(\lat)$, which is a Schwartz function by \cref{it:prod},
  \begin{align*}
    \E_{\lat\sim\mdist}\left[\left|\frac{\partial\sema F}{\partial z_j}(\rep_\para(\repx))\cdot\frac{\partial\phi^{(j)}_{(-)}}{\partial\theta_i}(\para,\lat)\right|^2\right]&\leq
    \int f(\lat)\cdot \left|\frac{\partial\sema F}{\partial z_j}(\rep_\para(\repx))\right|^2\diff\lat\\
    &=\int f_\para(\repx)\cdot \left|\frac{\partial\sema F}{\partial z_j}(\repx)\right|^2\diff\repx
  \end{align*}
  where
  \begin{align*}
    f_\para\defeq f(\rep_\para^{-1}(\repx))\cdot|\det\Jac\rep_\para^{-1}(\repx)|
  \end{align*}
  The claim follows with \cref{lem:varaux}.
\end{proof}

\subsection{Supplementary Materials for \cref{sec:cc}}
\begin{restatable}[Average Variance of Run]{remark}{avvar}
  % \label{rem:avvar}
  By \cref{prop:var} the average variance of a finite DSGD run with length $N$ is bounded by (using H\"older's inequality)
  \[
  \frac 1N\cdot\sum_{k=1}^N\eta_k^{-\ell} \leq \eta_{\frac{N+1}2}^{-\ell}
  \]
  Consequently, the average variance of a DSGD run is lower than for standard SGD with a fixed accuracy coefficient $\eta<\eta_{\frac{N+1}2}$, the accuracy coefficient of DSGD after half the iterations.
\end{restatable}
\begin{proof}
  If $\eta_k\in\Theta(k^{-\frac 1\ell+\epsilon})$ then (modulo constants), $\eta_k^{-\ell}=k^\delta$ for $\delta\leq 1$. Therefore, by H\"older's inequality,
  \begin{align*}
    \frac 1N\cdot\sum_{k=1}^N\eta_k^{-\ell}&=
    \frac 1N\cdot\sum_{k=1}^N k^\delta\\
    &\leq\frac 1 N\cdot \left(N^{1/\delta-1}\cdot\sum_{k=1}^N k\right)^\delta\\
&=\frac 1 N\cdot\left(N^{1/\delta-1}\cdot\frac{N\cdot(N+1)}2\right)^\delta\\
&=\left(\frac{N+1}2\right)^\delta
=\eta_{\frac{N+1}2}^{-\ell}\qedhere
  \end{align*}
\end{proof}

\section{Supplementary Materials for \cref{sec:eval}}
\label{app:eval}

The code is available at \url{https://github.com/domwagner/DSGD.git}.
   The experiements can be viewed and run in the jupyter notebook \texttt{experiments.ipynb} by running:

\texttt{jupyter notebook experiments.ipynb}.

\subsection{Additional Models}

\begin{itemize}
  \item \texttt{cheating} %, adapted from the book \emph{Bayesian Methods for Hackers}
  \citep{DavidsonPilon15} simulates a differential privacy setting where students taking an exam are surveyed to determine the prevalence of cheating without exposing the details for any individual.
  Students are tasked to toss a coin, on heads they tell the truth (cheating or not cheating) and on tails they toss a second coin to determine their answer.
  The tossing of coins here is a source of discontinuity.
  The goal, given the proportion of students who answered yes, is to predict a posterior on the cheating rate.
  In this model there are 300 if-statements and a 301-dimensional latent space, although we only optimise over a single dimension with the other 300 being sources of randomness.
  \item \texttt{textmsg} \citep{DavidsonPilon15} models daily text message rates, and the goal is to discover a change in the rate over the 74-day period of data given.
  The non-differentiability arises from the point at which the rate is modelled
  to change. The model has a 3-dimensional latent variable (the two rates and
  the point at which they change) and 37 if-statements.

  \item \texttt{influenza}  \citep{ShumwayS05} models the US influenza mortality for
  1969.
  In each month, the mortality rate depends on the dominant virus strain being of type 1 or type 2, producing a non-differentiablity for each month.
  Given the mortality data, the goal is to infer the dominant virus strain in each month. The model has a 37-dimensional latent variable and 24 if-statements.
\end{itemize}

% \subsection{Experimental Set-Up}
% \label{app:expsetup}

\subsection{Additional Results}

See \cref{fig:addtraj,tab:addvar,tab:finalelbo2}.

\begin{figure}[h]
  \begin{center}
    \begin{subfigure}[h]{0.45\linewidth}
      \includegraphics[width=\linewidth]{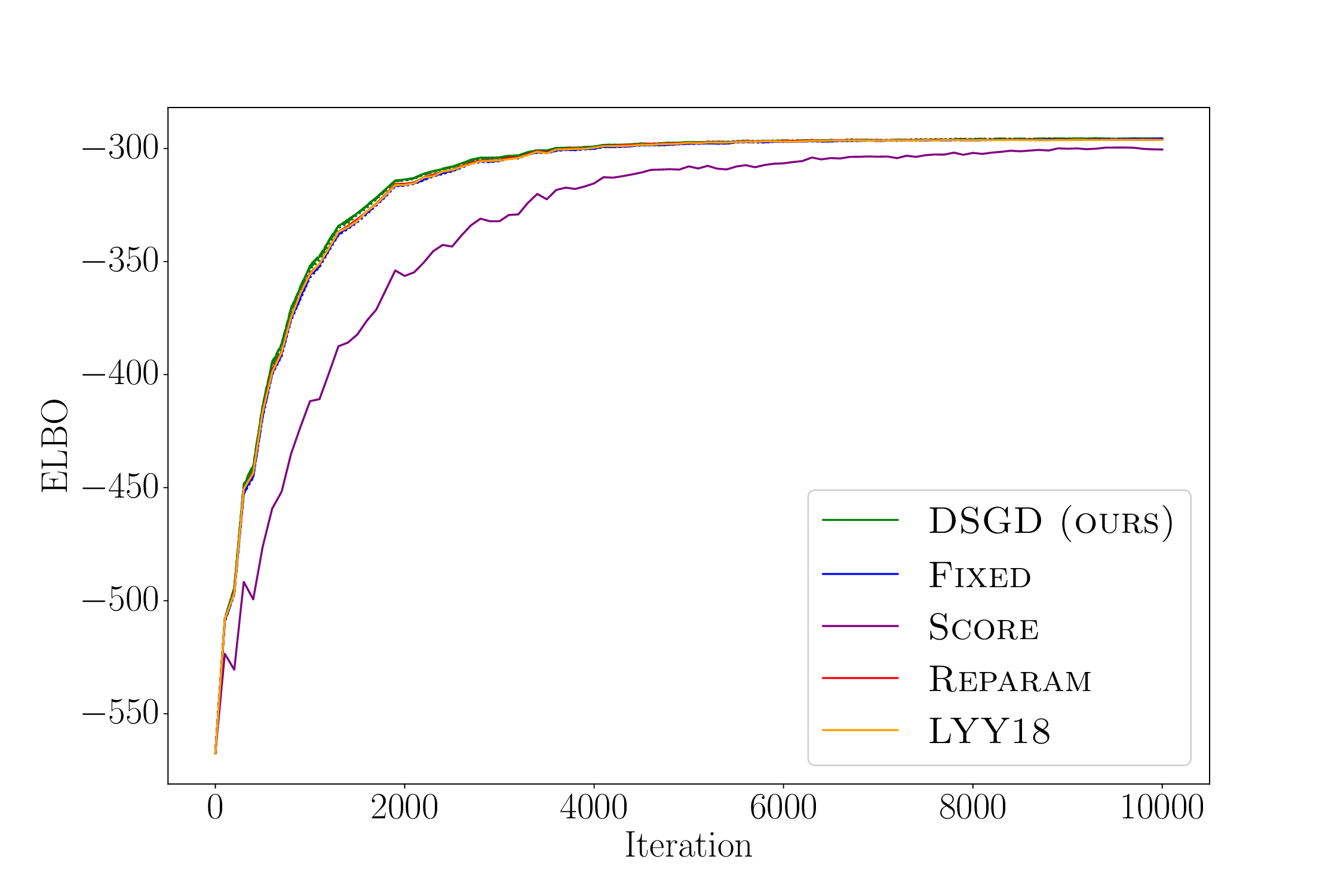}
    
      \vspace*{-3mm}
      \caption{\texttt{textmsg}}
      % \label{fig:cheating-graph}
    \end{subfigure}
    \begin{subfigure}[h]{0.45\linewidth}
      \includegraphics[width=\linewidth]{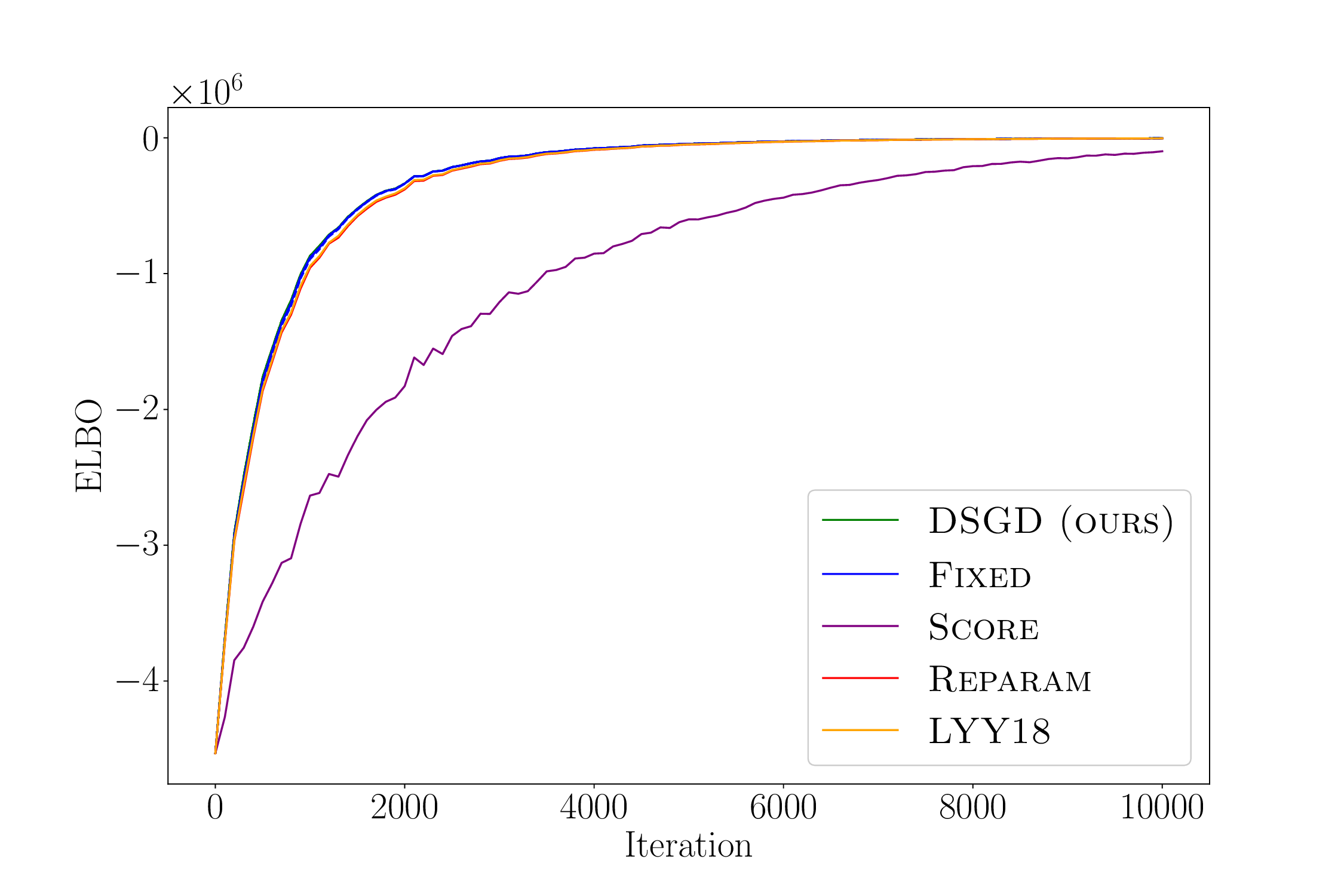}
    
      \vspace*{-3mm}
      \caption{\texttt{influenza}}
      % \label{fig:cheating-graph}
    \end{subfigure}
  \end{center}
  \caption{ELBO trajectories for additional models. The same setup is used as in \cref{fig:elbo}.}
  \label{fig:addtraj}
\end{figure}

\begin{table}
  \caption{Computational cost and work-normalised variances for additional models (cf.~\cref{tab:var})}
  \label{tab:addvar}
  \vspace*{2pt}
  \begin{subtable}{0.48\linewidth}
    \caption{\texttt{textmsg}}
    \centering
    \begin{tabular}{lcccr}
      \toprule
      Estimator & Cost & $\textrm{Avg}(V(.))$ & $V(\|.\|_2)$ \\
      \midrule
      \textsc{DSGD} (ours) &  1.84  &  7.89e-03   & 1.53e-02\\
      \textsc{Fixed} &   1.79 &   1.08e-02  &  2.14e-02\\
      \textsc{Reparam} &   1.25 &  7.99e-03  &  1.53e-02\\
      \textsc{LYY18} &    4.51 &   3.42e-02 &   6.00e-02 \\
      \bottomrule
    \end{tabular}
  \end{subtable}
\qquad
  % \vspace*{5pt}
  \begin{subtable}{0.48\linewidth}
    \caption{\texttt{influenza}}
    \centering
    \begin{tabular}{lcccr}
      \toprule
      Estimator & Cost & $\textrm{Avg}(V(.))$ & $V(\|.\|_2)$ \\
      \midrule
      \textsc{DSGD} (ours)&  1.28 &   7.77e-03 &   3.94e-03\\
      \textsc{Fixed} &    1.28 &   7.92e-03 &   3.97e-03\\
      \textsc{Reparam} &  1.21 &  7.60e-03 &   3.75e-03\\
      \textsc{LYY18} &  8.30 &  5.80e-02  &  2.88e-02\\
      \bottomrule
    \end{tabular}
  \end{subtable}

\end{table}

\begin{table*}[t]
  \caption{Mean of the final ELBO (the higher the better)
  for different random seeds and indicating error bars (the $\pm$ is one standard deviation).}
  \label{tab:finalelbo2}
  \vspace*{2pt}
  \begin{subtable}{\linewidth}
    \caption{\texttt{temperature}}
    \centering
  \begin{tabular}{c|ccccc}
    \toprule
    $\eta/\eta_{4000}$ & \textsc{DSGD} (ours) & \textsc{Fixed} & \textsc{Score} & \textsc{Reparam} & \textsc{LYY18}\\
    \midrule
    0.06 & -76 $\pm$ 1 & -624,250 $\pm$ 44,121\\
    0.1 & -84 $\pm$ 2 & -425 $\pm$ 9\\
    0.14 & -15,476 $\pm$ 4,641 & -121,932 $\pm$ 85,460 & -2,611,479 $\pm$ 255,193 & -706,729 $\pm$ 4,697 & -17,502 $\pm$ 52,044\\
    0.18 & -94,125 $\pm$ 6,930 & -32,171 $\pm$ 66\\
    0.22 & -165,787 $\pm$ 9,758 & -155,321 $\pm$ 61,732\\
    % 0.06 & -76 $\pm$ 1 & -624,250 $\pm$ 44,121\\
    % 0.1 & -84 $\pm$ 2 & -425 $\pm$ 9 & -2,611,479 $\pm$ 255,193 & -706,729 $\pm$ 4,697 & -17,502 $\pm$ 52,044\\
    % 0.14 & -15,476 $\pm$ 4,640 & -121,932 $\pm$ 85,460\\
    \bottomrule
  \end{tabular}
\end{subtable}

\vspace*{5pt}
\begin{subtable}{\linewidth}
  \caption{\texttt{xornet}}
  \centering
  \begin{tabular}{c|ccccc}
    \toprule
    $\eta/\eta_{4000}$ & \textsc{DSGD} (ours) & \textsc{Fixed} & \textsc{Score} & \textsc{Reparam}\\
    \midrule
    0.06 & -3,530 $\pm$ 3,889 & -5,522 $\pm$ 4,136\\
    0.1 & -33 $\pm$ 7 & -2,029 $\pm$ 3,305\\
    0.14 & -27 $\pm$ 4 & -2,028 $\pm$ 3,986 & -553 $\pm$ 1,507 & -9,984 $\pm$ 38\\
    0.18 & -25 $\pm$ 3 & -26 $\pm$ 6\\
    0.22 & -30 $\pm$ 8 & -26 $\pm$ 2 \\
    \bottomrule
  \end{tabular}
\end{subtable}

% etas = [0.14, 0.06, 0.1, 0.18, 0.22]
\vspace*{5pt}
\begin{subtable}{\linewidth}
  \caption{\texttt{random-walk}}
  \centering
\begin{tabular}{c|ccccc}
  \toprule
  $\eta/\eta_{4000}$ & \textsc{DSGD} (ours) & \textsc{Fixed} & \textsc{Score} & \textsc{Reparam} & \textsc{LYY18}\\
  \midrule
0.06 & -37 $\pm$ 148 & -85 $\pm$ 197\\
0.1 & -37 $\pm$ 148 & -86 $\pm$ 197\\
0.14 & -38 $\pm$ 148 & -37 $\pm$ 148 & -85 $\pm$ 197 & -371,612 $\pm$ 8,858 & -135 $\pm$ 226\\
0.18 & -38 $\pm$ 148 & -37 $\pm$ 148\\
0.22 & -38 $\pm$ 148 & -37 $\pm$ 148\\
  \bottomrule
\end{tabular}
\end{subtable}

\vspace*{5pt}
\begin{subtable}{\linewidth}
  \caption{\texttt{cheating}}
  \centering
\begin{tabular}{c|ccccc}
  \toprule
  $\eta/\eta_{4000}$ & \textsc{DSGD} (ours) & \textsc{Fixed} & \textsc{Score} & \textsc{Reparam} & \textsc{LYY18}\\
  \midrule
  0.06 & -65 $\pm$ 1 & -65 $\pm$ 1\\
  0.1 & -65 $\pm$ 1 & -65 $\pm$ 1\\
  0.14 & -65 $\pm$ 1 & -65 $\pm$ 1 & -66 $\pm$ 1 & -80 $\pm$ 1 & -65 $\pm$ 1\\
  0.18 & -65 $\pm$ 1 & -65 $\pm$ 1\\
  0.22 & -65 $\pm$ 1 & -65 $\pm$ 1\\
  \bottomrule
\end{tabular}
\end{subtable}

\vspace*{5pt}
\begin{subtable}{\linewidth}
  \caption{\texttt{textmsg}}
  \centering
\begin{tabular}{c|ccccc}
  \toprule
  $\eta/\eta_{4000}$ & \textsc{DSGD} (ours) & \textsc{Fixed} & \textsc{Score} & \textsc{Reparam} & \textsc{LYY18}\\
  \midrule
0.06 & -295  $\pm$  1 & -295  $\pm$  1\\
0.1 & -295  $\pm$  1 & -296  $\pm$  1\\
0.14 & -295  $\pm$  1 & -296  $\pm$  1 & -300  $\pm$  1 & -296  $\pm$  1 & -296  $\pm$  1\\
0.18 & -296  $\pm$  1 & -296  $\pm$  1\\
0.22 & -296  $\pm$  1 & -296  $\pm$  1\\
  \bottomrule
\end{tabular}
\end{subtable}

\vspace*{5pt}
\begin{subtable}{\linewidth}
  \caption{\texttt{influenza}}
  \centering
\begin{tabular}{c|ccccc}
  \toprule
  $\eta/\eta_{4000}$ & \textsc{DSGD} (ours) & \textsc{Fixed} & \textsc{Score} & \textsc{Reparam} & \textsc{LYY18}\\
  \midrule
  0.06 & -3,586  $\pm$  112 & -3,589  $\pm$  114\\
  0.1 & -3,584  $\pm$  111 & -3,590  $\pm$  111\\
  0.14 & -3,582  $\pm$  111 & -3,590  $\pm$  111 & -95,380  $\pm$  3,567 & -4,045  $\pm$  108 & -3,860  $\pm$  106\\
  0.18 & -3,579  $\pm$  111 & -3,589  $\pm$  111\\
  0.22 & -3,577  $\pm$  111 & -3,587  $\pm$  112\\
  \bottomrule
\end{tabular}
\end{subtable}
\end{table*}

\iffalse
\section{Connection to Probabilistic Programming}
\label{sec:prob}
\dw{REMOVE?}

Probabilistic programming is a programming paradigm which has the vision to make statistical methods, in particular Bayesian inference, accessible to a wide audience. This is achieved by a separation of concerns: the domain experts wishing to gain statistical insights focus on modelling, whilst the inference is performed automatically. \dw{reference some paper}

In essence, probabilistic programming languages extend more traditional programming languages with statistical constructs such as $\mathbf{observe}$ (as well as $\sample$) to define the prior $p(\repx)$ and likelihood $p(\mathbf x | \repx)$.

\emph{Variational inference} is a very successful approach to Bayesian inference, which frames posterior inference as an optimisation problem. In its simplest case we posit a variational family $\{q_\para\mid\para\in\parasp\}$ of simple distributions and we seek to find the member which is ``closest'' (as formalised by the KL-divergence) to the posterior $p(\repx\mid\mathbf x)$. Equivilantly, we seek to maximise the ELBO-function:
\begin{align*}
  \ELBO(\para)\defeq\E_{\repx\sim q_\para}[\log p(\repx,\mathbf x)-\log q_\para(\repx)]
\end{align*}

\subsection{An Idealised Probabilistic Programming Language}
We fix \emph{latent variables} $z_1,\ldots,z_n$ and \emph{program variables} typically denoted by $x$. \dw{or $v_1,\ldots$ better?} % and $z_i$ are the latent variables w.r.t. which the expectation is taken.
\emph{(Extended) expressions} are defined inductively as
\begin{align*}
  E&::= z_j\mid x\mid f(E,\ldots,E)\mid\ifc EEE
  % P&::= \skipp\mid x:= E;P\\
  % &\mid x:=\ifc EEE;P\\
  % &\mid z_j\sim\dist(E,\ldots,E);P\\
  % &\mid\observe E {\dist(E,\ldots,E)};P
\end{align*}
where $f\in\pop$.
Note that extended expressions without program variables (but with latent variables) are also in $\sfct$.
\dw{benefits: variable assignment}

A \emph{command} is one of the following
\begin{itemize}
  \item deterministic assignment $x:= E$
  \item sampling command: $z_j\sim\dist(E,\ldots,E)$
  \item observation: $\observe E {\dist(E,\ldots,E)}$
\end{itemize}
Finally, a \emph{program} is a sequence of commands.
We assume that for each latent variable $z_j$ there is exactly one corresponding sample command and program variables are assigned a value before they are used.
\begin{example}
  \label{ex:prog}
  The example of \cite[Prop.~2]{LYY18} exhibiting the bias of the reparameterisation gradient can be expressed as:
  \begin{align*}
    &z\sim\nd(0,1);\\
    &\mu:=\ifc z {-2}{5};\\
    &\observe 0{\nd(\mu,1)};
  \end{align*}
\end{example}

\subsection{Expressing Densities in Simple Function Calculus}
\newcommand\logp{\mathrm{logp}}
A program defines a log-density $\log p(z_1,\ldots,z_j)$, which is the sum of the log-pdfs accummulated by the sample and observe commands.
\dw{ELBO}
% We can transform a probabilistic program to a deterministic one compute the log-pdf by
% introducing a fresh variable $\logp$ which we initialise as $\logp:=0$ and eliminating each sampling and observe commands as follows:
% \begin{align*}
% z_j\sim\dist(E_1,\ldots,E_k)&\compi \logp:=\logp+\logpdf_{\dist}(z_j\mid E_1,\ldots, E_k)\\
% \observe E {\dist(E_1,\ldots,E_k)}&\compi \logp:=\logp+\logpdf_{\dist}(E\mid E_1,\ldots, E_k)
% \end{align*}

\dw{don't want to do this in practice, just for theoretical purposes}

The log-density of a program $P$ can be directly expressed as a term $F_P$ in our simple function calculus of \cref{sec:setup}.
\begin{example}
  The log-density of \cref{ex:prog} can be expressed as
  \begin{align*}
    \logpdf_\nd(z\mid 0,1)+\logpdf_\nd(4\mid\ifc z 2 5,1)
  \end{align*}
\end{example}
In general we first need to eliminate all deterministic assignments $x:=E$ by substituting $E$ for $x$ in the subsequent program. \dw{re-assignment} The remaining program does not contain program variables any more. It is a sequence of sample and observe commands, which we replace by a sum of log-densities:  $z_j\sim\dist(F_1,\ldots,F_k)$ is replaced with $\logpdf_{\dist}(z_j\mid F_1,\ldots, F_k)$ and $\observe F {\dist(F_1,\ldots,F_k)}$ is replaced with $\logpdf_{\dist}(F\mid F_1,\ldots, F_k)$.

\fi

\end{document}